\def\showauthornotes{0}
\def\showdraftbox{0}
\newcommand{\defeq}{\stackrel{\textup{def}}{=}}
\newtheorem{theorem}{Theorem}[section]
\newtheorem{lemma}[theorem]{Lemma}
\newtheorem{definition}[theorem]{Definition} %Adopted Dan's defs.tex
\newtheorem{corollary}[theorem]{Corollary}
\newcommand{\nfrac}[2]{\nicefrac{#1}{#2}}
\def\abs#1{\left| #1 \right|}
\newcommand{\norm}[1]{\ensuremath{\left\lVert #1 \right\rVert}}
\newcommand{\pair}[1]{\left\langle{#1}\right\rangle} %for inner product
\newcommand\rea{\mathbb R}
\newcommand{\marginlabel}[1]%
{\mbox{}\marginpar{\it{\raggedleft\hspace{0pt}#1}}}
\newcommand\poly{\mbox{poly}}  %usage \poly(n)
\DeclareMathOperator*{\argmax}{arg\,max}
\newcommand\calI{\mathcal{I}}
\let\csname ifcommentflag\expandafter\endcsname
\newcommand{\Authornote}[2]{{\sf\footnotesize\color{red}{[#1: #2]}}}
\newcommand{\Authoredit}[2]{{\sf\color{red}{[#1]}\color{blue}{#2}}}
\newcommand{\Authorcomment}[2]{{\sf \color{gray}{[#1: #2]}}}
\newcommand{\Authorfnote}[2]{\footnote{\color{red}{#1: #2}}}
\newcommand{\Authorfixme}[1]{\Authornote{#1}{\textbf{??}}}
\newcommand{\Authormarginmark}[1]{\marginpar{\textcolor{red}{\fbox{%\Large
#1:!}}}}
\newcommand{\Authornote}[2]{}
\newcommand{\Authoredit}[2]{}
\newcommand{\Authorcomment}[2]{}
\newcommand{\Authorfnote}[2]{}
\newcommand{\Authorfixme}[1]{}
\newcommand{\Authormarginmark}[1]{}
\newlength{\pgmtab}  %  \pgmtab is the width of each tab in the
\newcommand {\ELSE}{{\bf else\ }}
\newcommand {\IF}{{\bf if\ }}
\newcommand {\FOR}{{\bf for\ }}
\newcommand {\TO}{{\bf to\ }}
\newcommand {\WHILE}{{\bf while\ }}
\newcommand {\THEN}{\mbox{\bf then\ }}
\newcommand {\RETURN}{\mbox{\bf return\ }}
\def\qedsketch{\ifmmode\Box\else{\unskip\nobreak\hfil
\penalty50\hskip1em\null\nobreak\hfil$\Box$
\parfillskip=0pt\finalhyphendemerits=0\endgraf}\fi}
\newenvironment{proofof}[1]{\begin{trivlist} \item {\bf Proof
#1:~~}}
  { \hfill $\Box$ \end{trivlist}}
\newlength{\tpush}
\newcommand{\handout}[5]{
   \noindent
   \begin{center}
   \framebox{ \vbox{ \hbox to \textwidth { {\bf \coursenum\ :\  \coursename} \hfill #5 }
       \vspace{3mm}
       \hbox to \textwidth { {\Large \hfill #2  \hfill} }
       \vspace{1mm}
       \hbox to \textwidth { {\it #3 \hfill #4} }
     }
   }
   \end{center}
   \vspace*{4mm}
   \newcommand{\lecturenum}{#1}
   \addcontentsline{toc}{chapter}{Lecture #1 -- #2}
}
\newlength{\algtopspace}
\newlength{\algpostcaptionspace}
\newcommand{\Anote}{\Authornote{A}}
\newcommand{\Rnote}{\Authornote{R}}
\newcommand{\Snote}{\Authornote{S}}
\newtheorem{claim}[theorem]{Claim}
\newdimen\pIR
\newcommand\StevesR{{\rm I\kern\pIR R}}
\def\Reals#1{\StevesR^{#1}}
\def\defeq{=}
\def\setof#1{\left\{#1  \right\}}
\def\sizeof#1{\left|#1  \right|}
\def\union{\cup}
\def\intersect{\cap}
\def\abs#1{\left|#1  \right|}
\def\norm#1{\left\| #1 \right\|}
\newcommand{\AND}{\quad \text{and} \quad}
\newcommand{\eps}{\epsilon}
\newenvironment{tight_enumerate}{
\begin{enumerate}
  \setlength{\itemsep}{0pt}
  \setlength{\topsep}{30pt}
  \setlength{\parskip}{0pt}
}{\end{enumerate}}
\newcommand{\isoipm}{\textsc{IsotonicIPM}\xspace}
\newcommand{\apxipm}{\textsc{ApproxIPM}\xspace}
\newcommand{\hsol}{\textsc{HessianSolve}\xspace}
\newcommand\opt{\ensuremath{\mathsf{OPT}}\xspace}
\newcommand{\feasiblestart}{\textsc{GoodStart}\xspace}
\newcommand{\piso}{\ensuremath{p\text{-}\textsc{ISO}}\xspace}
\newcommand\optiso{\ensuremath{\text{OPT}_{\piso}}\xspace}
\newcommand\optlin{\ensuremath{\text{OPT}_{\text{lin}}}\xspace}
\newcommand\optbnd{\ensuremath{\text{OPT}_{\text{bnd}}}\xspace}
\newcommand{\rankone}{\textsc{RankOneMore}\xspace}
\newcommand{\sddsol}{\textsc{SDDSolve}\xspace}
\newcommand{\blksol}{\textsc{BlockSolve}\xspace}
\newcommand{\sym}{\operatorname{sym}}
\newcommand{\valsup}{K}
\newcommand\bnddom{\ensuremath{\mathcal{D}_{\valsup}}}
\newcommand{\vecone}{\mathbf{1}}
\newcommand{\assign}{\leftarrow}
\newcommand\odmfn{F}
\newcommand\vbfn{f_{\valsup}}
\newcommand\bndfn{F_{\valsup}}
\newcommand{\vecw}{\mathit{w^p}}
\newcommand{\wmaxp}{\mathit{w_{\text{max}}^p}}
\newcommand\len{\ensuremath{\mathsf{len}}}
\newcommand\grad{\ensuremath{\mathsf{grad}}}
\newcommand\lexless{\preceq_{\mathsf{lex}}}
\newcommand{\complexmin}{{\sc CompLexMin}}
\newcommand{\compinfmin}{{\sc CompInfMin}}
\begin{document}

\title{Fast, Provable Algorithms for Isotonic Regression in all
  $\ell_{p}$-norms
% : \\ Supplementary Material
 \thanks{This research was partially supported
by AFOSR Award FA9550-12-1-0175, NSF grant CCF-1111257, and a Simons
Investigator Award to Daniel Spielman.}
 \thanks{Code from this work is available at
   \href{https://github.com/sachdevasushant/Isotonic}{\texttt{{https://github.com/sachdevasushant/Isotonic}}}.}
}

\author{
Rasmus Kyng \\
Dept. of Computer Science\\
Yale University\\
%New Haven, CT 06520 \\
\texttt{rasmus.kyng@yale.edu} 
\and
Anup Rao\thanks{Part of this work was done when this author was a
  graduate student at Yale University.} \\
School of Computer Science\\
Georgia Tech\\
% New Haven, CT 06520 \\
\texttt{arao89@gatech.edu} 
\and
Sushant Sachdeva \\
Dept. of Computer Science\\
Yale University\\
% New Haven, CT 06520 \\
\texttt{sachdeva@cs.yale.edu}
}

% \author{
% Rasmus Kyng \\
% Department of Computer Science\\
% Yale University\\
% New Haven, CT 06520 \\
% \texttt{rasmus.kyng@yale.edu} 
% \and
% Anup Rao \\
% Department of Mathematics\\
% Yale University\\
% New Haven, CT 06520 \\
% \texttt{anup.rao@yale.edu} 
% \and
% Sushant Sachdeva \\
% Department of Computer Science\\
% Yale University\\
% New Haven, CT 06520 \\
% \texttt{sachdeva@cs.yale.edu} \\
% }

\maketitle

\begin{abstract}
  Given a directed acyclic graph $G,$ and a set of values $y$ on the
  vertices, the Isotonic Regression of $y$ is a vector $x$ that
  respects the partial order described by $G,$ and minimizes
  $\norm{x-y},$ for a specified norm.  This paper gives improved
  algorithms for computing the Isotonic Regression for all weighted
  $\ell_{p}$-norms with rigorous performance guarantees. Our
  algorithms are quite practical, and variants of them can be
  implemented to run fast in practice.
\end{abstract}
% - say quite important -- fast implementable. 
% - what about inf norm? -- algorithms can also be implemented
% quite practically, and run fast enough on moderate sized instances.

\section{Introduction}
A directed acyclic graph (DAG) $G(V,E)$ defines a partial order on $V$
where $u$ precedes $v$ if there is a directed path from $u$ to $v.$
We say that a vector $x \in \rea^{V}$ is isotonic (with respect to
$G$) if it is a weakly order-preserving mapping of $V$ into $\rea.$ Let
$\calI_{G}$ denote the set of all $x$ that are isotonic with respect
to $G.$ It is immediate that $\calI_{G}$ can be equivalently defined
as follows:
\begin{equation}
\label{eq:intro:I-G}
\calI_{G} \defeq \{ x \in \rea^{V}\ |\ x_{u} \le x_{v} \textrm{ for
  all } (u,v) \in E\}.
\end{equation}
Given a DAG $G,$ and a norm $\norm{\cdot}$ on $\rea^{V},$ the
\emph{Isotonic Regression} of observations $y \in \rea^{V},$ is given
by $x \in \calI_{G}$ that minimizes $\norm{x-y}.$

Such monotonic relationships are fairly common in data. They allow one
to impose only weak assumptions on the data, e.g. the typical height
of a young girl child is an increasing function of her age, and the
heights of her parents, rather than a more constrained parametric model.

Isotonic Regression is an important shape-constrained nonparametric
regression method that has been studied since the 1950's~\cite{Ayer55,
  Barlow72, Gebhardt70}. It has applications in diverse fields such as
Operations Research~\cite{OR1, OR2} and Signal
Processing~\cite{Sig1}. In Statistics, it has several applications
(e.g.~\cite{Stats1, Stats2}), and the statistical properties of
Isotonic Regression under the $\ell_{2}$-norm have been well studied,
particularly over linear orderings (see~\cite{ChatterjeeGS15} and
references therein).  More recently, Isotonic regression has found
several applications in Learning~\cite{KalaiS09, Learning2, Learning4,
  ZadroznyE02, NarasimhanA13}. It was used by Kalai and
Sastry~\cite{KalaiS09} to provably learn Generalized Linear Models and
Single Index Models; and by Zadrozny and Elkan~\cite{ZadroznyE02}, and
Narasimhan and Agarwal~\cite{NarasimhanA13} towards constructing
binary Class Probability Estimation models.

% In a recent paper, Chatterjee \emph{at al.}~\cite{ChatterjeeGS15}
% prove risk bounds on estimating $\vartheta \in \calI_{G}$ from noisy
% observations using Isotonic Regression under the $\ell_{2}$-norm.
% In a recent paper, Chatterjee \emph{at al.}~\cite{ChatterjeeGS15}
% prove risk bounds for Isotonic Regression under the $\ell_{2}$-norm.

The most common norms of interest are weighted $\ell_{p}$-norms,
defined as
\begin{equation*}
\norm{z}_{w,p} = 
\begin{cases}
\left(\sum_{v \in V}
w_{v}^{p} \cdot |z_{v}|^{p} \right)^{\nfrac{1}{p}}, & p \in [1,\infty),\\
\max_{v \in V} w_{v} \cdot |z_{v}|, & p = \infty,
\end{cases}
\end{equation*}
where $w_v > 0$ is the weight of a vertex $v \in V.$ In this paper, we
focus on algorithms for Isotonic Regression under weighted
$\ell_{p}$-norms. Such algorithms have been applied to large data-sets
from Microarrays~\cite{AngelovHKW06}, and from the
web~\cite{Learning3, Web2}.

Given a DAG $G,$ and observations $y \in \rea^{V},$ our regression
problem can be expressed as the following convex program:
\begin{equation}
\label{eq:intro:convex}
  \min \norm{x-y}_{w,p} \textrm{\quad such that }
  x_{u} \le x_{v} \textrm{ for all } (u,v) \in E.
\end{equation}
% \begin{equation}
% \label{eq:intro:convex}
% \begin{array}{rcl}
% \min & & \norm{x-y}_{w,p} \\
% \textrm{s.t.}& & x_{u} \le x_{v} \quad \textrm{for all } (u,v) \in E.
% \end{array}
% \end{equation}

\subsection{Our Results}
Let $|V| = n,$ and
$|E| = m.$ We'll assume that $G$ is connected, and hence $m \ge n-1.$

\textbf{$\ell_{p}$-norms, $p < \infty$.} We give a unified,
optimization-based framework for algorithms that provably solve the
Isotonic Regression problem for $p \in [1,\infty).$ The following is
an informal statement of our main theorem (Theorem~\ref{thm:isoipm})
in this regard (assuming $w_v$ are bounded by $\poly(n)$).
\begin{theorem}[Informal]
\label{thm:intro:lp}
There is an algorithm that, given a DAG $G,$ observations $y,$ and
$\delta > 0,$ runs in time $O(m^{1.5} \log^{2} n \log \nfrac{n}{\delta}),$
and computes an isotonic $x_{\textrm{\sc Alg}} \in \calI_{G}$ such
that
\[\norm{x_{\textrm{\sc Alg}}-y}_{w,p}^{p} \le \min_{x \in \calI_{G}}\norm{x-y}_{w,p}^{p}
+ \delta.\]
\end{theorem}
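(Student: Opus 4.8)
The plan is to prove this via a path-following interior-point method (\isoipm) whose Newton steps are reduced to solving symmetric diagonally dominant (SDD) linear systems, which \hsol\ handles in nearly-linear time. First I would rewrite the convex program~\eqref{eq:intro:convex} in epigraph form over a bounded domain: restrict to $x\in[-\valsup,\valsup]^{V}$ for some $\valsup=\poly(n)$ (this does not change the optimum, since $y$ and the weights are $\poly(n)$-bounded), introduce $t\in\rea^{V}$, and minimize $\sum_{v}t_{v}$ subject to $x\in\calI_{G}$ and $w_{v}^{p}\abs{x_{v}-y_{v}}^{p}\le t_{v}$ for every $v$. The feasible region is the intersection of the order polytope of $G$, the box, and $n$ planar epigraph constraints for $s\mapsto\abs{s}^{p}$; each epigraph constraint admits an $O(1)$-self-concordant barrier (e.g.\ via the power-cone barrier), and the $m$ isotonicity inequalities together with the $2n$ box inequalities contribute the logarithmic barrier of parameter $O(m)$. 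Hence the whole feasible set carries a self-concordant barrier of parameter $\nu=O(m)$, and the standard short-step analysis gives an additive $\delta$ error in $\sum_v t_v$ — hence in $\norm{x-y}_{w,p}^{p}$, since $w_v^p\abs{x_v-y_v}^p\le t_v$ pointwise — after $O(\sqrt{m}\,\log\nfrac{n}{\delta})$ Newton steps, once we supply a strictly feasible, well-centered starting point. The latter is \feasiblestart: because $G$ is connected one writes down values strictly increasing along a topological order (comfortably inside the box), pairs them with slightly slack $t$-coordinates, and checks this point is close enough to the central path to begin the walk.

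The heart of the argument is implementing each Newton step in $\tilde O(m)$ time. A Newton step solves a linear system in the Hessian of the barrier-plus-objective. The epigraph barriers couple only $(x_{v},t_{v})$ and the box barriers only $x_v$, so they contribute a block-diagonal term; eliminating the $t$ variables — which appear only in those local barriers and linearly in the objective — leaves a Schur complement on the $x$-variables of the form $\LL+\DD$, where $\LL=\AA^{\top}\WW\AA$ is a weighted Laplacian of $G$ ($\AA$ the edge--vertex incidence matrix, $\WW$ the diagonal of current barrier weights on the edges) and $\DD\succeq0$ is diagonal (from the box barriers and the eliminated $2\times2$ blocks). Thus $\LL+\DD$ is SDD, and \hsol\ solves it to any prescribed relative accuracy by invoking a nearly-linear-time SDD/Laplacian solver; this is the source of the $\log^{2}n$ factor in the final bound.

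Because the SDD solver is inexact, the remaining ingredient is to run the method with approximate Newton steps, i.e.\ to use \apxipm\ in place of an idealized \isoipm. Here I would invoke the robustness of path-following methods to Hessian-inverse approximation: if each Newton direction is computed to within a $\poly(1/m)$ relative error in the local Hessian norm — attainable with $O(\log n)$-bit accuracy from a single call to the SDD solver, or a constant number of calls — the iteration count is unchanged up to constants and the method still returns an additive $\delta$-optimal point. Every iterate is strictly interior, so it satisfies $x_{u}<x_{v}$ on all edges and lies in $\calI_{G}$ with no rounding needed. Multiplying $O(\sqrt{m}\,\log\nfrac{n}{\delta})$ iterations by the $\tilde O(m)$ per-iteration cost, and accounting for the polylog factors from the solver and the accuracy bookkeeping, yields the claimed $O(m^{1.5}\log^{2}n\,\log\nfrac{n}{\delta})$ running time.

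I expect two steps to need the most care. The first is exhibiting a self-concordant barrier for the $\ell_{p}$ epigraph with an $O(1)$ parameter whose gradient and Hessian are cheaply and stably evaluable — the regimes $p\to1$ and $p\to\infty$, and the non-smoothness of $\abs{s}^{p}$ at $s=0$, are the delicate cases. The second, and the real obstacle, is making the inexact-IPM analysis tight enough that \hsol\ is only ever asked for $\poly(1/m)$ relative accuracy: a careless accounting would tie the required solver accuracy to $\delta$ (or to the condition number of the iterate's Hessian) and inflate the running time.
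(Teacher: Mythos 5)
Your overall route is the paper's route: epigraph reformulation with auxiliary variables $t$, a self-concordant barrier of complexity $O(m)$ assembled from an $O(1)$-parameter barrier for each constraint $\abs{x_v-y_v}^p\le t_v$ (the paper uses the Hertog et al.\ barrier $-\log(t^{2/p}-x^2)-2\log t$) plus log barriers for the order constraints, a short-step path-following method with $O(\sqrt{m}\log(n/\delta))$ iterations robust to approximate Newton steps, and elimination of the $t$-block so that the Schur complement on the $x$-variables is a weighted Laplacian of $G$ plus a positive diagonal, hence SDD and solvable in $\widetilde{O}(m\log n)$ time. Your identification of the Schur complement structure matches the paper's block $LDL^{\top}$ factorization exactly. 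Two harmless differences: you bound the domain with a box on $x$ rather than the paper's constraint $\pair{\vecw,t}\le \valsup$ (the latter adds a rank-one term to the Hessian, which the paper removes by Sherman--Morrison; your choice avoids this but leaves $t$ unbounded above, which you would need to patch), and you ask the solver for $\poly(1/m)$ relative accuracy where the paper proves that a fixed constant ($\tau=1/10$) suffices --- your weaker requirement still works and costs at most an extra $\log m$ factor inside the SDD solver.

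The one genuine gap is the initialization. You assert that the \feasiblestart point (values along a topological order, with slack $t$) is ``close enough to the central path to begin the walk,'' but this is false as stated: that point is in general nowhere near the minimizer of $\eta\pair{c,(x,t)}+f(x,t)$ for any useful starting value of $\eta$, and a short-step method cannot be launched from it directly. The paper closes this hole with real machinery: it proves a \emph{symmetry} lower bound $\sym((x_0,t_0),\bnddom)\ge 1/(18n^2p\wmaxp)$ for the \feasiblestart point (Lemma~\ref{lem:startsym}, via case analysis on which boundary constraint is nearest), and \apxipm then runs a first phase that follows the central path of the auxiliary objective $-g(x_0)$ with a \emph{decreasing} parameter to reach the vicinity of the analytic center, using the symmetry bound (through Lemma~\ref{lem:symgrad}) to control how many iterations this takes, before switching to the true objective in phase two. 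Without this (or an equivalent two-phase/auxiliary-path argument), the claimed iteration bound does not follow; with it, the rest of your argument goes through.
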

 The previous best time bounds were $O(nm\log \frac{n^2}{m})$ for
$p \in (1,\infty)$~\cite{HochbaumQ03} and $O(nm + n^{2} \log n)$ for
$p=1$~\cite{StoutIsotonicPartitioning13}.
%  This beats previous best
% bounds for all $p \in (1,\infty);$ and for $G$ with
% $m = o(n^{2} / \log^6 n)$ for $p=1.$

\textbf{$\ell_{\infty}$-norms.}
For $\ell_{\infty}$-norms, unlike $\ell_{p}$-norms for $p \in (1,\infty),$
the Isotonic Regression problem need not have a unique solution. There
are several specific solutions that have been studied in the
literature (see~\cite{StoutInfGeneral} for a detailed discussion). In this paper, we
show that some of them ({\sc Max, Min,} and {\sc Avg} to be precise) can be computed in time linear in
the size of $G$.
\begin{theorem}
\label{thm:intro:inf}
There is an algorithm that, given a DAG $G(V,E),$ a set of observations
$y \in \rea^{V},$ and weights $w,$ runs in expected time $O(m),$ and
computes an isotonic $x_{\textrm{\sc Inf}} \in \calI_{G}$ such that
\[\norm{x_{\textrm{\sc Inf}}-y}_{w,\infty} = \min_{x \in \calI_{G}}\norm{x-y}_{w,\infty}.\]
\end{theorem}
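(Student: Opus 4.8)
The plan is to split Theorem~\ref{thm:intro:inf} into two pieces: an exact expression for the optimal error $\optbnd=\min_{x\in\calI_G}\norm{x-y}_{w,\infty}$, and explicit formulas for the {\sc Min}, {\sc Max}, and {\sc Avg} solutions once $\optbnd$ is known. Writing $u\preceq v$ when there is a directed path from $u$ to $v$ in $G$, and $g(u,v)\defeq\frac{w_uw_v(y_u-y_v)}{w_u+w_v}$, the claim is
\[
\optbnd=\max\Bigl(0,\ \max_{u\preceq v}g(u,v)\Bigr).
\]
The lower bound is the ``interval intersection'' argument: if $x\in\calI_G$ had error strictly below $\lambda$, then $x_u>y_u-\lambda/w_u$ and $x_v<y_v+\lambda/w_v$ for all vertices, and combining with $x_u\le x_v$ for $u\preceq v$ forces $\lambda>g(u,v)$; hence no isotonic vector achieves error below $\max(0,\max_{u\preceq v}g(u,v))$. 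For the matching upper bound, set $\lambda$ equal to this value and define $x^{\min}_v\defeq\max_{u\preceq v}\bigl(y_u-\lambda/w_u\bigr)$: it is isotonic since the set of ancestors only grows along an edge, and taking $u=v$ gives $x^{\min}_v\ge y_v-\lambda/w_v$, while the defining ancestor $u$ of $x^{\min}_v$ satisfies $g(u,v)\le\lambda$, which rearranges to $x^{\min}_v\le y_v+\lambda/w_v$; so $w_v\abs{x^{\min}_v-y_v}\le\lambda$ everywhere and $\optbnd\le\lambda$.

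Given $\optbnd$, I would output $x^{\min}_v=\max_{u\preceq v}(y_u-\optbnd/w_u)$, $x^{\max}_v=\min_{v\preceq z}(y_z+\optbnd/w_z)$, and $x^{\text{Avg}}=\tfrac12(x^{\min}+x^{\max})$. By the previous paragraph $x^{\min}$ (and symmetrically $x^{\max}$) is an optimal isotonic vector; it is in fact the pointwise smallest optimal solution, since any optimal isotonic $x$ obeys $x_v\ge x_u\ge y_u-\optbnd/w_u$ for every $u\preceq v$. For $x^{\text{Avg}}$: whenever $u\preceq v\preceq z$ we have $g(u,z)\le\optbnd$, which rearranges to $y_u-\optbnd/w_u\le y_z+\optbnd/w_z$, so $x^{\min}_v\le x^{\max}_v$; hence $x^{\text{Avg}}$ is isotonic, and being a pointwise average of two vectors in $[\,y_v-\optbnd/w_v,\ y_v+\optbnd/w_v\,]$ it is optimal. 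These are exactly the {\sc Min}, {\sc Max}, {\sc Avg} solutions of~\cite{StoutInfGeneral}. After a topological sort, each of $x^{\min},x^{\max}$ is computed by one sweep via $x^{\min}_v\assign\max\bigl(y_v-\optbnd/w_v,\ \max_{(u,v)\in E}x^{\min}_u\bigr)$ (and the reverse sweep for $x^{\max}$), for a total of $O(m)$.

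The remaining, and I expect hardest, step is to compute $\optbnd$ in expected $O(m)$ time, since the formula ranges over the transitive closure, which may contain $\Theta(n^2)$ pairs. The key gadget is a deterministic $O(m)$ decision oracle: given $\lambda$, run $a_v\assign\max\bigl(y_v-\lambda/w_v,\ \max_{(u,v)\in E}a_u\bigr)$ in topological order and test whether $w_v(a_v-y_v)\le\lambda$ for all $v$; this decides ``$\lambda\ge\optbnd$'', and when it fails the argmax pointers expose a comparable pair $(u,v)$ witnessing $g(u,v)>\lambda$. Since $\optbnd$ is an extremal element of the implicit set $\{g(u,v):u\preceq v\}$, one natural route is a randomized prune-and-search: repeatedly choose a pivot value, call the oracle, and discard every surviving candidate on the wrong side, keeping the survivors represented implicitly---for a fixed $v$ only the ancestors $u$ on the upper envelope of the lines $\lambda\mapsto y_u-\lambda/w_u$ can ever attain $\max_{u\preceq v}g(u,v)$, so a per-vertex convex-hull structure suffices---so that each round still costs $O(m)$ and random pivots eliminate a constant fraction of candidates in expectation. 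The main obstacle is making this search terminate at the \emph{exact} value $\optbnd$ in $O(m)$ expected total work without ever instantiating the closure; the reason it can be done is that the oracle only propagates per-vertex maxima along edges of $G$ and, together with the implicit per-vertex representations and a random-sampling bound on the number of rounds, never needs more than $O(m)$ per round.
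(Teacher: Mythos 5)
Your first two steps are correct and are essentially an unrolled version of the paper's reduction: your $g(u,v)=\frac{w_uw_v(y_u-y_v)}{w_u+w_v}$ is exactly the gradient of the terminal path $u_L\to u\to\cdots\to v\to v_R$ in the auxiliary partially-labeled DAG $(G',y')$ of Lemma~\ref{lem:infEq}, and your $x^{\min}$ and $x^{\max}$ are exactly $\vHigh[\alpha^{\star}]$ and $\vLow[\alpha^{\star}]$ computed by \compvlow/\compvhigh via a topological sweep. That part of the argument is sound (and your $O(m)$ decision oracle is the same as the paper's pressure test via Lemma~\ref{lem:vlowvhigh-pressure}).

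The gap is in the step you yourself flag: computing $\optbnd$ in expected $O(m)$. Your prune-and-search, as described, shrinks the \emph{candidate set} $\{g(u,v):u\preceq v\}$ by a constant factor per round while each round still costs $O(m)$; that yields $O(m\log n)$ expected time (the candidate set has size up to $\Theta(n^2)$, so $\Theta(\log n)$ rounds), which is exactly the previous best bound of~\cite{StoutInfGeneral}, not $O(m)$. To get $O(m)$ the \emph{per-round cost} must also decay geometrically, and your sketch gives no mechanism for that; the proposed per-vertex upper-envelope structures do not obviously have total size $O(m)$ (envelopes of ancestor sets can grow along paths), and you also do not specify how to obtain a pivot that is guaranteed to eliminate a constant fraction of the implicit candidate set --- an arbitrary real pivot has no such guarantee, and sampling a uniformly random pair from the transitive closure is itself nontrivial. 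The paper resolves both issues at once in \steepestpath: it samples a uniformly random \emph{edge} and \emph{vertex} of the current graph, computes the steepest terminal path through each in expected $O(m)$ time (itself a randomized prune-and-search on a star, Theorem~\ref{thm:star}), uses the best resulting gradient $\alpha$ as the pivot, and then recurses on the \emph{induced subgraph} of vertices with pressure exceeding $\alpha$. Because the pivot dominates the steepest path through a uniformly random edge, the surviving subgraph has expected size at most half of the current one, so the total work is a geometric series summing to $O(m)$. Without an argument of this kind --- one that shrinks the instance itself, not just the candidate set --- your approach does not establish the claimed running time.
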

Our algorithm achieves the best possible running time. This was not known even for linear or tree orders. The
previous best running time was $O(m \log n)$~\cite{StoutInfGeneral}.

\textbf{Strict Isotonic Regression.}
We also give improved algorithms for Strict Isotonic Regression. Given
observations $y,$ and weights $w,$ its Strict Isotonic Regression
$x_{\textrm{\sc Strict}}$ is defined to be the limit of $\hat{x}_{p}$ as $p$ goes to $\infty$,
where $\hat{x}_{p}$ is the Isotonic Regression for $y$ under the norm
$\norm{\cdot}_{w,p}.$ It is immediate that $x_{\sc \textrm{Strict}}$
is an $\ell_{\infty}$ Isotonic Regression for $y.$ In addition, it is
unique and satisfies several desirable properties (see~\cite{StoutStrict12}).
\begin{theorem}
\label{thm:intro:strict}
There is an algorithm that, given a DAG $G(V,E),$ a set of observation
$y \in \rea^{V},$ and weights $w,$ runs in expected time $O(mn),$ and
computes $x_{\textrm{\sc Strict}},$ the strict Isotonic Regression of
$y.$
\end{theorem}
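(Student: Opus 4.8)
The plan is to reduce the computation of $x_{\textrm{\sc Strict}}$ to a sequence of $\ell_\infty$ Isotonic Regression computations, each of which can be solved in expected linear time by Theorem~\ref{thm:intro:inf}. Recall that $x_{\textrm{\sc Strict}}$ is characterized (see~\cite{StoutStrict12}) as the lexicographically smallest sorted error vector: among all isotonic $x \in \calI_G$, it minimizes $\max_v w_v |x_v - y_v|$, and subject to that minimizes the next-largest weighted residual, and so on. Equivalently, it is the unique point of $\calI_G$ that is ``lexicographically optimal'' when the residual magnitudes $w_v|x_v-y_v|$ are sorted in nonincreasing order. So the algorithm will proceed in rounds: in each round compute an $\ell_\infty$-optimal solution, identify a nonempty set of coordinates that are ``frozen'' at their optimal residual value in every lex-optimal solution, fix those coordinates, and recurse on the remaining free coordinates with the partial order inherited from $G$.

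The key steps, in order, are: (1) Run the $\ell_\infty$ algorithm of Theorem~\ref{thm:intro:inf} to get the optimal value $\lambda^* = \min_{x\in\calI_G}\norm{x-y}_{w,\infty}$ on the current set of free vertices. (2) Determine the set $S$ of vertices that are ``tight'' — those $v$ for which every $\ell_\infty$-optimal isotonic $x$ must have $w_v|x_v - y_v| = \lambda^*$, and moreover for which the value $x_v$ is forced (it will be $y_v \pm \lambda^*/w_v$, with the sign determined by which constraint direction is binding). This is the analogue of identifying the ``level sets'' frozen by the bottleneck in classical bottleneck-shortest-path / parametric arguments; concretely one can characterize $S$ via the structure of tight paths in $G$ using the observations and $\lambda^*$, e.g. a vertex is forced high if it lies on a directed path realizing the bottleneck from below, and forced low symmetrically. (3) Fix $x_v$ for $v \in S$, remove $S$ from the vertex set, and update the effective lower/upper bound constraints on the neighbors of $S$ (each fixed vertex imposes a constant one-sided box constraint on its in- and out-neighbors, which is easily folded into the data for the recursive call by clipping). (4) Recurse. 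Since $|S| \ge 1$ each round, there are at most $n$ rounds, and each round costs expected $O(m)$ for the $\ell_\infty$ solve plus $O(m)$ bookkeeping, giving expected $O(mn)$ overall. Finally, argue correctness: the fixed coordinates agree with $x_{\textrm{\sc Strict}}$ by the lexicographic characterization (the largest residuals are pinned down first, exactly matching the definition of $x_{\textrm{\sc Strict}}$ as $\lim_{p\to\infty}\hat x_p$), and the recursion faithfully computes the remainder.

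The main obstacle I expect is Step~(2): cleanly characterizing and efficiently computing the set $S$ of coordinates that are forced in \emph{every} lex-optimal solution, together with their forced values, and then showing that the reduced problem (with box constraints from the fixed vertices) is again an Isotonic Regression instance solvable by the same $\ell_\infty$ primitive. There are two subtleties here. First, $\ell_\infty$ Isotonic Regression at a single level has many optimal solutions, so one must be careful to freeze only what is genuinely common to all of them — this likely requires analyzing both a ``lower'' optimal solution (e.g. the {\sc Min} solution) and an ``upper'' one (the {\sc Max} solution) from Theorem~\ref{thm:intro:inf} and freezing a vertex only when these two agree at the extreme residual. Second, the box constraints introduced by fixed vertices must be shown to preserve the form of the problem; this is where one uses that a one-sided constraint $x_u \le c$ can be absorbed by replacing $y_u$ with $\min(y_u, c)$ up to a careful handling of weights, or alternatively by adding a dummy vertex — the details of keeping the recursive instances within the scope of Theorem~\ref{thm:intro:inf} need checking, but are routine once the freezing rule is pinned down.
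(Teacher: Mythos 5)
Your high-level strategy --- iteratively solve the $\ell_\infty$ problem, freeze the coordinates whose residuals are pinned at the bottleneck value, and recurse --- is the same one the paper uses, but the paper implements it through the lexicographic-minimization framework of \cite{KyngRSS15}: by Lemma~\ref{lem:lexEq}, $x_{\textrm{\sc Strict}}$ is the restriction to $V$ of the lex-minimizer of $\grad^{+}$ on the partially-labeled DAG $(G',y')$, and the algorithm \complexmin\ repeatedly finds a \emph{steepest fixable path} in expected $O(m)$ time (\steepestpath), fixes its interior vertices, and finishes with a zero-gradient completion (\fixgradzero). The correctness of that loop rests on Theorem~\ref{thm:basics:directed-meta} (imported from \cite{KyngRSS15}): every lex-minimizer must extend the assignment obtained by fixing steepest fixable paths, and a completion of the result is a lex-minimizer iff all remaining gradients are zero. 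That statement is exactly the ``freezing rule'' you identify as the main obstacle in your Step~(2), and your proposal does not supply it. Your idea of comparing the {\sc Min} and {\sc Max} $\ell_\infty$ solutions points in the right direction --- it corresponds to the paper's comparison of $\vLow[\alpha^{\star}]$ and $\vHigh[\alpha^{\star}]$, which characterizes the vertices of maximal pressure --- but turning ``the two extreme solutions agree at $v$'' into a proof that $v$'s value is forced in \emph{every} lex-optimal solution, and that the forced values coincide with those of $x_{\textrm{\sc Strict}}$, is the substance of the theorem rather than a routine detail.

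Two further points would need repair even granting a freezing rule. First, your Step~(3) absorbs a box constraint $x_u \le c$ by replacing $y_u$ with $\min(y_u,c)$; this changes the residual $w_u\abs{x_u-y_u}$ and hence the lexicographic objective, so the reduced instance is not equivalent to the original without an argument you do not give. The paper avoids the issue entirely: frozen vertices simply become labeled terminals of the partially-labeled DAG, and the objective is the vector of edge gradients, which is indifferent to whether a vertex is free or fixed. Second, the endgame where the remaining subproblem has bottleneck value $0$ needs separate treatment, since the surviving vertices may not lie on any positive-gradient terminal path and many zero-gradient completions exist; the paper's \fixgradzero\ resolves this with a two-pass topological sweep in $O(m)$. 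With these pieces in place --- a proven freezing rule, the terminal-relabeling form of the recursion, and the zero-gradient completion --- your accounting of at most $n$ rounds at expected $O(m)$ each does yield the claimed $O(mn)$ bound.
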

The previous best running time was
$O(\min(mn, n^{\omega}) + n^2 \log
n)$~\cite{StoutStrict12}.

\subsection{Detailed Comparison to Previous Results}
\textbf{$\ell_{p}$-norms, $p < \infty.$} There has been a lot of work
for fast algorithms for special graph families, mostly for $p=1,2$
(see~\cite{StoutSurvey} for references). For some cases where $G$ is
very simple, e.g. a directed path (corresponding to linear orders), or
a rooted, directed tree (corresponding to tree orders), several works
give algorithms with running times of $O(n)$ or $O(n \log n)$
(see~\cite{StoutSurvey} for references).

% ~\cite{AhujaO01, StoutUnimodal08,
%   StoutIsotonicPartitioning13, PardalosX99}. 

Theorem~\ref{thm:intro:lp} not only improves on the previously best
known algorithms for general DAGs, but also on several algorithms for
special graph families (see Table~\ref{table:comparison}). One such
setting is where $V$ is a point set in $d$-dimensions, and
$(u,v) \in E$ whenever $u_{i} \le v_{i}$ for all $i \in [d].$ This
setting has applications to data analysis, as in the example given
earlier, and has been studied extensively (see~\cite{StoutMulti15} for
references). For this case, it was proved by Stout (see
Prop. 2,~\cite{StoutMulti15}) that these partial orders can be
embedded in a DAG with $O(n \log^{d-1} n)$ vertices and edges, and
that this DAG can be computed in time linear in its size. The bounds
then follow by combining this result with our theorem above.

% For several other settings, we present a comparison of the time-bounds
% given by Theorem~\ref{thm:intro:lp} to previously best known results,
% in Table~\ref{table:comparison}. $d$-dim grids refer to the case where
% $G$ is a discrete $d$-dimensional grid, and edges are directed along
% increasing coordinates. The bounds in these cases follow from the
% general result since $m = O(n)$ for grids in any fixed $d.$ The case
% `$d$-dim, arbitrary' refers to the case where $V$ is a set of $n$
% points in $d$-dimensions, and $(u,v) \in E$ whenever $u_{i} \le v_{i}$
% for all $i \in [d].$ For this case, it was proved by Stout (see
% Proposition 2,~\cite{StoutMulti15}) that these partial orders can be
% embedded in a DAG with $O(n \log^{d-1} n)$ vertices and edges, and
% that this DAG can be computed in time linear in its size. The bounds
% then follow by combining this result with our theorem above.

We obtain improved running times for all $\ell_{p}$ norms for DAGs
with $m = o(n^{2} / \log^6 n),$ and for $d$-dim point sets for
$d \ge 3.$ 
For $d=2,$ Stout~\cite{StoutIsotonicPartitioning13} gives
an $O(n \log^{2} n)$ time algorithm.

% For general FD \Snote{Redo this section} For the important special case
% of $\ell_{2}$-norms, we obtain improved running times in all the above
% cases. For $\ell_{1}$-norms, and

\begin{table}[t]
\caption{Comparison to previous best results for $\ell_{p}$-norms, $p \neq \infty$}
\label{table:comparison}
\begin{center}
\begin{tabular}{r | c c | c}
& 
\multicolumn{2}{c|}{\bf Previous best} & 
\multicolumn{1}{c}{\bf This paper} \\
% \multicolumn{1}{c}{\bf PART}  &\multicolumn{1}{c}{\bf DESCRIPTION}
% \\
& 
$\ell_{1}$ & 
$\ell_{p}, 1 < p < \infty$ &
$\ell_{p}, p < \infty $ \\ 
  \hline 
  % 2-dim grid & $n \log n$~\cite{StoutIsotonicPartitioning13} & $n^2$~\cite{SpougeW02} & $n^{1.5} \log^{3} n$ \\
%   2-dim arbitrary &
%  $n \log^2 n$~\cite{StoutIsotonicPartitioning13} &
%  $n  \log^{2} n$~\cite{StoutIsotonicPartitioning13} & 
% $n^{1.5} \log^{4.5} n$ 
% \\
  % $d$-dim grid & $n^2 \log n$~\cite{StoutIsotonicPartitioning13} & $n^3 \log n$~\cite{StoutIsotonicPartitioning13} & $n^{1.5} \log^{3} n$  \\
  $d$-dim vertex set, $d \ge 3$& 
$n^2 \log^{d} n$~\cite{StoutIsotonicPartitioning13} & 
$n^2 \log^{d+1} n$~\cite{StoutIsotonicPartitioning13} &
$n^{1.5} \log^{1.5(d+1)} n$\\
  arbitrary DAG &
 $nm + n^2 \log n$~\cite{AngelovHKW06}& 
$nm\log \frac{n^2}{m}$~\cite{HochbaumQ03}& 
 $m^{1.5} \log^{3} n$ \\
\end{tabular}
\end{center} {\small For sake of brevity, we have ignored the
  $O(\cdot)$ notation implicit in the bounds, and $o(\log n)$
  terms. 
%Moreover, the bounds from previous results are exact, whereas
  The results are reported assuming an error parameter
  $\delta = n^{-\Omega(1)},$ and that $w_v$ are bounded by $\poly(n).$}
\end{table}

% \Snote{The algorithm of Hochbaum and
%   Queyranne~\cite{HochbaumQ03} achieves a running time of
%   $O(mn \log \frac{n^2}{m} + n \log U)$ for all $\ell_p$ norms.}
% \begin{table}[t]
% \caption{Comparison to previous results for $\ell_{p}$-norms, $p \neq \infty$}
% \label{table:comparison}
% \begin{center}
% \begin{tabular}{r | c c | c}
% & 
% \multicolumn{2}{c|}{\bf Previous best} & 
% \multicolumn{1}{c}{\bf This paper} \\
% % \multicolumn{1}{c}{\bf PART}  &\multicolumn{1}{c}{\bf DESCRIPTION}
% % \\
% & 
% $\ell_{1}$ & 
% $\ell_{2}$ &
% $\ell_{p}, p \neq \infty $ \\ 
%   \hline 
%   2-dim grid & $n \log n$~\cite{StoutIsotonicPartitioning13} & $n^2$~\cite{SpougeW02} & $n^{1.5} \log^{3} n$ \\
%   2-dim arbitrary & $n \log^2 n$~\cite{StoutIsotonicPartitioning13} & $n^2  \log n$~\cite{StoutIsotonicPartitioning13} & 
% $n^{1.5} \log^{4.5} n$ 
% \\
%   $d$-dim grid & $n^2 \log n$~\cite{StoutIsotonicPartitioning13} & $n^3 \log n$~\cite{StoutIsotonicPartitioning13} & $n^{1.5} \log^{3} n$  \\
%   $d$-dim arbitrary & 
% $n^2 \log^{d} n$~\cite{StoutIsotonicPartitioning13} & 
% $n^3 \log^{d} n$~\cite{StoutIsotonicPartitioning13} &
% $n^{1.5} \log^{1.5(d+1)} n$\\
%   arbitrary &
%  $nm + n^2 \log n$~\cite{AngelovHKW06}& 
% $n^2m + n^3 \log n$~\cite{StoutIsotonicPartitioning13}& 
%  $m^{1.5} \log^{3} n$ \\
% \end{tabular}
% \end{center} {\small For sake of brevity, we have ignored the
%   $O(\cdot)$ notation implicit in the bounds, and $o(\log n)$
%   terms. Moreover, the bounds from previous results are exact, whereas
%   our results are reported with an error parameter
%   $\delta = n^-\Omega(1)}.$}
% \end{table}

\textbf{$\ell_{\infty}$-norms.}  
For weighted $\ell_{\infty}$-norms on
arbitrary DAGs, the previous best result was
$O( m \log n + n \log^2 n)$ due to Kaufman and
Tamir~\cite{KaufmanT93}. 
A manuscript by Stout~\cite{StoutInfGeneral}
improves it to $O(m \log n).$ These
algorithms are based on parametric search, and are impractical. 
% A manuscript by Stout~\cite{StoutInfGeneral}
% gives an an improved bound of $O(m \log n)$ for arbitrary
% DAGs. 
Our algorithm is simple, achieves the best possible running time, 
% Theorem~\ref{thm:intro:inf} improves on this. In
% contrast, our algorithm is simple 
and only requires random
sampling and topological sort.

% In a survey on the best known running times for Isotonic
% Regression~\cite{StoutSurvey}, it is claimed that an unpublished work
% by Stout~\cite{StoutInfSpecial} gives $O(n)$-time algorithms for
% linear order, trees, and $d$-grids, and an $O(n \log^{d-1} n)$
% algorithm for arbitrary point sets in $d$-dimensions \footnote{The
%   work~\cite{StoutInfSpecial} was not publicly available at the time
%   of writing this article.}. Theorem~\ref{thm:intro:inf} implies the
% linear-time algorithms immediately. The result for $d$-dimensional
% point sets follows after embedding the point sets into DAGs of size
% $O(n \log^{d-1} n),$ as for $\ell_{p}$-norms (using Proposition
% 2,~\cite{StoutMulti15}).

In a parallel independent work, Stout~\cite{StoutInfSpecial15} gives
$O(n)$-time algorithms for linear order, trees, and $d$-grids, and an
$O(n \log^{d-1} n)$ algorithm for point sets in
$d$-dimensions. Theorem~\ref{thm:intro:inf} implies the linear-time
algorithms immediately. The result for $d$-dimensional point sets
follows after embedding the point sets into DAGs of size
$O(n \log^{d-1} n),$ as for $\ell_{p}$-norms.
% (using Proposition 2,~\cite{StoutMulti15}).

\textbf{Strict Isotonic Regression.}
Strict Isotonic regression was introduced and studied
in~\cite{StoutStrict12}. It also gave the only previous algorithm for
computing it, that runs in time $O(\min(mn, n^{\omega}) + n^2 \log
n).$ Theorem~\ref{thm:intro:strict} is an improvement when
$m = o(n \log n)$.

\subsection{Overview of the Techniques and Contribution}
\textbf{$\ell_{p}$-norms, $p < \infty$.}
It is immediate that Isotonic Regression, as formulated in
Equation~\eqref{eq:intro:convex}, is a convex programming problem. For
weighted $\ell_{p}$-norms with $p < \infty,$ applying generic
convex-programming algorithms such as Interior Point methods to this
formulation leads to algorithms that are quite slow.

We obtain faster algorithms for Isotonic Regression by replacing the
computationally intensive component of Interior Point methods, solving
systems of linear equations, with approximate solves. This approach
has been used to design fast algorithms for generalized flow
problems~\cite{DaitchS08, Madry13, LeeS14}. 

We present a complete proof of an Interior Point method for a large
class of convex programs that only requires approximate solves.
Daitch and Spielman~\cite{DaitchS08} had proved such a result for
linear programs.  We extend this to $\ell_{p}$-objectives, and provide
an improved analysis that only requires linear solvers with a constant
factor relative error bound, whereas the method from Daitch and
Spielman required polynomially small error bounds.

\begin{comment}

We also develop an efficient initialization procedure
the Interior Point method for Isotonic Regression.

We present a complete proof of an Interior Point method (IPM) for a large
class of convex programs that only requires approximate solves. Daitch
and Spielman~\cite{DaitchS08} had proved such a result for linear
programs. Our proof gives an improved analysis and a shorter proof. As
a concrete improvement, a constant relative error bound in the
approximate solves is sufficient for our method, whereas the method
from Daitch and Spielman required polynomially small error bounds.

\Snote{TODO: a better discussion in this paragraph!}
To prove Theorem~\ref{thm:isoipm},
we prove a general result showing that interior point methods
without subspace constraints can be implemented
using approximate solvers that only provide constant factor spectral guarantees.
Our result is analogous to that of Renegar~\cite{renegar_mathematical_2001},
except for requiring only a weak solver, and not allowing subspace constraints.
We then show that a sufficiently accurate solver can be implemented in nearly linear time.

\end{comment}

The linear systems in~\cite{Madry13, LeeS14} are Symmetric Diagonally
Dominant (SDD) matrices. The seminal work of Spielman and
Teng~\cite{SpielmanT04} gives near-linear time approximate solvers for
such systems, and later research has improved these solvers
further~\cite{KoutisMP11, CohenSTOC}.  Daitch and
Spielman~\cite{DaitchS08} extended these solvers to M-matrices
(generalizations of SDD). The systems we need to solve are neither
SDD, nor M-matrices.  We develop fast solvers for this new class of
matrices using fast SDD solvers. We stress that standard techniques
for approximate inverse computation, e.g. Conjugate Gradient, are not
sufficient for approximately solving our systems in near-linear
time. These methods have at least a square root dependence on the
condition number, which inevitably becomes huge in IPMs.
%, the condition number of the Hessian inevitably becomes
% large

\textbf{$\ell_{\infty}$-norms and Strict Isotonic Regression.}
Algorithms for $\ell_{\infty}$-norms and Strict Isotonic Regression
are based on techniques presented in a recent paper of Kyng \emph{et
  al.}~\cite{KyngRSS15}.  We reduce $\ell_{\infty}$-norm Isotonic
Regression to the following problem, referred to as Lipschitz learning
on directed graphs in~\cite{KyngRSS15} (see Section~\ref{sec:inf} for
details) : We have a directed graph $H,$ with edge lengths given by
$\len.$ Given $x \in \rea^{V(H)},$ for every $(u,v) \in E(H),$ define
$\grad^{+}_{G}[x](u,v) \defeq \max \left \{
  \frac{x(u)-x(v)}{\len(u,v)}, 0 \right \}.$
Now, given $y$ that assigns real values to a subset of $V(H),$ the
goal is to determine $x \in \rea^{V(H)}$ that agrees with $y$ and
minimizes $\max_{(u,v) \in E(H)} \grad^{+}_{G}[x](u,v).$

The above problem is solved in $O(m + n\log n)$ time for general
directed graphs in~\cite{KyngRSS15}. We give a simple linear-time
reduction to the above problem with the additional property that $H$
is a DAG. For DAGs, their algorithm can be implemented to run in
$O(m+n)$ time.

It is proved in~\cite{StoutStrict12} that computing the Strict
Isotonic Regression is equivalent to computing the isotonic vector
that minimizes the error under the lexicographic ordering (see
Section~\ref{sec:inf}).  
Under the same reduction as in the
$\ell_{\infty}$-case, we show that this is equivalent to minimizing
$\grad^{+}$ under the lexicographic ordering. 
% Strict Isotonic Regression is equivalent to
% computing $x_{\textrm{\sc Strict}} = \lim_{p \to \infty} \hat{x}_{p},$
% where $\hat{x}_{p}$ extends $y$ to $V(H)$ and minimizes
% $\norm{\grad^{+}[x]}_{p}.$ 
It is proved in~\cite{KyngRSS15} that the lex-minimizer can be
computed with basically $n$ calls to $\ell_{\infty}$-minimization,
immediately implying our result.

% Given $\eps_{\infty} = \min_{x \in \calI_{G}} \norm{x-y}_{w, \infty},$
% it is known~\cite{??} that one can compute an $\ell_{\infty}$-Isotonic
% Regression in $O(m)$ time. It is also known~\cite{??} that given
% $\eps \ge 0,$ it is possible to determine in $O(m)$ time if there
% exists an isotonic $x$ such that $\norm{x-y}_{w,\infty} \le \eps.$ The
% algorithm of~\cite{StoutInfGeneral} combines this observation with a
% generic black-box search procedure called Parametric Search to find
% the best $\eps_{\infty}.$ Even though Parametric search requires
% $O(\log n)$ time theoretically, it is quite complicated and
% impractical.

% Our algorithm is based on two additional components. In order to
% describe them, we define for a vertex $v \in V,$ $\eps_{v}$ to be the
% maximum of $\frac{y_{p} - y_{s}}{w_{p}^{-1} + w_{s}^{-1}}$ over
% predecessors $p$ of 

% The first is
% that, we can compute in $O(m)$ time, the
% smallest $\eps_{v}$ such that there is a 
\subsection{Further Applications}
The IPM framework that we introduce to design our algorithm for
Isotonic Regression (IR), and the associated results, are very general, and
can be applied as-is to other problems. As a concrete application, the
algorithm of Kakade et al.~\cite{Learning4} for provably learning
Generalized Linear Models and Single Index Models learns 1-Lipschitz
monotone functions on linear orders in $O(n^2)$ time (procedure
LPAV). The structure of the associated convex program resembles
IR. Our IPM results and solvers immediately imply an $n^{1.5}$ time
algorithm (up to log factors).

Improved algorithms for IR (or for learning Lipschitz functions) on
$d$-dimensional point sets could be applied towards learning d-dim
multi-index models where the link-function is nondecreasing w.r.t. the
natural ordering on d-variables, extending \cite{KalaiS09,
  Learning4}. They could also be applied towards constructing Class
Probability Estimation (CPE) models from multiple classifiers, by
finding a mapping from multiple classifier scores to a probabilistic
estimate, extending~\cite{ZadroznyE02, NarasimhanA13}.

\smallskip \textbf{Organization.}  We report experimental results in
Section~\ref{sec:expts}. An outline of the algorithms and
analysis for $\ell_{p}$-norms, $p < \infty,$ are presented in
Section~\ref{sec:ipm}. In Section~\ref{sec:inf}, we define the
Lipschitz regression problem on DAGs, and give the reduction from
$\ell_{\infty}$-norm Isotonic Regression. We defer a detailed
description of the algorithms, and most proofs to the accompanying
supplementary material.

\vspace{-5pt}
\section{Experiments}
\label{sec:expts}
An important advantage of our algorithms is that they can be
implemented quite efficiently. Our algorithms are based on what is
known as a short-step method (see Chapter 11,~\cite{BoydV04}), that
leads to an $O(\sqrt{m})$ bound on the number of iterations. Each
iteration corresponds to one linear solve in the Hessian matrix. A
variant, known as the long-step method (see~\cite{BoydV04}) typically
require much fewer iterations, about $\log m,$ even though the only
provable bound known is $O(m).$
\begin{wrapfigure}{H}{0.5\textwidth}
\vspace{-22pt}
  \begin{center}
    \includegraphics[width=0.48\textwidth]{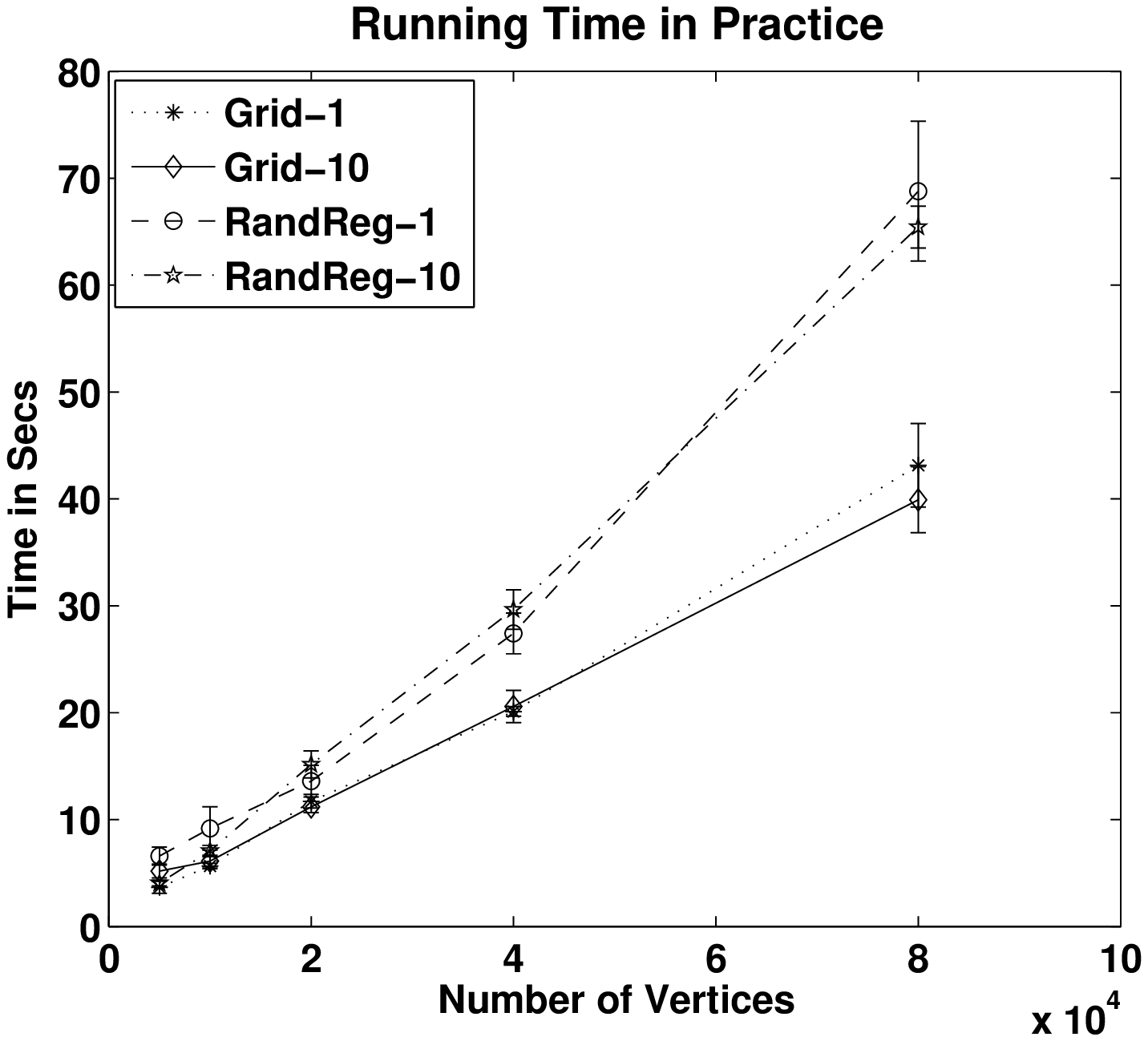}
\vspace{-8pt}
  \end{center}
%  \caption{Experimental R}
    \label{fig:runtimes}
    \vspace{-14pt}
\end{wrapfigure}

For the important special case of $\ell_{2}$-Isotonic Regression, we
have implemented our algorithm in Matlab, with long step barrier
method, combined with our approximate solver for the linear systems
involved. A number of heuristics recommended in~\cite{BoydV04} that
greatly improve the running time in practice have also been
incorporated. Despite the changes, our implementation is theoretically
correct and also outputs an upper bound on the error by giving a
feasible point to the dual program.  Our implementation is
available at
\href{https://github.com/sachdevasushant/Isotonic}{\texttt{{https://github.com/
      sachdevasushant/Isotonic}}}.

In the figure, we plot average running times (with error bars denoting
standard deviation) for $\ell_{2}$-Isotonic Regression on DAGs, where
the underlying graphs are 2-d grid graphs and random regular graphs
(of constant degree). The edges for 2-d grid graphs are all oriented
towards one of the corners. For random regular graphs, the edges are
oriented according to a random permutation. The vector of initial
observations $y$ is chosen to be a random permutation of $1$ to $n$
obeying the partial order, perturbed by adding i.i.d. Gaussian noise
to each coordinate. For each graph size, and two different noise
levels (standard deviation for the noise on each coordinate being $1$
or $10$), the experiment is repeated multiple time.
% s and the average
% run time in seconds is reported of run times)
The relative error in
the objective was ascertained to be less than 1\%.
%output was ascertained to be correct up to 4 
%\begin{figure}[h!]
%  \caption{A picture of a gull.}
%  \centering
%    \includegraphics[width=0.5 \textwidth]{runtime.eps}
%\end{figure}

%%% Local Variables:
%%% mode: latex
%%% TeX-master: "isotonic-nips"
%%% End:

% \begin{table}[t]
% \caption{Comparison to previous results}
% \label{table:comparison-old}
% \begin{center}
% \begin{tabular}{r | c c c | c c}
% & 
% \multicolumn{3}{c|}{\bf Previous best} & 
% \multicolumn{2}{c}{\bf This paper} \\
% % \multicolumn{1}{c}{\bf PART}  &\multicolumn{1}{c}{\bf DESCRIPTION}
% % \\
% & 
% $\ell_{1}$ & 
% $\ell_{2}$ &
% $\ell_{\infty}$ & 
% $\ell_{p}, p \neq \infty $ &
%  $\ell_{\infty}$\\ 
%   \hline 
%   linear &
%  $n \log n$ & 
% $n$ & 
% $n$ \\
%   tree & $n \log n$ & $n \log n$ & $n$ \\
%   2-dim grid & $n \log n$ & $n^2$ & $n$  & $n^{1.5} \log^{3} n$ &
%  $n$ \\
%   2-dim arbitrary & $n \log^2 n$ & $n^2  \log n$ & 
% $n \log n$ &
% $n^{1.5} \log^{4.5} n$ &
% $n \log n$
% \\
%   $d$-dim grid & $n^2 \log n$ & $n^3 \log n$ & $n$ \\
%   $d$-dim arbitrary & 
% $n^2 \log^{d} n$ & 
% $n^3 \log^{d} n$ &
% $n \log ^{d-1} n$ & 
% $n^{1.5} \log^{1.5(d+1)} n$ &
% $n \log^{d-1} n$\\
%   arbitrary &
%  $nm + n^2 \log n$~\cite{AngelovHKW06}& 
% $n^2m + n^3 \log n$~\cite{StoutIsotonicPartitioning13}& 
% $m \log n$~\cite{KaufmanT93, ??}&
%  $m^{1.5} \log^{3} n$ &
%  $m$ \\
% \end{tabular}
% \end{center}
% \end{table}

%%% Local Variables:
%%% mode: latex
%%% TeX-master: "isotonic-nips"
%%% End:

\setcounter{table}{0}
\renewcommand{\tablename}{Algorithm}
\section{Algorithms for $\ell_{p}$-norms, $p < \infty$}
\label{sec:ipm}
Without loss of generality, we assume $y \in [0,1]^{n}.$
% Let Given a directed acyclic graph $G = (V,E)$, and a vector of voltages
% $y \in [0,1]^{n}$, 
Given $p \in [1,\infty),$ let $\piso$ denote the following
$\ell_{p}$-norm Isotonic Regression problem, and $\optiso$ denote its optimum:
\begin{equation}
\label{eq:isoprog}
\begin{aligned}
& \underset{x\in \calI_{G} }{\min} & &\norm{x-y}_{w,p}^{p}.
\end{aligned}
\end{equation}
Let $\vecw$ denote the entry-wise $p^{\text{th}}$ power of $w$.  We
assume the minimum entry of $\vecw$ is $1,$ and the maximum entry is
$\wmaxp \leq \exp(n)$.  We
also assume the additive error parameter $\delta$ is lower bounded by
$\exp(-n)$, and that $p \leq \exp(n)$. We use the $\widetilde{O}$
notation to hide $\poly \log \log n$ factors.
\begin{theorem}
\label{thm:isoipm}
Given a DAG $G(V,E),$ a set of observations $y \in [0,1]^{V},$ weights
$w,$ and an error parameter $\delta >0,$ the algorithm \isoipm runs in
time
$\widetilde{O}\left(m^{1.5} \log^2 n \log\left( \nfrac{n p
      \wmaxp}{\delta} \right)\right),$
and with probability at least $1-\nfrac{1}{n},$ outputs a vector
$x_{\textrm{\sc Alg}} \in \calI_{G}$ with
  \[
  \norm{x_{\textrm{\sc Alg}}-y}_{w,p}^p \leq
\optiso + \delta.
  \]
\end{theorem}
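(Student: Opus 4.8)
The plan is to reformulate the problem so that a short-step interior point method (IPM) applies, and then drive down the per-iteration cost by replacing exact Hessian solves with approximate ones. First I would rewrite \piso as a constrained convex program in a form amenable to barrier methods. The $\ell_p^p$ objective $\norm{x-y}_{w,p}^p = \sum_v \vecw_v |x_v - y_v|^p$ is convex but not smooth and not linear, so I would introduce auxiliary variables $t_v$ with constraints $\vecw_v|x_v - y_v|^p \le t_v$ (equivalently, a pair of power-cone-type constraints), minimizing $\sum_v t_v$; the order constraints $x_u \le x_v$ for $(u,v)\in E$ remain as linear inequalities. This yields a convex program with $O(n+m)$ variables and $O(n+m)$ constraints, each constraint admitting a self-concordant barrier, so the total barrier parameter is $\nu = O(m)$, giving the desired $O(\sqrt{m})$ iteration count by the standard short-step analysis (Chapter 11 of~\cite{BoydV04}).

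The key technical content — which I would defer to the general IPM result the paper advertises (the extension of Daitch--Spielman~\cite{DaitchS08} to $\ell_p$-objectives with only constant-relative-error solves) — is showing that each Newton step can be computed approximately while preserving progress toward the central path. Concretely, each iteration requires solving a linear system in the Hessian of the barrier-plus-objective. Here the main obstacle is that these Hessians are neither SDD nor M-matrices: the $\ell_p$ barrier terms contribute diagonal pieces with widely varying scales, and the graph-structured part is a directed incidence structure. So the crux of the argument is (i) to exhibit the Hessian in the form (diagonal) $+$ (incidence-matrix conjugation), (ii) to invoke the paper's fast solver for this matrix class — built on top of near-linear-time SDD solvers~\cite{SpielmanT04, KoutisMP11, CohenSTOC} — to get a solution with constant relative error in the Hessian norm in $\widetilde{O}(m \log n)$ time per solve, and (iii) to feed this into the robust IPM analysis to conclude that a constant-relative-error solve suffices to maintain the invariant that the iterate stays in the region where Newton's method converges quadratically. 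Multiplying $O(\sqrt m)$ iterations by $\widetilde{O}(m\log n)$ per iteration, times an extra $O(\log n)$ from the solver's internal accuracy, yields the $\widetilde{O}(m^{1.5}\log^2 n)$ factor.

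It remains to account for the $\log(n p \wmaxp / \delta)$ factor and the initialization. For the former: the IPM must be run until the duality gap drops below the additive target $\delta$; since the gap shrinks geometrically (by a factor $1 - \Theta(1/\sqrt{m})$ per step — or rather the barrier parameter is scaled down over $O(\sqrt m \log(\nu/\epsilon))$ steps), and since the initial gap is bounded by $\poly(n)\cdot\wmaxp\cdot(\text{diam of feasible region})^p$ with $y\in[0,1]^n$, taking logarithms produces exactly the claimed $\log(np\wmaxp/\delta)$ dependence; the assumptions $\wmaxp \le \exp(n)$, $\delta \ge \exp(-n)$, $p \le \exp(n)$ ensure all intermediate quantities have polynomially-bounded bit-length so the $\widetilde O$ bookkeeping is legitimate. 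For initialization, I would invoke the paper's \feasiblestart procedure to produce a feasible point near the analytic center (e.g.\ take $x$ constant, which is trivially isotonic, and $t_v$ slightly above $\vecw_v|x_v - y_v|^p$), at cost dominated by the main loop. Finally, since every solve is randomized (the SDD solver uses random sampling), a union bound over the $\widetilde O(\sqrt m)$ iterations with each solve failing with probability $\le n^{-c}$ for suitable constant $c$ gives overall success probability $\ge 1 - 1/n$. The part I expect to require the most care is step (ii)–(iii): precisely quantifying how solver error in the Hessian norm translates into deviation from the central path, and verifying that a fixed constant relative error (rather than a $\delta$-dependent one) is genuinely enough — this is exactly where the improvement over~\cite{DaitchS08} lies and where the self-concordance calculus must be carried out carefully.
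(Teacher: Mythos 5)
Your proposal follows essentially the same route as the paper: introduce auxiliary variables $t_v$ with constraints $\abs{x_v-y_v}^p\le t_v$ to linearize the objective, use a self-concordant barrier of complexity $\theta=O(m)$ built from the Hertog et~al.\ power barrier plus log-barriers on the edge constraints, run a path-following IPM whose Newton systems are solved to constant relative error by a specialized solver built on SDD solvers, and union-bound over the $O(\sqrt m\log(np\wmaxp/\delta))$ randomized solves. That is exactly how the paper assembles Theorem~\ref{thm:isoipm} from Theorem~\ref{thm:pathalg}, Corollary~\ref{cor:barrierfn}, Lemmas~\ref{lem:boundeddom} and~\ref{lem:startsym}, and Theorem~\ref{thm:hessianapprox}.

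One concrete misstep: your parenthetical initialization, ``take $x$ constant, which is trivially isotonic,'' does not work. A constant $x$ makes every edge constraint $x_u\le x_v$ tight, so it lies on the boundary of the barrier's domain (the terms $-\log(x(v)-x(u))$ are $+\infty$ there), and the IPM cannot start from it. More importantly, what Theorem~\ref{thm:pathalg} actually needs is not merely strict feasibility but a quantitative \emph{symmetry} lower bound $s\le\sym(x_0,\bnddom)$: the first phase of \apxipm runs for $O(\sqrt{\theta}\log(\theta(1+1/s)))$ iterations to move from $x_0$ to near the analytic center, so $s$ enters the iteration count directly. This is why \feasiblestart spreads the values as $x_0(v_i)=i/n$ along a topological order (guaranteeing a gap of $1/n$ on every edge) and sets $t_0(v)=\abs{x_0(v)-y(v)}^p+1$; the resulting bound $s\ge 1/(18n^2p\wmaxp)$ is also where the factor of $p$ inside the logarithm comes from, rather than from the initial duality gap as your accounting suggests. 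With the starting point corrected and the symmetry bound made explicit, your argument matches the paper's.
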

  % where $x^{\star}$ is an optimal assignment for the $p$-norm isotonic
  % regression problem, and 
  % The algorithm succeeds with probability at least $> 1/n$.
%
The algorithm \isoipm is obtained by an appropriate instantiation of a
general Interior Point Method (IPM) framework which we call \apxipm. 

To state the general IPM result, we need to introduce two important
concepts.  These concepts are defined formally in Supplementary
Material Section~\ref{sec:ipmdefs}.  The first concept is
\emph{self-concordant barrier functions}; we denote the class of these
functions by $\mathcal{SCB}$.  A self-concordant barrier function $f$
is a special convex function defined on some convex domain set
$S$. The function approaches infinity at the boundary of $S$.  We
associate with each $f$ a \emph{complexity parameter} $\theta(f)$
which measures how well-behaved $f$ is.  The second important concept
is the \emph{symmetry} of a point $z$ w.r.t.  $S$: A
non-negative scalar quantity $\sym(z,S)$.  A large symmetry value
guarantees that a point is not too close to the boundary of the set.
For our algorithms to work, we need a starting point whose symmetry is
not too small.  We later show that such a starting point can be
constructed for the \piso problem.

\apxipm is a primal path following IPM: Given a vector $c$, a domain
$D$ and a barrier function $f \in \mathcal{SCB}$ for $D$, we seek to
compute $\min_{x \in D} \pair{c,x}.$ To find a minimizer, we consider
a function $ f_{c,\gamma}(x) = f(x) + \gamma \pair{c, x}$, and attempt
to minimize $f_{c,\gamma}$ for changing values of $\gamma$ by
alternately updating $x$ and $\gamma$.  As $x$ approaches the boundary
of $D$ the $f(x)$ term grows to infinity and with some care, we can
use this to ensure we never move to a point $x$ outside the feasible
domain $D$.  As we increase $\gamma$, the objective term $\pair{c, x}$
contributes more to $f_{c,\gamma}$.  Eventually, for large enough
$\gamma$, the objective value $\pair{c,x}$ of the current point $x$
will be close to the optimum of the program.
% our $x$ will almost
% match the value of $\pair{c, x}$ obtained by the minimizer of this
% term over the domain $D$.

To stay near the optimum $x$ for each new value of $\gamma$,
we use a second-order method (Newton steps) to update $x$
when $\gamma$ is changed.
This means that we minimize a local quadratic approximation to our objective.
This requires solving a linear system $H z = g$,
where $g$ and $H$ are the gradient and Hessian of $f$ at $x$
respectively.
Solving this system to find $z$ is the most computationally intensive
aspect of the algorithm.
Crucially we ensure that crude approximate solutions to the linear system suffices,
allowing the algorithm to use fast approximate solvers for this step.
\apxipm is described in detail in Supplementary Material
Section~\ref{sec:apxipm},
%as  Algorithm~\ref{alg:apxipm},
 and in this section we prove the following theorem.
\begin{theorem}
Given a convex bounded domain $D \subseteq \Reals{n}$ and
  vector $c \in \Reals{n}$, consider the program
% Define a minimization program over a convex domain
% $D \subseteq \Reals{n}$ and vectors $c \in \Reals{n}$ and
% $b \in \Reals{m}$ by
\begin{equation}
\begin{aligned}
& \underset{x \in D}{\min}
& & \pair{c,x}.
\end{aligned}
\label{eq:genericLP}
\end{equation}
Let \opt denote the optimum of the program.  Let
$f \in \mathcal{SCB}$ be a self-concordant barrier function for $D$.
Given a initial point $x_{0} \in D$, a value upper bound
$\valsup \geq \sup\{ \pair{c,x} : x \in D\} $, a symmetry lower bound
$ s \leq \sym(x_{0}, D)$, and an error parameter $0< \eps < 1$, the
algorithm \apxipm runs for
  \[
\textstyle T_{\text{apx}} = O\left( \sqrt{\theta(f)} \log\left( \nfrac{\theta(f)}{\epsilon \cdot s } \right) \right)
  \]
  iterations and returns a point $x_{\text{apx}}$,
  which satisfies
  $
\textstyle   \frac{\pair{c,x_{\text{apx}}} - \opt}{\valsup - \opt} \leq \epsilon.
  $

  The algorithm requires $O(T_{\text{apx}})$ multiplications of
  vectors by a matrix $M(x)$ satisfying
  $\nfrac{9}{10} \cdot H(x)^{-1} \preceq M(x) \preceq \nfrac{11}{10}
  \cdot H(x)^{-1},$
  where $H(x)$ is the Hessian of $f$ at various points $x \in D$
  specified by the algorithm.
\label{thm:pathalg}
\end{theorem}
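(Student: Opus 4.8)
The plan is to analyze \apxipm as a standard short-step primal path-following interior point method, but carefully tracking two non-standard features: (i) the Newton steps are computed only approximately, via a linear operator $M(x)$ that is a constant-factor spectral approximation to $H(x)^{-1}$, and (ii) progress toward the optimum is measured relative to $\valsup - \opt$, with the number of iterations depending on the symmetry $\sym(x_0, D)$ of the starting point rather than on an a priori bound on the central path length. I would set up the central path $x^\star(\gamma) = \argmin_x f_{c,\gamma}(x)$, and use the standard Newton decrement $\lambda_f(x,\gamma) = \norm{g_{c,\gamma}(x)}_{H(x)^{-1}}$ (measured in the local Hessian norm of $f$, which equals the Hessian of $f_{c,\gamma}$) as the potential controlling proximity to the path. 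The invariant maintained across iterations is $\lambda_f(x,\gamma) \le \eta$ for a small absolute constant $\eta$ (e.g. $\eta = 1/10$ or so), and the parameter is updated multiplicatively, $\gamma \leftarrow (1 + \kappa/\sqrt{\theta(f)})\,\gamma$, for a suitable constant $\kappa$.

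The key steps, in order: First, recall the basic self-concordance calculus for barriers — the Dikin ellipsoid is contained in the domain, damped Newton steps contract the Newton decrement quadratically when $\lambda_f < 1$, and the gradient of a $\theta$-self-concordant barrier satisfies $\norm{g_f(x)}_{H(x)^{-1}}^2 \le \theta$. Second, show the \emph{exact} Newton step argument: after multiplying $\gamma$ by $1 + \kappa/\sqrt{\theta}$, the decrement at the old point grows by at most roughly $\kappa$ (using the $\sqrt\theta$ bound on the barrier gradient and the triangle inequality in the $H(x)^{-1}$ norm), and one exact Newton step restores $\lambda_f \le \eta$. Third — the crux — replace the exact Newton step $z^\star = -H(x)^{-1} g_{c,\gamma}(x)$ by the approximate step $z = -M(x) g_{c,\gamma}(x)$; since $\tfrac{9}{10} H^{-1} \preceq M \preceq \tfrac{11}{10} H^{-1}$, the error vector $z - z^\star$ has $\norm{z - z^\star}_{H(x)} \le \tfrac{1}{10}\norm{z^\star}_{H(x)} = \tfrac{1}{10}\lambda_f$, and one feeds this into the self-concordant Newton contraction estimate to check that the invariant is still restored (perhaps with a slightly larger $\eta$ or slightly smaller $\kappa$ — one just needs the constants to close). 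This is where I'd be most careful: verifying that a constant relative spectral error, rather than a polynomially small one, still leaves enough room in the quadratic convergence bound; the Daitch–Spielman analysis needed tiny error, and the improvement here is precisely that self-concordance gives quadratic (not merely linear) local convergence so a constant perturbation is absorbed. Fourth, bound the number of iterations: starting from $\gamma_0$ chosen so that the initial point is approximately centered (using the symmetry lower bound $s$ to argue $\gamma_0$ can be taken as large as $\Omega(s/\valsup)$-ish after an initial centering phase of $O(\sqrt\theta \log(\theta/s))$ damped Newton steps from $x_0$), and stopping when $\gamma = \Theta(\theta/(\epsilon(\valsup-\opt)))$, the multiplicative update needs $O(\sqrt\theta \log(\theta/(\epsilon s)))$ steps. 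Fifth, the standard duality/gap bound for barrier methods: when $\lambda_f(x,\gamma)\le \eta$, one has $\pair{c,x} - \opt \le \tfrac{\theta + O(\eta)}{\gamma}$; combined with $\valsup - \opt \ge$ (something controlled, but actually here we just need $\valsup - \opt > 0$ and that $\gamma$ reaches the stated magnitude), this yields $\frac{\pair{c,x}-\opt}{\valsup-\opt} \le \epsilon$.

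The main obstacle I anticipate is the bookkeeping around the symmetry parameter $s$: translating a lower bound on $\sym(x_0, D)$ into a valid \emph{initial} value of $\gamma$ (or equivalently, bounding the number of centering steps needed to get from the given $x_0$ onto the central path near parameter $\gamma_0$) requires relating the symmetry of a point to the Hessian norm of the barrier gradient there, i.e. a quantitative statement that a point far from the boundary in the symmetry sense is also close to the analytic center in the local metric. The cleanest route is to invoke the known fact that for a $\theta$-self-concordant barrier, $\mathrm{dist}_f(x_0, x^\star_{\mathrm{ac}}) = O(\log(1/\sym(x_0,D)) + \sqrt{\theta})$ or a similar bound, so that the damped-Newton centering phase from $x_0$ costs $O(\sqrt\theta \log(\theta/s))$ iterations — matching the claimed $T_{\mathrm{apx}}$. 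The second, more routine-but-fiddly obstacle is simply pinning down the absolute constants ($\eta$, $\kappa$, the $9/10$–$11/10$ window) so that all three contractions — parameter update, exact Newton, approximate-solve perturbation — compose consistently; I would handle this by keeping $\eta$ and $\kappa$ symbolic until the end and then exhibiting one admissible choice. The per-iteration cost claim — $O(T_{\mathrm{apx}})$ matrix-vector multiplications by $M(x)$ — is then immediate, since each iteration performs one approximate Newton solve plus $O(1)$ gradient/Hessian evaluations and line-search checks.
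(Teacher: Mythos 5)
Your overall architecture (short-step path following, Newton decrement invariant, multiplicative parameter update by $1+\kappa/\sqrt{\theta(f)}$, and a perturbation analysis of the Newton step under a constant-factor spectral approximation of $H(x)^{-1}$) matches the paper's proof, and your "crux" step is essentially the paper's key lemma: there one shows that if $\delta=\norm{H(x)^{-1}g(x)}_{H(x)}\le 1/2$ and $(1-\tau)H(x)^{-1}\preceq M\preceq(1+\tau)H(x)^{-1}$, then the full step $x-Mg(x)$ stays feasible and the new decrement is at most $\frac{1}{1-(1+\tau)\delta}\bigl(\tau\delta+\frac{((1+\tau)\delta)^2}{1-(1+\tau)\delta}\bigr)$. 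One caveat on your heuristic: the approximate solve contributes a term $\tau\delta$ that is \emph{linear} in $\delta$, so with constant $\tau$ the composed iteration is only a linear contraction with rate roughly $\tau$, not quadratic; that is still plenty to restore the invariant (e.g.\ $1/6\mapsto 1/9$ with $\tau=1/10$), but it is not "quadratic convergence absorbing a constant perturbation."

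The genuine gap is in your initialization. You propose a damped-Newton centering phase from $x_0$ and assert it costs $O(\sqrt{\theta(f)}\log(\theta(f)/s))$ iterations, citing a distance-to-analytic-center bound. That is not a standard fact and I do not believe it is available: the standard complexity of damped Newton centering is proportional to the functional gap $f(x_0)-\min f$, which for a point of symmetry $s$ is only bounded by $O(\theta(f)\log(1+1/s))$, giving $O(\theta(f)\log(1/s))$ iterations --- off by a $\sqrt{\theta(f)}$ factor from the claimed $T_{\text{apx}}$. The paper avoids this entirely with a two-phase auxiliary-path trick: in phase 1 it path-follows the objective $v_1=-g(x_0)$, for which $x_0$ is \emph{exactly} centered at parameter $\rho=1$ (since $g(x_0)-\rho\, g(x_0)=0$), and decreases $\rho$ multiplicatively down to about $\frac{1}{30\,\theta(f)(1+1/s)}$ in $O(\sqrt{\theta(f)}\log(\theta(f)(1+1/s)))$ full approximate-Newton steps. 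The symmetry bound enters only at the end of this phase, via Renegar's Proposition 2.3.7, $\norm{H(x)^{-1}g(x_0)}_x\le(1+1/\sym(x_0,D))\,\theta(f)$, which converts the smallness of $\rho$ into $\norm{H(x)^{-1}g(x)}_x\le 13/90$, i.e.\ near-centeredness for the true barrier. Phase 2 then switches to $c$, initializing $\eta=\frac{1}{50\sqrt{c^TMc}}$ so the decrement increases by at most $1/45$, and uses $\norm{H(x)^{-1}c}_x\le\pair{c,x}-\opt\le\valsup-\opt$ to show the final $\eta$ exceeds $\frac{6\theta(f)}{5\eps(\valsup-\opt)}$, at which point the standard gap bound $\pair{c,x}-\opt\le\frac{\theta(f)}{\eta}(1+\norm{x-z(\eta)}_{z(\eta)})$ closes the argument. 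Without replacing your centering step by such a phase-1 argument (or an equivalent device), your proof does not establish the stated iteration count.
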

We now reformulate the \piso program to
state a version which can be solved using
the \apxipm framework.
Consider points $(x,t) \in \Reals{n} \times \Reals{n}$,
and define a set 
  \[
  D_{G} = \setof{ (x,t) : \text{ for all } v \in V \, . \, \abs{x(v) - y(v)}^{p} - t(v) \leq 0 }.
  \]
%One can easily show that $D_{G}$ is convex. $\calI_{G}$ is convex as it is an intersection of half-spaces. Hence, $\calI_{G} \times \Reals{V}$ is convex.
% \begin{comment}
%   The set of points $(x,t)$ that satisfy a constraint $\abs{x(v) -
%     y(v)}^{p} - t(v) \leq 0$ is convex, because it is a sublevel set of a convex function.
%   $D_{G}$ is exactly the intersection of these convex sets, and hence
%   $D_{G}$ is also convex.
%   This means we can express program~\eqref{eq:isoprog} a minimization
%   of a linear function over a convex domain.
% \end{comment}
% The framework requires a bounded convex domain. We ensure boundedness
% by adding the constraint
  % To apply the framework, we must optimize over a bounded convex
  % domain.
  To ensure boundedness, as required by {\apxipm}, we add the constraint
  $\pair{\vecw,t} \leq \valsup$.
 \begin{definition}
   We define the domain
$
\bnddom = (\calI_{G} \times \Reals{n}) \intersect  D_{G} \intersect 
\setof{ (x,t) : \pair{\vecw,t} \leq \valsup }.
$
\end{definition} 
%One can easily show that 
The domain $\bnddom$ is convex, and allows us to reformulate
%We now introduce a version of 
program~\eqref{eq:isoprog} with
a linear objective:
  \begin{equation}
\label{eq:bndisoprog}
%\begin{array}{rl}
\underset{x,t}{\min}  \pair{\vecw,t} \quad \text{such that }  (x,t) \in \bnddom.
\end{equation}
 Our next lemma determines a choice of $K$ which suffices to ensure that programs~\eqref{eq:isoprog} and~\eqref{eq:bndisoprog} have the same optimum. The lemma is proven in Supplementary Material Section~\ref{sec:startpoint}.
 \begin{lemma}
   For all $\valsup \geq 3 n \wmaxp $, 
   $\bnddom$ is non-empty and
   bounded, and the optimum of program~\eqref{eq:bndisoprog} is $\optiso$.
  \label{lem:boundeddom}
\end{lemma}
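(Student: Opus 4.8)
The plan is to verify the three claims by short, direct estimates that use only the normalizations $y \in [0,1]^{n}$, $\min_{v}\vecw_{v} = 1$ (the smallest entry of $\vecw$ is $1$), and the hypothesis $\valsup \geq 3n\wmaxp$. For non-emptiness I would exhibit an explicit point built from a constant vector: since constant vectors are trivially isotonic, set $\bar x = \tfrac12\vecone \in \calI_{G}$ and $\bar t(v) = \abs{\bar x(v)-y(v)}^{p}$. As $\abs{\bar x(v)-y(v)} \leq 1$ we get $\bar t(v) \leq 1$, hence $\pair{\vecw,\bar t} \leq \sum_{v}\vecw_{v} \leq n\wmaxp \leq \valsup/3 < \valsup$, so $(\bar x,\bar t) \in \bnddom$. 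The same computation records the inequality $\optiso \leq \norm{\bar x-y}_{w,p}^{p} \leq n\wmaxp$, which I will reuse in the last step.

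For boundedness, take any $(x,t) \in \bnddom$ and bound $t$ first, then $x$. The constraints defining $D_{G}$ give $t(v) \geq \abs{x(v)-y(v)}^{p} \geq 0$ for every $v$, so, using $\vecw_{v} \geq 1$ and $\pair{\vecw,t} \leq \valsup$, we obtain $0 \leq t(v) \leq \vecw_{v}t(v) \leq \pair{\vecw,t} \leq \valsup$ coordinatewise. Substituting back, $\abs{x(v)-y(v)}^{p} \leq t(v) \leq \valsup$, hence $\abs{x(v)} \leq 1 + \valsup^{1/p}$. Thus $\bnddom$ lies in a bounded box; since it is an intersection of closed sets ($\calI_{G}$, $D_{G}$, and the halfspace $\pair{\vecw,t} \leq \valsup$ are all closed), it is in fact compact.

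For the optimum, one direction is immediate: any feasible $(x,t)$ has $x \in \calI_{G}$ and $\pair{\vecw,t} \geq \sum_{v}\vecw_{v}\abs{x(v)-y(v)}^{p} = \norm{x-y}_{w,p}^{p} \geq \optiso$, so the value of \eqref{eq:bndisoprog} is at least $\optiso$. For the reverse direction, let $x^{\star}$ be a minimizer of \eqref{eq:isoprog} --- one exists because $x \mapsto \norm{x-y}_{w,p}^{p}$ is continuous and coercive and $\calI_{G}$ is closed --- and set $t^{\star}(v) = \abs{x^{\star}(v)-y(v)}^{p}$. Then $x^{\star} \in \calI_{G}$, the $D_{G}$-constraints hold with equality, and $\pair{\vecw,t^{\star}} = \norm{x^{\star}-y}_{w,p}^{p} = \optiso \leq n\wmaxp \leq \valsup$ by the first step, so $(x^{\star},t^{\star}) \in \bnddom$ and attains objective value $\optiso$. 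Combining the two bounds, the optimum of \eqref{eq:bndisoprog} equals $\optiso$ and is attained.

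I do not expect a genuine obstacle in this lemma; every step is a one-line estimate. The only point worth a sentence of care is why the slack in the hypothesis --- the factor $3$, rather than the value $n\wmaxp$ that the argument actually consumes --- is harmless here: the surplus is not needed for this lemma but is presumably kept so that the explicit starting point constructed later for \piso has symmetry $\sym(\cdot,\bnddom)$ bounded below by a useful amount.
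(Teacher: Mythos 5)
Your proof is correct and follows essentially the same route as the paper's: exhibit a feasible point whose objective value is at most $n\wmaxp < \valsup$ (the paper uses the \feasiblestart point rather than a constant vector, but this is immaterial), derive the coordinatewise bounds $0 \le t(v) \le \valsup$ and $\abs{x(v)-y(v)} \le \valsup^{1/p}$ for boundedness, and observe that the constraint $\pair{\vecw,t}\le \valsup$ does not cut off the minimizer of the relaxation, so the optimum equals $\optiso$. Your explicit attainment argument for $x^{\star}$ is a small bonus of rigor over the paper's version, which tacitly assumes the relaxed program's optimum is attained.
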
 
The following result shows that for program~\eqref{eq:bndisoprog} we
can compute a good starting point for the path following IPM efficiently.
The algorithm \feasiblestart computes a starting point
in linear time
by running a topological sort on the vertices of the DAG $G$
and assigning values to $x$ according to the vertex order of the sort.
Combined with an appropriate choice of $t$,
this suffices to give a starting point with good symmetry.
The algorithm \feasiblestart is specified in more detail in
Supplementary Material Section~\ref{sec:startpoint},
together with a proof of the following lemma.
\begin{lemma}
\label{lem:startsym}
The algorithm \feasiblestart runs in time $O(m)$ and returns an initial point $(x_{0},t_{0})$
that is feasible, and   for $\valsup =  3 n \wmaxp,$
   satisfies
  $
  \sym((x_{0},t_{0}), \bnddom)
  \geq
  \frac{  1 }
  { 18 n^2 p \wmaxp }.
  $
\end{lemma}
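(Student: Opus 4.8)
The plan is to describe the starting point produced by \feasiblestart explicitly and then estimate its symmetry directly from the definition of $\sym$. After a topological sort of $G$, let $\pi\colon V\to\{1,\ldots,n\}$ be the resulting ranking, so that $\pi(u)<\pi(v)$ whenever $(u,v)\in E$; a natural choice is to set $x_0(v)=\pi(v)/(n+1)\in(0,1)$ and $t_0(v)=2$ for every $v$, which takes $O(n+m)=O(m)$ time. This point is feasible: on every edge $x_0(v)-x_0(u)\ge\tfrac1{n+1}>0$, so $x_0\in\calI_G$; since $x_0(v)$ and $y(v)$ both lie in $[0,1]$ with $x_0(v)$ strictly interior, $\abs{x_0(v)-y(v)}\le 1-\tfrac1{n+1}<1$ and hence $\abs{x_0(v)-y(v)}^{p}<1<2=t_0(v)$; and $\pair{\vecw,t_0}=2\pair{\vecw,\vecone}\le 2n\wmaxp<3n\wmaxp=\valsup$. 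Thus $(x_0,t_0)\in\bnddom$, with every defining inequality strict.

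To bound the symmetry I first record that $\bnddom$ is not too large: for any $(x,t)\in\bnddom$ we have $t\ge 0$ (as $t(v)\ge\abs{x(v)-y(v)}^{p}\ge 0$), so $0\le t(v)\le\pair{\vecw,t}\le\valsup=3n\wmaxp$ using $\vecw\ge\vecone$, and therefore $\abs{x(v)-y(v)}\le t(v)^{1/p}\le(3n\wmaxp)^{1/p}\le 3n\wmaxp$, giving $\abs{x(v)}\le 1+3n\wmaxp$ and $0\le x(v)-x(u)\le 7n\wmaxp$ on every edge. Now I use the characterization $\sym((x_0,t_0),\bnddom)=\sup\{\alpha\ge 0:\ (x_0,t_0)+\alpha\big((x_0,t_0)-(x,t)\big)\in\bnddom\ \text{for all }(x,t)\in\bnddom\}$, so it suffices to fix $\alpha=\tfrac1{18n^{2}\wmaxp}$ and check that $(x',t')=\big((1+\alpha)x_0-\alpha x,\ (1+\alpha)t_0-\alpha t\big)\in\bnddom$ for every $(x,t)\in\bnddom$. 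There are three families. For isotonicity, using $x_0(v)>x_0(u)$, $x'(v)-x'(u)\ge(x_0(v)-x_0(u))-\alpha(x(v)-x(u))\ge\tfrac1{n+1}-7\alpha n\wmaxp\ge 0$. For the constraint at $v$, $t'(v)=2+\alpha(2-t(v))\ge 2-3\alpha n\wmaxp\ge\tfrac{11}{6}$, while the triangle inequality gives $\abs{x'(v)-y(v)}\le(1+\alpha)\abs{x_0(v)-y(v)}+\alpha\abs{x(v)-y(v)}\le\big(1-\tfrac1{n+1}\big)+\alpha(1+3n\wmaxp)\le 1$, so $\abs{x'(v)-y(v)}^{p}\le 1\le t'(v)$. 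For the budget, $\pair{\vecw,t'}\le(1+\alpha)\pair{\vecw,t_0}\le 2(1+\alpha)n\wmaxp\le 3n\wmaxp$. Hence $(x',t')\in\bnddom$ and $\sym((x_0,t_0),\bnddom)\ge\tfrac1{18n^{2}\wmaxp}\ge\tfrac1{18n^{2}p\wmaxp}$.

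The delicate step is the second family: a perturbation of $x$ of size $\alpha\cdot\poly(n)\wmaxp$ is passed through a $p$-th power, and since $p$ may be as large as $\exp(n)$ a bound of the form $(\mathrm{const})^{p}$ would be worthless. The construction is arranged precisely to prevent this: scaling the topological ranks into the open interval $(0,1)$ keeps both $x_0(v)$ and $y(v)$ in $[0,1]$, so $\abs{x_0(v)-y(v)}$ stays below $1$ by an $\Omega(1/n)$ margin, which is exactly enough slack to keep $\abs{x'(v)-y(v)}\le 1$ after the reflection and hence its $p$-th power below the constant value of $t_0$. (Had $x_0$ instead been allowed to reach the endpoints of $[0,1]$, controlling the $p$-th power would have cost an extra factor of $p$, which is presumably where that factor in the stated bound comes from.) The whole argument is thus a three-way balancing act: the per-edge rank gap must be $\Omega(1/n)$ for isotonicity after reflection, $x_0$ must stay strictly interior so the $p$-th powers never exceed a constant, and $t_0$ must be $O(1)$ so that $\pair{\vecw,t_0}<\valsup=3n\wmaxp$. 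What remains is to specify \feasiblestart precisely and to track the constants in these three checks.
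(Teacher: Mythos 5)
Your argument is correct, but it takes a genuinely different route from the paper's, in two respects. First, the construction: the paper's \feasiblestart sets $x_{0}(v_{i}) = i/n$ (so $x_{0}$ can touch the endpoint $1$ of $[0,1]$) and $t_{0}(v) = \abs{x_{0}(v)-y(v)}^{p}+1$, whereas you rescale into the open interval and take $t_{0}\equiv 2$; both choices are feasible and both work, but yours keeps every constraint slack by a clean multiplicative margin. Second, the method: the paper bounds $\sym((x_{0},t_{0}),\bnddom)$ by the ratio $\inf_{q\in\partial\bnddom}\norm{q-(x_{0},t_{0})} / \sup_{r\in\partial\bnddom}\norm{r-(x_{0},t_{0})}$ in the norm $\norm{x}_{\infty}+\norm{t}_{\infty}$, and then does a case analysis over which constraint is active at the nearest boundary point; the case where the constraint $\abs{x(a)-y(a)}^{p}=t(a)$ is active is exactly where the $\nfrac{1}{(3p)}$ term (hence the factor $p$ in the final bound) enters, because with $\abs{x_{0}(a)-y(a)}$ possibly equal to $1$ a perturbation of order $\nfrac{1}{p}$ in $x$ can change the $p$-th power by a constant. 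You instead verify the reflection characterization of symmetry directly for $\alpha = \nfrac{1}{(18n^{2}\wmaxp)}$, and your diagnosis of where the factor $p$ comes from in the paper's bound is exactly right: your interior placement of $x_{0}$ makes the $p$-th-power constraint a non-issue and yields the slightly stronger bound $\nfrac{1}{(18n^{2}\wmaxp)}$, which you then correctly weaken to the stated one using $p\geq 1$. All the constant-tracking in your three checks is sound (I verified the isotonicity margin $\nfrac{1}{(n+1)} \geq 7\alpha n \wmaxp$, the bound $\abs{x'(v)-y(v)}\leq 1 \leq t'(v)$, and the budget check $1+\alpha\leq \nfrac{3}{2}$), and your use of $\vecw \geq \vecone$ and $t\geq 0$ to bound the diameter of $\bnddom$ matches the paper's Lemma~\ref{lem:boundeddom}. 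The only caveat is that the lemma is a statement about the paper's specific \feasiblestart, so strictly speaking you have proved the lemma for a variant of that algorithm; since the variant has the same $O(m)$ cost and at least as good a symmetry guarantee, this is a harmless substitution.
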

Combining standard results on self-concordant barrier functions with a barrier for $p$-norms developed by Hertog et~al.\ \cite{den_hertog_sufficient_1995}, we can show the following properties of a function $\bndfn$ whose exact definition is given in Supplementary Material Section~\ref{sec:barfn}.
  \begin{corollary}
 \label{cor:barrierfn}
 The function $\bndfn$ is a self-concordant barrier for $\bnddom$ and it has complexity parameter $\theta(\bndfn) = O(m)$. 
 Its gradient $g_{\bndfn}$  is computable in $O(m)$ time, and an implicit representation of the Hessian $H_{\bndfn}$ can be computed in $O(m)$ time as well.
 \end{corollary}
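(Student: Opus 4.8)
The plan is to realize $\bndfn$ as a sum of self-concordant barriers, one per family of constraints cutting out $\bnddom$, and then appeal to the standard calculus of self-concordant barriers. Write $\bnddom$ as the intersection of (i) the polyhedron $\calI_{G}\times\Reals{n}$, cut out by the $m$ inequalities $x_{u}\le x_{v}$, $(u,v)\in E$; (ii) the halfspace $\setof{(x,t):\pair{\vecw,t}\le\valsup}$; and (iii) for each $v\in V$, the set $\setof{(x,t):\abs{x(v)-y(v)}^{p}\le t(v)}$. For (i) take the standard logarithmic barrier $-\sum_{(u,v)\in E}\log(x_{v}-x_{u})$, self-concordant with complexity parameter $m$; for (ii) take $-\log(\valsup-\pair{\vecw,t})$, with parameter $1$. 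For (iii), $\abs{s}^{p}\le\tau$ is a two-dimensional power-cone slice, and \cite{den_hertog_sufficient_1995} provide an explicit self-concordant barrier $\phi_{p}(s,\tau)$ for $\setof{(s,\tau):\abs{s}^{p}\le\tau}$ with complexity parameter $O(1)$ independent of $p$, along with closed forms for its first and second derivatives; precomposing with the affine map $s\mapsto x(v)-y(v)$ preserves self-concordance and the parameter. Setting
\[
\bndfn(x,t) \;=\; -\sum_{(u,v)\in E}\log(x_{v}-x_{u}) \;-\; \log\!\big(\valsup-\pair{\vecw,t}\big) \;+\; \sum_{v\in V}\phi_{p}\big(x(v)-y(v),\,t(v)\big),
\]
the additivity rule gives $\bndfn\in\mathcal{SCB}$ for $\bnddom$ with $\theta(\bndfn)\le m+1+\sum_{v\in V}O(1)=O(m)$, using $m\ge n-1$. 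The one hypothesis of the additivity rule to check is that $\bnddom$ has nonempty interior; this follows from Lemma~\ref{lem:boundeddom} together with the strictly feasible point returned by \feasiblestart (Lemma~\ref{lem:startsym}), whose positive symmetry certifies that it lies in the interior of all three sets.

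For the complexity claims I would read off the structure of $g_{\bndfn}$ and $H_{\bndfn}$ term by term. The gradient of $-\sum_{(u,v)}\log(x_{v}-x_{u})$ is $B^{\mathsf{T}}d$, where $B$ is the $m\times n$ signed edge-vertex incidence matrix of $G$ and $d_{(u,v)}=-1/(x_{v}-x_{u})$; this costs $O(m)$, and the corresponding Hessian block is $B^{\mathsf{T}}\diag{r}B$ with $r_{(u,v)}=1/(x_{v}-x_{u})^{2}$, a graph-Laplacian-type matrix for which the pair $(r,B)$ --- equivalently the vector $r$ and the edge list of $G$ --- is an $O(m)$-size implicit representation computable in $O(m)$ time. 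The term $-\log(\valsup-\pair{\vecw,t})$ adds a gradient supported on the $t$-block, $\tfrac{1}{\valsup-\pair{\vecw,t}}\vecw$, and a rank-one Hessian $\tfrac{1}{(\valsup-\pair{\vecw,t})^{2}}\vecw\vecw^{\mathsf{T}}$, both formed in $O(n)$. Finally each $\phi_{p}(x(v)-y(v),t(v))$ depends only on the coordinates $(x(v),t(v))$, so $\sum_{v}\phi_{p}$ has a gradient computed coordinatewise in $O(n)$ time and a Hessian that is block diagonal with $n$ blocks of size $2\times 2$, assembled from the closed-form derivatives of $\phi_{p}$. Summing the three contributions, $g_{\bndfn}$ is computed in $O(m)$ time and the implicit representation of $H_{\bndfn}$ --- the pair $(r,B)$, the scalar controlling the rank-one term, and the $n$ diagonal $2\times 2$ blocks --- is produced in $O(m)$ time.

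The step I expect to be the main obstacle is the appeal to \cite{den_hertog_sufficient_1995}: one must verify that the barrier produced there for the $p$-power epigraph really does have complexity parameter bounded by an absolute constant (crucially with no dependence on $p$, which ranges unboundedly in our setting), and that it is self-concordant in precisely the form required for the additivity and affine-invariance rules to apply verbatim; extracting usable closed forms for its gradient and Hessian is part of the same task. By contrast, the incidence/Laplacian bookkeeping and the assembly of $g_{\bndfn}$ and $H_{\bndfn}$ are routine.
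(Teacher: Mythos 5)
Your proposal is correct and follows essentially the same route as the paper: decompose $\bndfn$ into the per-edge log barriers, the single log barrier for the value constraint, and the per-vertex Hertog et~al.\ barriers for the sets $\abs{x(v)-y(v)}^{p}\le t(v)$, then apply the standard additivity and affine-substitution rules to get $\theta(\bndfn)\le m+1+4n=O(m)$. The concern you flag about $p$-independence of the power-epigraph barrier is exactly what the paper's Theorem~\ref{thm:pbarrier} (quoted from Nemirovski's notes) settles, giving the explicit barrier $-\log(t^{2/p}-x^{2})-2\log t$ with parameter at most $4$; your term-by-term accounting of the gradient and Hessian matches the explicit forms the paper records in Section~\ref{sec:solver}.
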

The key reason we can use $\apxipm$ to give a fast algorithm for
Isotonic Regression is that we develop an efficient solver for
 linear equations in the Hessian of $\bndfn$.
The algorithm \hsol solves linear systems in Hessian matrices of the barrier function $\bndfn$.
The Hessian is composed of a structured main component plus a rank one matrix.
We develop a solver for the main component by doing a change of
variables to simplify its structure, and then factoring the matrix by
a block-wise $LDL^{\top}$-decomposition.  We can solve
straightforwardly in the $L$ and $L^{\top},$ and we show that the $D$
factor consists of blocks that are either diagonal or SDD, so we can
solve in this factor approximately using a nearly-linear time SDD
solver.
The algorithm \hsol is given in full in Supplementary Material Section~\ref{sec:solver},
along with a proof of the following result.
\begin{theorem}
For any instance of program~\eqref{eq:bndisoprog} given by some $(G,y)$,
at any point $z \in \bnddom$,
for any vector $a$,
$\hsol((G,y),z,\mu,a)$
returns a vector $b = M a$ for a symmetric linear operator $M$ satisfying
$
\nfrac{9}{10} \cdot H_{\bndfn}(z)^{-1} \preceq M \preceq
\nfrac{11}{10} \cdot H_{\bndfn}(z)^{-1}.
$
The algorithm fails with probability $< \mu$.
\hsol runs in time $\widetilde{O}(m \log n \log(1/\mu))$.
\label{thm:hessianapprox}
\end{theorem}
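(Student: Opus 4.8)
The plan is to reduce a solve in $H_{\bndfn}(z)$ to a constant number of approximate solves in a symmetric diagonally dominant (SDD) matrix, which we then handle with a nearly-linear time SDD solver. Recall from Corollary~\ref{cor:barrierfn} that $\bndfn$ is assembled from three barriers: the logarithmic barrier $-\sum_{(u,v)\in E}\log(x(v)-x(u))$ for $\calI_{G}$, a \emph{separable} $p$-norm barrier $\sum_{v}\phi_{v}(x(v)-y(v),t(v))$ for $D_{G}$ (using the construction of den Hertog et al.~\cite{den_hertog_sufficient_1995}, valid because $D_{G}$ is a product over $v$ of two-dimensional sets), and the single logarithmic barrier $-\log(\valsup-\pair{\vecw,t})$ for the bounding constraint. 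Hence $H_{\bndfn}(z)=H_{\mathrm{main}}(z)+u u^{\top}$ where $u=\nfrac{\vecw}{(\valsup-\pair{\vecw,t})}$ is rank one and $H_{\mathrm{main}}$ is the Hessian of the first two barriers. I would first build a routine \blksol for solving in $H_{\mathrm{main}}$ and then treat the rank-one term (the step \rankone) by the Sherman--Morrison identity: with $\tilde M$ the symmetric operator returned by \blksol, output $Ma$ for $M=\tilde M-\nfrac{(\tilde M u)(\tilde M u)^{\top}}{1+u^{\top}\tilde M u}=(\tilde M^{-1}+u u^{\top})^{-1}$, which costs two \blksol calls plus $O(n)$ work and preserves symmetry. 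The error analysis here is clean: if $(1-\eps')H_{\mathrm{main}}^{-1}\preceq\tilde M\preceq(1+\eps')H_{\mathrm{main}}^{-1}$ then, inverting (order-reversing) and adding $u u^{\top}$ to each term while using $(1+\eps')^{-1}u u^{\top}\preceq u u^{\top}\preceq(1-\eps')^{-1}u u^{\top}$, we get $(1+\eps')^{-1}H_{\bndfn}\preceq\tilde M^{-1}+u u^{\top}\preceq(1-\eps')^{-1}H_{\bndfn}$, and inverting again yields $(1-\eps')H_{\bndfn}^{-1}\preceq M\preceq(1+\eps')H_{\bndfn}^{-1}$. Thus the rank-one correction does not degrade the spectral constant, and it suffices to build \blksol with $\eps'=\nfrac1{10}$, which is exactly the guarantee required by \apxipm (Theorem~\ref{thm:pathalg}).

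For \blksol I would apply the change of variables (essentially the translation $x\mapsto x-y$, together with a rescaling that makes the per-vertex barrier blocks transparent) that exposes the block structure of $H_{\mathrm{main}}$: the $p$-norm barrier contributes, for each vertex $v$, a $2\times 2$ positive definite block in the variables $(x(v),t(v))$, while the $\calI_{G}$-barrier contributes a weighted graph Laplacian $L$ (edge weights $\nfrac1{(x(v)-x(u))^{2}}$) acting only on the $x$-coordinates. Ordering variables as $(x,t)$, $H_{\mathrm{main}}=\bigl(\begin{smallmatrix}L+D_{xx}&D_{xt}\\ D_{xt}&D_{tt}\end{smallmatrix}\bigr)$ with $D_{xx},D_{xt},D_{tt}$ diagonal and $D_{tt}\succ 0$. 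Eliminating the $t$-block gives a block $LDL^{\top}$ factorization whose triangular factors are diagonal-plus-identity (applied in $O(n)$ time) and whose middle factor is $\mathrm{blockdiag}\bigl(S,\,D_{tt}\bigr)$ with $S:=L+D_{xx}-D_{xt}D_{tt}^{-1}D_{xt}$. The key structural claim, which I would isolate as a lemma, is that $S$ is SDD (in fact SDDM): entrywise, $D_{xx}-D_{xt}D_{tt}^{-1}D_{xt}$ is the Schur complement of a $2\times 2$ positive definite barrier block onto its first coordinate, hence a strictly positive diagonal, and adding it to the Laplacian $L$ yields a strictly diagonally dominant, invertible matrix. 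So \blksol solves in $D_{tt}$ exactly, solves in $S$ to relative error $\eps'=\nfrac1{10}$ with a nearly-linear time SDD solver~\cite{KoutisMP11,CohenSTOC} (wrapped, if needed, in a few preconditioned Chebyshev iterations so that the output is a \emph{symmetric} operator $\tilde Z$ with $(1-\eps')S^{-1}\preceq\tilde Z\preceq(1+\eps')S^{-1}$), and conjugates by the triangular factors. Since conjugation preserves the Loewner order and the $D_{tt}$-solve is exact, the resulting symmetric $\tilde M$ satisfies $(1-\eps')H_{\mathrm{main}}^{-1}\preceq\tilde M\preceq(1+\eps')H_{\mathrm{main}}^{-1}$, feeding the previous paragraph.

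For the failure probability and running time: by Corollary~\ref{cor:barrierfn}, forming the $2\times 2$ blocks, the Laplacian edge list for $S$, and the vector $u$ takes $O(m)$ time; the triangular solves take $O(n)$ time; and each SDD solve takes $\widetilde O(m\log n)$ time. The SDD solver is randomized and may fail with constant probability, but the residual $\norm{S\tilde Z b-b}$ is checkable in $O(m)$ time, so repeating $O(\log(1/\mu))$ times and keeping the first run that certifies accuracy $\eps'$ drives the failure probability below $\mu$ at the cost of a $\log(1/\mu)$ factor; since \hsol makes $O(1)$ such solves, the total is $\widetilde O(m\log n\log(1/\mu))$. The step I expect to require the most care is the structural lemma that the eliminated block $S$ is SDD --- this is precisely where the explicit form of the den Hertog $p$-norm barrier enters and where the change of variables is chosen so that its interaction with the isotonic Laplacian is transparent --- together with the standard-but-fiddly step of converting an SDD solver's relative-error guarantee into a two-sided spectral bound for a symmetric operator, and verifying that the bit-length blowup (parameters bounded by $\exp(n)$) is absorbed into the $\widetilde O(\cdot)$.
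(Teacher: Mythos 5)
Your proposal is correct, and for the structural core of \blksol it matches the paper: the paper also orders variables so that the $p$-norm barrier contributes a diagonal $t$-block $T$ and cross-terms $C$, eliminates the $t$-variables by a block $LDL^{\top}$ factorization, observes that the eliminated diagonal $R-CT^{-1}C^{\top}$ is strictly positive because it is the Schur complement of a positive definite matrix, concludes that the resulting Schur complement onto the $x$-variables is a (principal minor of a) Laplacian plus positive diagonal and hence PD and SDD, and then invokes the SDD solver of Theorem~\ref{thm:sddsol} and conjugates by the triangular factors. (The paper packages the translation by $y$ via an extended graph $\widehat{G}$ and its incidence matrix $B$, writing $S=Q^{\top}B(R-CT^{-1}C^{\top})B^{\top}Q$; your ``Laplacian plus diagonal'' description is the same matrix.) Where you genuinely diverge is the rank-one step. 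The paper's Lemma~\ref{lem:rank1sol} treats $\rankone$ as a perturbation of the exact Sherman--Morrison formula and, via a direct computation (Claims~\ref{clm:denomapx} and~\ref{clm:crossapx}), shows the spectral error degrades from $\tau$ to $5\tau$; this forces $\blksol$ to be run at accuracy $\tau=\nfrac{1}{50}$ to end at $\nfrac{1}{10}$. You instead observe that the operator computed by \rankone is \emph{exactly} $(\tilde M^{-1}+uu^{\top})^{-1}$, and that the two-sided Loewner bound transfers losslessly: from $(1-\eps')H_{\odmfn}^{-1}\preceq\tilde M\preceq(1+\eps')H_{\odmfn}^{-1}$ one gets $(1+\eps')^{-1}H_{\odmfn}\preceq\tilde M^{-1}\preceq(1-\eps')^{-1}H_{\odmfn}$, hence $(1+\eps')^{-1}H_{\bndfn}\preceq\tilde M^{-1}+uu^{\top}\preceq(1-\eps')^{-1}H_{\bndfn}$, and inverting again gives $(1-\eps')H_{\bndfn}^{-1}\preceq M\preceq(1+\eps')H_{\bndfn}^{-1}$. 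This is a cleaner and strictly stronger analysis of the same algorithmic step: it lets you run \blksol at $\eps'=\nfrac{1}{10}$ directly and would simplify the paper's proof considerably.

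One minor caveat: your fallback of boosting the SDD solver by checking the residual $\norm{S\tilde Z b-b}$ and retrying does not by itself certify the \emph{operator-level} two-sided bound $(1-\eps')S^{-1}\preceq\tilde Z\preceq(1+\eps')S^{-1}$ that the IPM analysis needs ($M$ must be a fixed symmetric operator with a Loewner guarantee, not merely an accurate solution on one right-hand side). This is moot here because the solver cited in Theorem~\ref{thm:sddsol} already returns a symmetric operator with failure probability $\mu$ at a $\log(1/\mu)$ cost, which is how the paper obtains the stated failure probability and running time.
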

These are the ingredients we need to prove our main result on solving 
\piso. The algorithm \isoipm is simply \apxipm instantiated to solve
program~\eqref{eq:bndisoprog},
with an appropriate choice of parameters.
We state \isoipm informally as Algorithm~\ref{alg:isoipmSketch} below.
\isoipm is given in full as Algorithm~\ref{alg:isoipm} in Supplementary Material
  Section~\ref{sec:apxipm}.
\begin{proofof}{of Theorem~\ref{thm:isoipm}}
   \isoipm uses the symmetry lower bound $s =
  \frac{1}{18 n^2 p \wmaxp}$, the value upper bound $\valsup = 3 n
  \wmaxp$, and the error parameter $\eps = \frac{\delta}{\valsup}$
  when calling \apxipm.
  By Corollary~\ref{cor:barrierfn},
  the barrier function $\bndfn$ used by \isoipm has complexity parameter $\theta(\bndfn) \leq O(m)$.
  By Lemma~\ref{lem:startsym} the starting point $(x_{0},t_{0})$
  computed by \feasiblestart and used by \isoipm is feasible
  and has symmetry $\sym(x_{0}, \bnddom) \geq \frac{1}{18 n^2 p \wmaxp}$.

By Theorem~\ref{thm:pathalg} the point $(x_{\text{apx}},t_{\text{apx}})$ output by \isoipm satisfies
$
\frac{\pair{\vecw,t_{\text{apx}}}  - \opt}{\valsup - \opt} \leq \epsilon,
$
where \opt is the optimum of program~\eqref{eq:bndisoprog},
and $\valsup = 3n\wmaxp$ is the value used by \isoipm for the constraint $\pair{\vecw,t} \leq \valsup$,
  which is an upper bound on the supremum of objective values of feasible points of program~\eqref{eq:bndisoprog}.
 By Lemma~\ref{lem:boundeddom}, $\opt = \optiso$.
Hence,
$
\norm{y - x_{\text{apx}}}_{p}^{p}
\leq \pair{\vecw,t_{\text{apx}}}  \leq \opt + \epsilon \valsup  = \optiso + \delta.
$

  Again, by Theorem~\ref{thm:pathalg}, the number of calls to $\hsol$ by
  \isoipm is bounded by
  \begin{align*}
  O(T)
   \leq
   O\left( \sqrt{\theta(\bndfn)} \log\left( \nfrac{\theta(\bndfn)}{\epsilon \cdot s} \right) \right)
   \leq
   O\left( \sqrt{m} \log\left( \nfrac{n p \wmaxp}{\delta} \right) \right).
  \end{align*}
  Each call to $\hsol$ fails with probability $< n^{-3}.$ Thus, by a
  union bound, the probability that some call to $\hsol$ fails is upper bounded by
  $O(\sqrt{m} \log(np\wmaxp / \delta)) / n^3 = O(1/n)$.
The algorithm uses
$O\left( \sqrt{m} \log\left( \nfrac{n p \wmaxp}{\delta} \right)
\right)$
calls to $\hsol$ that each take time $\widetilde{O}(m \log^2 n)$, as
$\mu = n^3$.  Thus the total running time is
$\widetilde{O}\left(m^{1.5} \log^2 n \log\left( \nfrac{n p \wmaxp}{\delta}
  \right)\right)$.
\end{proofof}
\begin{mdframed}
\vspace{\algtopspace}
\captionof{table}{\label{alg:isoipmSketch} Sketch of Algorithm \isoipm \hfill \hfill}
\begin{tight_enumerate}
    \item
    Pick a starting point $(x,t)$ using the \feasiblestart algorithm
    \item
      \FOR $r = 1,2$
       \item \hspace{\pgmtab}
         \IF r = 1 \THEN $\gamma \assign -1;  \rho \assign 1; c = -$ gradient of $f$ at $(x,t)$
         \item \hspace{\pgmtab}
          \ELSE $\gamma
    \assign 1;  \rho \assign 1/\poly(n); c = (0,\vecw)$
    \item \hspace{\pgmtab} 
      \FOR $i \assign 1,\ldots, C_{1}m^{0.5} \log m  :$
    \item \hspace{\pgmtab} \hspace{\pgmtab}
      $\rho \assign \rho \cdot  (1 + \gamma C_{2}m^{-0.5})$
    \item \hspace{\pgmtab} \hspace{\pgmtab}
        Let $H,g$ be the Hessian and gradient of  $f_{c,\rho}$ at
        $x$
    \item \hspace{\pgmtab} \hspace{\pgmtab}
        Call \hsol to compute $z \approx H^{-1} g$
    \item \hspace{\pgmtab} \hspace{\pgmtab}
        Update $x \assign x - z$
    \item Return $x$.
    \end{tight_enumerate}
\end{mdframed}

%%% Local Variables:
%%% mode: latex
%%% TeX-master: "isotonic-nips"
%%% End:

\section{Algorithms for $\ell_{\infty}$ and Strict Isotonic Regression}
\label{sec:inf}
%%% Local Variables:
%%% mode: latex
%%% TeX-master: "isotonic-nips"
%%% End:

We now reduce $\ell_{\infty}$ Isotonic Regression and Strict Isotonic
Regression to the Lipschitz Learning problem, as defined
in~\cite{KyngRSS15}. Let $G=(V,E,\len)$ be any DAG with non-negative edge lengths
$\len: E \to \rea_{\geq 0}$, and $y: V \to \rea \cup \{ * \}$ a
partial labeling. We think of a partial labeling as a function that
assigns real values to a subset of the vertex set $V$.  We call such a
pair $(G,y)$ a \textbf{partially-labeled DAG}.  For a complete
labeling $x : V \to \rea,$ define the gradient on an edge
$(u,v) \in E$ due to $x$ to be
$\grad^{+}_{G}[x](u,v) \defeq \max \left \{
  \frac{x(u)-x(v)}{\len(u,v)}, 0 \right \}. $
If $\len(u,v) = 0$, then $\grad^{+}_{G}[x](u,v) =0$ unless
$x(u) > x(v)$, in which case it is defined as $+\infty$. Given a
partially-labelled DAG $(G,y)$, we say that a complete assignment $x$
is an \textbf{inf-minimizer} if it extends $y,$ and for all other
complete assignments $x'$ that extends $y$ we have
$$\max_{(u,v) \in E}\grad^{+}_{G}[x](u,v) \leq \max_{(u,v) \in E} \grad^{+}_{G}[x'](u,v) .$$ 
Note that when $\len = 0$, then
$\max_{(u,v) \in E}\grad^{+}_{G}[x](u,v) < \infty$ if and only if $x$
is isotonic on $G$.

%\paragraph{Lexicographic Ordering.}

Suppose we are interested in Isotonic Regression on a DAG $G(V,E)$
under $\norm{\cdot}_{w,\infty}$. To reduce this problem to that of
finding an inf-minimizer, we add some auxiliary nodes and edges to
$G$. Let $V_L,V_{R}$ be two copies of $V$. That is, for every vertex
$u \in V$,  add a vertex $u_L$ to $V_L$ and a vertex $u_R$ to
$V_R$. Let $E_L = \{ (u_L,u) \}_{u \in V}$ and
$E_R = \{ (u,u_R) \}_{u \in V}$. We then let
$\len'(u_L,u) = \nfrac{1}{w(u)}$ and $\len'(u,u_R) = \nfrac{1}{w(u)}$.
All other edge lengths are set to $0$. Finally, let
$G' = (V \cup V_L \cup V_R,E \cup E_L \cup E_R,\len').$ The partial
assignment $y'$ takes real values only on the the vertices in
$V_L \cup V_R.$ For all $u \in V$, $y'(u_L) := y(u)$,
$y'(u_R) := y(u)$ and $y'(u) := *.$ $(G^{\prime},y^{\prime})$ is our
partially-labeled DAG. Observe that $G'$ has $n'=3n$ vertices and $m'
= m+2n$ edges.
\begin{lemma}
\label{lem:infEq}
Given a DAG $G(V,E),$ a set of observations $y \in \rea^{V},$ and
weights $w,$ construct $G'$ and $y'$ as above. Let $x$ be an
inf-minimizer for the partially-labeled DAG $(G',y')$. Then,
$x \ |_{V}$ is the Isotonic Regression of $y$ with respect to $G$
under the norm $\norm{\cdot}_{w, \infty}.$
\end{lemma}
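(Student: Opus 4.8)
The key observation is that a complete labeling $x'$ of $G'$ extending $y'$ is determined entirely by its restriction $x'|_{V}$, since $y'$ already fixes every coordinate in $V_L \cup V_R$, and conversely every assignment $x : V \to \rea$ arises this way. So the plan is to rewrite the inf-minimization objective $\max_{(u,v) \in E(G')} \grad^{+}_{G'}[x'](u,v)$ purely in terms of $x'|_{V}$, show it equals $\norm{x'|_{V} - y}_{w,\infty}$ when $x'|_{V} \in \calI_{G}$ and $+\infty$ otherwise, and then read off the statement.

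First I would split $E(G') = E \cup E_L \cup E_R$ and compute $\grad^{+}_{G'}[x']$ on each of the three edge types. The original edges $(u,v) \in E$ have $\len'(u,v) = 0$, so by the convention for zero-length edges $\grad^{+}_{G'}[x'](u,v)$ is $0$ if $x'(u) \le x'(v)$ and $+\infty$ otherwise; hence $\max_{(u,v)\in E}\grad^{+}_{G'}[x'](u,v)$ equals $0$ exactly when $x'|_{V} \in \calI_{G}$, and $+\infty$ otherwise. For $(u_L,u) \in E_L$, using $x'(u_L) = y(u)$ and $\len'(u_L,u) = \nfrac{1}{w(u)}$, we get $\grad^{+}_{G'}[x'](u_L,u) = \max\{w(u)(y(u)-x'(u)),\,0\}$, and symmetrically $\grad^{+}_{G'}[x'](u,u_R) = \max\{w(u)(x'(u)-y(u)),\,0\}$ for $(u,u_R)\in E_R$. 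Taking the maximum over this pair of edges and pulling out the positive factor $w(u)$ gives $w(u)\,\abs{x'(u)-y(u)}$, using the identity $\max\{\max\{a,0\},\max\{-a,0\}\} = \abs{a}$; maximizing over $u \in V$ yields $\norm{x'|_{V} - y}_{w,\infty}$. Combining the three cases,
\[
\max_{(u,v) \in E(G')} \grad^{+}_{G'}[x'](u,v) =
\begin{cases}
\norm{x'|_{V} - y}_{w,\infty}, & x'|_{V} \in \calI_{G},\\
+\infty, & \text{otherwise.}
\end{cases}
\]

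Finally I would conclude. Since $G$ is a DAG, $\calI_{G} \neq \emptyset$ (any constant assignment is isotonic), so some extension of $y'$ has finite objective; therefore any inf-minimizer $x$ for $(G',y')$ also has finite objective, which by the displayed identity forces $x|_{V} \in \calI_{G}$ and makes the objective equal $\norm{x|_{V} - y}_{w,\infty}$. For every assignment $x' \in \calI_{G}$, extend it to a labeling of $G'$ by keeping the $y'$-values on $V_L \cup V_R$; the inf-minimizer property then gives $\norm{x|_{V} - y}_{w,\infty} \le \norm{x' - y}_{w,\infty}$. Hence $x|_{V}$ minimizes $\norm{\cdot - y}_{w,\infty}$ over $\calI_{G}$, i.e.\ it is an Isotonic Regression of $y$ with respect to $G$ under $\norm{\cdot}_{w,\infty}$. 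I do not expect a serious obstacle here: the argument is essentially an unwinding of definitions, and the only delicate points are the bookkeeping for zero-length edges together with the $\pm\infty$ convention, the elementary identity $\max\{\max\{a,0\},\max\{-a,0\}\}=\abs{a}$, and the (easy) observation that $\calI_{G}$ is nonempty, which is what allows optimality to transfer across the reduction.
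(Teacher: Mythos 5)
Your proposal is correct and follows essentially the same route as the paper's proof: decompose the edges of $G'$ into $E$, $E_L$, $E_R$, observe that the zero-length edges force isotonicity on $V$ (else the objective is $+\infty$), and that the only nonzero gradients on an isotonic extension come from $E_L \cup E_R$ and recover $\norm{x|_V - y}_{w,\infty}$. Your write-up is somewhat more explicit about the bijection between extensions of $y'$ and assignments on $V$ and about the transfer of optimality, but the argument is the same.
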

\vspace{-10pt}
\begin{proof}
  We note that since the vertices corresponding to $V$ in $(G',y')$
  are connected to each other by zero length edges,
  $\max_{(u,v) \in E}\grad^{+}_{G}[x](u,v) < \infty$ iff $x$ is
  isotonic on those edges. Since $G$ is a DAG, we know that there are
  isotonic labelings on G. When $x$ is isotonic on vertices
  corresponding to $V$, gradient is zero on all the edges going in
  between vertices in $V$. Also, note that every vertex $x$
  corresponding to $V$ in $G'$ is attached to two auxiliary nodes
  $x_L \in V_L, x_R \in V_R$. We also have $y'(x_L) = y'(x_R) = y(x).$
  Thus, for any $x$ that extends $y$ and is Isotonic on $G',$ the only
  non-zero entries in $\grad^{+}$ correspond to edges in $E_{R}$ and
  $E_{L},$ and thus
$$\max_{(u,v) \in E'}\grad^{+}_{G'}[x](u,v) =  \max_{u \in V} w_{u}
\cdot |y(u) - x(u)| = \norm{x-y}_{w,\infty}. \vspace{-15pt}
$$ 
\end{proof}
\vspace{-10pt}
Algorithm {\compinfmin} from~\cite{KyngRSS15} is proved to compute the
inf-minimizer, and is claimed to work for directed graphs (Section
5,~\cite{KyngRSS15}). We exploit the fact that Dijkstra's algorithm in
{\compinfmin} can be implemented in $O(m)$ time on DAGs using a
topological sorting of the vertices, giving a linear time algorithm
for computing the inf-minimizer. Combining it with the reduction given
by the lemma above, and observing that the size of $G'$ is $O(m+n),$
we obtain Theorem~\ref{thm:intro:inf}. A complete description of the
modified {\compinfmin} is given in Section~\ref{sec:algorithms}. We
remark that the solution to the $\ell_{\infty}$-Isotonic Regression that
we obtain has been referred to as {\sc Avg} $\ell_{\infty}$ Isotonic
Regression in the literature~\cite{StoutInfGeneral}. It is easy to
modify the algorithm to compute the {\sc Max, Min} $\ell_{\infty}$
Isotonic Regressions.  Details are given in Section~\ref{sec:appLex}.

For Strict Isotonic Regression, we define the lexicographic
ordering. Given $r \in \rea^m,$ let $\pi_r$ denote a permutation that
sorts $r$ in non-increasing order by absolute value, \emph{i.e.},
$\forall i \in [m-1], |r(\pi_r(i))| \ge |r(\pi_r(i+1))|.$
Given two vectors $r, s \in \rea^m,$ we write $r \lexless s$ to indicate
that $r$ is smaller than $s$ in the \textbf{lexicographic ordering} on
sorted absolute values,  \emph{i.e.}
\begin{align*}
  & \exists j \in [m],  \abs{r(\pi_{r}(j))} < \abs{s(\pi_{s}(j))} \text{ and } \forall i\in[j-1], \abs{r(\pi_{r}(i))} = \abs{s(\pi_{s}(i))} \\
  \text{ or } 
  & \forall i \in [m], \abs{r(\pi_{r}(i))} = \abs{s(\pi_{s}(i))}.
\end{align*}
Note that it is possible that 
  $r \lexless s$ and $s \lexless r$ while $r \neq s$. 
It is a
total relation: for every $r$ and $s$ at least one of
$r \lexless s$ or $s \lexless r$ is true.

Given a partially-labelled DAG $(G,y)$, we say that a complete
assignment $x$ is a \textbf{lex-minimizer} if it extends $y$ and for
all other complete assignments $x'$ that extend $y$ we have
$\grad^{+}_{G} [x] \lexless \grad^{+}_{G}
[x'].$ %\Anote{Define lex ordering?} 
Stout~\cite{StoutStrict12} proves that computing the Strict Isotonic
Regression is equivalent to finding an Isotonic $x$ that minimizes
$z_{u} = w_{u}\cdot(x_{u}-y_{u})$ in the lexicographic ordering. With
the same reduction as above, it is immediate that this is equivalent
to minimizing $\grad^{+}_{G'}$ in the lex-ordering.
  \begin{lemma}
\label{lem:lexEq}
Given a DAG $G(V,E),$ a set of observations $y \in \rea^{V},$ and
weights $w,$   
% instance of the isotonic problem $(G,y,w)$, 
construct $G'$ and $y'$ as above. Let $x$ be the lex-minimizer for the
partially-labeled DAG $(G',y')$. Then, $x \ |_{V}$ is the Strict Isotonic
Regression of $y$ with respect to $G$ with weights $w.$
\end{lemma}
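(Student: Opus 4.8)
The plan is to reuse, from the proof of Lemma~\ref{lem:infEq}, the edge-by-edge description of $\grad^{+}_{G'}$, but now retaining the full vector of gradient values rather than only its maximum, and then to invoke Stout's characterization~\cite{StoutStrict12} of the Strict Isotonic Regression as the isotonic labeling that minimizes the vector $z$ given by $z_{u} = w_{u}(x(u)-y(u))$ in the lexicographic order $\lexless$.

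The key step is the following. Let $x$ be any complete labeling of $G'$ that extends $y'$ and has $\max_{(u,v)\in E'}\grad^{+}_{G'}[x](u,v) < \infty$; as in Lemma~\ref{lem:infEq}, this holds exactly when $x|_{V}$ is isotonic on $G$, and then the coordinates on $V_{L}\cup V_{R}$ are forced to $x(u_{L}) = x(u_{R}) = y(u)$. For such $x$ the gradient vanishes on every edge of $E$, while on the two auxiliary edges incident to a vertex $u \in V$ we get $\grad^{+}_{G'}[x](u_{L},u) = \max\{w_{u}(y(u)-x(u)),\,0\}$ and $\grad^{+}_{G'}[x](u,u_{R}) = \max\{w_{u}(x(u)-y(u)),\,0\}$. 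Exactly one of these two numbers equals $|z_{u}| = w_{u}|x(u)-y(u)|$ and the other equals $0$ (both are $0$ when $z_{u} = 0$). Hence the nonzero entries of $\grad^{+}_{G'}[x] \in \rea^{m'}$, counted with multiplicity, are precisely the nonzero entries of $z \in \rea^{n}$; equivalently, sorting $\grad^{+}_{G'}[x]$ by non-increasing absolute value produces the sorted-by-absolute-value sequence of $z$ followed by $m'-n$ zeros.

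Since $\lexless$ depends only on the sorted-by-absolute-value sequences and is unchanged by appending equally many trailing zeros, it follows that for any two labelings $x,x'$ extending $y'$ with finite gradient, $\grad^{+}_{G'}[x] \lexless \grad^{+}_{G'}[x']$ if and only if $z(x) \lexless z(x')$. Next, if $x'$ extends $y'$ but $x'|_{V}$ is not isotonic on $G$, then some zero-length edge of $E$ contributes a $+\infty$ entry to $\grad^{+}_{G'}[x']$, so any finite-gradient labeling $x$ strictly lex-beats it; such $x$ exist because $G$ is a DAG, so the lex-minimizer of $(G',y')$ necessarily restricts to an isotonic labeling on $G$. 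Combining the two observations, a labeling $x$ is a lex-minimizer of $(G',y')$ if and only if $x|_{V}$ is isotonic on $G$ and minimizes $z$ in the $\lexless$-order over all isotonic labelings of $G$ (the auxiliary coordinates being pinned by $y$, so $x$ is determined by $x|_{V}$). By Stout's theorem this is exactly the Strict Isotonic Regression of $y$ with respect to $G$ and weights $w$, proving the lemma; uniqueness of the lex-minimizer follows from the known uniqueness of the Strict Isotonic Regression.

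The only point requiring care is the dimension mismatch between $\grad^{+}_{G'}[x]\in\rea^{m'}$ and $z\in\rea^{n}$: once one checks that $\lexless$-comparisons are insensitive to padding with trailing zeros, everything else is a direct translation through the reduction, entirely parallel to the $\ell_{\infty}$ case handled in Lemma~\ref{lem:infEq}.
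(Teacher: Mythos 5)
Your proposal is correct and follows exactly the route the paper intends: the paper gives no explicit proof of Lemma~\ref{lem:lexEq}, asserting only that under the same reduction as Lemma~\ref{lem:infEq} the claim is ``immediate'' from Stout's characterization of Strict Isotonic Regression as lexicographic minimization of $z_u = w_u(x_u - y_u)$. Your writeup supplies precisely the details the paper elides --- the identification of the nonzero entries of $\grad^{+}_{G'}[x]$ with those of $z$, the insensitivity of $\lexless$ to padding by equally many trailing zeros, and the elimination of non-isotonic labelings via the $+\infty$ entries on zero-length edges --- and all of these steps check out.
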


\begin{comment}
%
\Rnote{Not sure what we should be saying here? Should we explain that it was
  claimed in KRSS15 that the lex-alg works for directed graphs with
  strictly positive edge lengths, but that we for completeness extend
  the algorithm to work on DAGs with zero edge length allowed?}
%

%OLD TEXT
It is proved in~\cite{KyngRSS15} that the lex-minimizer can be
computed with $n$ calls to inf-minimization, which given our
observation above, implies Theorem~\ref{thm:intro:strict}. For more
details, see Section~\ref{sec:appLex}.
\end{comment}
As for inf-minimization, we give a modification of the algorithm
{\complexmin} from~\cite{KyngRSS15} that computes the lex-minimizer in
$O(mn)$ time. The algorithm is described in
Section~\ref{sec:algorithms}. Combining this algorithm with the reduction from
Lemma~\ref{lem:lexEq}, we can compute the Strict Isotonic Regression
in $O(m'n') = O(mn)$ time, thus proving Theorem~\ref{thm:intro:strict}.
% \begin{proofof}{of Theorem~\ref{thm:intro:strict}}
% Given a DAG $G(V,E),$ a set of observations $y \in \rea^{V},$ and
% weights $w,$  we construct $G'$ and $y'$ as above.
% Let $n'$ and $m'$ be the
% number of vertices and edges of $G'$ respectively.
% In Section~\ref{sec:appLex}, we prove Theorem~\ref{thm:complexmin}
% which shows that we can compute a lex-minimizer $x$ of the partially labeled DAG
% $(G',y')$ in $O(m'n') = O(mn)$ expected time using the \complexmin\, algorithm
% described in that section. We remark that the theorem applies to
% well-posed instances of the DAG lex-minimization problem as defined in
% Section~\ref{sec:appLex}, and that $(G',y')$ is a
% well-posed instance as every unlabeled vertex lies on a directed path
% between labeled terminals. By Lemma~\ref{lem:lexEq} it follows that  $x \ |_{V}$ is the Strict isotonic
% regression of $y$ with respect to $G$ and the weights $w$.
% \end{proofof}

%%% Local Variables:
%%% mode: latex
%%% TeX-master: "isotonic-nips"
%%% End:

\smallskip {\bf Acknowledgements.} 
We thank Sabyasachi Chatterjee for
introducing the problem to us, and Daniel Spielman for his advice,
comments, and discussions. We also thank Quentin Stout for his
suggestions on a previous version of the manuscript, and anonymous
reviewers for their comments.
\renewcommand\refname{{References}} \bibliographystyle{unsrt}
{\small\bibliography{papers}}
\newpage
\appendix
\section{IPM Definitions and Proofs}
\label{sec:ipmsuppl}

\subsection{Definitions}
\label{sec:ipmdefs}
\Snote{TODO: do we also need to define spectral order?}

Given a positive definite  $n \times n$ matrix $A$, we define the norm $\norm{ \cdot }_{A}$ by
\[
  \norm{ x }_{A} \defeq \sqrt{ x^T A x }.
\]
Given a twice differentiable function $f$ with domain $D_{f}$, which has positive definite Hessian $H(x)$ at some $x \in D_{f}$, we define
\[
\norm{ y }_{x} \defeq  \norm{ y }_{H(x)},
\]
and when $M$ is a matrix, let $\norm{M}_{x}$ denote the corresponding induced matrix norm.

We let $B_{x}(y,r)$ denote the open ball centered at $y$ of radius $r$ in the $\norm{ \cdot }_{x}$ norm.

Again, suppose $f$ is a twice differentiable convex function with Hessian $H$,
defined on a domain $D_{f}$.
If for all $x \in D_{f}$ we have
\[
B_{x}(x,1) \subseteq D_{f},
\]
and for all $y \in B_{x}(x,1)$ and all $v \neq 0$ we have 
\[
1 - \norm{y - x}_{x}
\leq
\frac{\norm{v}_{y}}{\norm{v}_{x}}
\leq \frac{1}{1 - \norm{y - x}_{x}}, 
\]
then we say the function is self-concordant. We denote the set of self-concordant functions by $\mathcal{SC}$.
A key theorem about self-concordant functions is the following (Theorem 2.2.1 of Renegar~\cite{renegar_mathematical_2001}).
\begin{theorem}
Suppose $f$ is a twice differentiable function with Hessian $H$,
defined on a domain $D_{f}$,
and for all $x \in D_{f}$ we have
\[
B_{x}(x,1) \subseteq D_{f},
\]
Then $f \in \mathcal{SC}$
iff
\[
\norm{H(x)^{-1}H(y)}_{x}, \norm{H(x)^{-1}H(y)}_{x} \leq \frac{1}{(1 - \norm{y - x}_{x})^{2}}.
\]
Also $f \in \mathcal{SC}$
iff
\[
\norm{I - H(x)^{-1}H(y)}_{x}, \norm{I - H(x)^{-1}H(y)}_{x} \leq \frac{1}{(1 - \norm{y - x}_{x})^{2}} - 1.
\]
\label{thm:schess}
\end{theorem}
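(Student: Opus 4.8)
The equivalence is pure linear algebra once the hypothesis $B_x(x,1)\subseteq D_f$ is taken as given (it is the only analytic ingredient, and it is assumed throughout the statement), so the task reduces to rewriting the pointwise Hessian-ratio bound in the definition of $\mathcal{SC}$ in spectral terms. Fix $x\in D_f$, $y\in B_x(x,1)$, and abbreviate $A=H(x)$, $B=H(y)$, $r=\norm{y-x}_x<1$. First I would record the dictionary: $\norm v_y^2/\norm v_x^2 = v^{\top}Bv / v^{\top}Av$ is a generalized Rayleigh quotient, so its supremum over $v\neq 0$ is $\lambda_{\max}(A^{-1}B)$ and its infimum is $\lambda_{\min}(A^{-1}B)$; moreover, for any matrix $M$ the operator norm induced by $\norm\cdot_x$ satisfies $\norm M_x = \norm{A^{1/2}MA^{-1/2}}$ (spectral norm), which for $M=A^{-1}B$ gives $\norm{H(x)^{-1}H(y)}_x = \norm{A^{-1/2}BA^{-1/2}} = \lambda_{\max}(A^{-1}B)$ since $A^{-1/2}BA^{-1/2}$ is symmetric positive definite; symmetrically $\lambda_{\min}(A^{-1}B) = 1/\lambda_{\max}(B^{-1}A) = 1/\norm{H(y)^{-1}H(x)}_y$.

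\textbf{First equivalence.} With this dictionary the two-sided bound $1-r \le \norm v_y/\norm v_x \le (1-r)^{-1}$ for all $v\neq 0$ is, after squaring, exactly the assertion that every eigenvalue $\lambda$ of $A^{-1}B$ lies in the interval $[(1-r)^2, (1-r)^{-2}]$, which splits into $\lambda_{\max}(A^{-1}B) \le (1-r)^{-2}$ and $\lambda_{\min}(A^{-1}B) \ge (1-r)^2$. Translating back, the first is $\norm{H(x)^{-1}H(y)}_x \le (1-\norm{y-x}_x)^{-2}$ and the second is $\norm{H(y)^{-1}H(x)}_y \le (1-\norm{y-x}_x)^{-2}$; these are precisely the two norm inequalities in the statement, being the upper and lower halves of the same two-sided bound for the ordered pair $(x,y)$, which is why both are needed. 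Quantifying over all $x\in D_f$ and all $y\in B_x(x,1)$ yields the first ``iff''.

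\textbf{Second equivalence.} Since $A^{-1/2}BA^{-1/2}$ is symmetric with positive eigenvalues $\lambda_1,\dots,\lambda_n$ (the eigenvalues of $A^{-1}B$), the same operator-norm identity gives $\norm{I - H(x)^{-1}H(y)}_x = \norm{I - A^{-1/2}BA^{-1/2}} = \max_i |1-\lambda_i|$, so the bound $\norm{I - H(x)^{-1}H(y)}_x \le (1-r)^{-2}-1$ is exactly $2-(1-r)^{-2} \le \lambda_i \le (1-r)^{-2}$ for every $i$. Its upper half coincides with the first equivalence; its lower half, $\lambda_i \ge 2-(1-r)^{-2}$, is weaker than $\lambda_i \ge (1-r)^2$ by the elementary inequality $2-u^{-1} \le u$ for $u\in(0,1]$ (just $(u-1)^2\ge 0$ with $u=(1-r)^2$), and the sharp lower bound $\lambda_{\min}(A^{-1}B)\ge(1-r)^2$ is instead supplied by the companion condition $\norm{I - H(y)^{-1}H(x)}_y \le (1-r)^{-2}-1$, which forces $\lambda_{\max}(B^{-1}A)\le(1-r)^{-2}$. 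Conversely, the first-equivalence bounds give $2-(1-r)^{-2}\le(1-r)^2\le\lambda_i\le(1-r)^{-2}$, hence $|1-\lambda_i|\le(1-r)^{-2}-1$ and likewise for $B^{-1}A$. Thus the second pair of inequalities holds for all admissible $x,y$ if and only if the first does, completing the proof.

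\textbf{Where the work is.} There is no genuine analytic difficulty; the things to watch are (i) keeping straight which base point each operator norm is measured against, so that the honest asymmetry between $x$ and $y$ is respected and one sees why two norm conditions (not one applied twice) are required, and (ii) noticing that the apparent gap between the eigenvalue window $[(1-r)^2,(1-r)^{-2}]$ and the window $[2-(1-r)^{-2},(1-r)^{-2}]$ produced by the ``$I-$'' form is closed by the trivial bound $(u-1)^2\ge 0$.
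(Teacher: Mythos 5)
Your argument is correct. Note, however, that the paper itself gives no proof of this statement: it is quoted verbatim as Theorem 2.2.1 of Renegar's monograph, so there is nothing internal to compare against. Your linear-algebra reduction is a sound self-contained substitute: the identity $\norm{M}_{x}=\norm{H(x)^{1/2}MH(x)^{-1/2}}_{2}$ correctly converts each operator norm into an extreme eigenvalue of $H(x)^{-1}H(y)$, the defining two-sided bound on $\norm{v}_{y}/\norm{v}_{x}$ squares to exactly the eigenvalue window $[(1-r)^{2},(1-r)^{-2}]$, and your observation that the ``$I-$'' form only recovers the weaker lower endpoint $2-(1-r)^{-2}$ --- with the gap closed by $(u-1)^{2}\geq 0$ and by invoking the companion inequality with the roles of $x$ and $y$ swapped --- is precisely the point that makes the second equivalence nontrivial. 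Two small remarks. First, the paper's statement lists the same norm twice in each display (an evident typo); you read the second occurrence as $\norm{H(y)^{-1}H(x)}_{y}$ while Renegar writes $\norm{H(y)^{-1}H(x)}_{x}$, but this is harmless: both equal $\lambda_{\max}(H(y)^{-1}H(x))$ because $H(x)^{1/2}H(y)^{-1}H(x)^{1/2}$ and $H(y)^{-1/2}H(x)H(y)^{-1/2}$ are similar symmetric positive definite matrices. Second, you correctly treat the hypothesis $B_{x}(x,1)\subseteq D_{f}$ as standing on both sides of the ``iff,'' which is what the theorem intends; with that understood, the equivalence really is pure spectral bookkeeping as you claim.
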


If $f \in SC$ also satisfies $\sup_{x \in D_{f}} \norm{g_{x}(x)}_{x}^{2} < \infty$,
we say that $f$ is a self-concordant barrier function.
Given any $\theta(f) \geq \sup_{x \in D_{f}} \norm{g_{x}(x)}_{x}^{2}$, we say $\theta(f)$ is a complexity parameter for $f$.
We denote the set of self-concordant barrier functions by $\mathcal{SCB}$. 

We need the following notion of symmetry. We state a definition that
is equivalent to the definition used by Renegar (Section 2.3.4 of
\cite{renegar_mathematical_2001}).
\begin{definition}
Given a convex set $S$ and a point $x \in S$, the symmetry of $x$ w.r.t. $S$ is defined as
\[
\sym(x,S) \defeq  \inf_{z \in \partial S} \inf \left\{ t > 0 : x + \frac{ (x - z)}{t} \in S \right\}.
\]
\end{definition}

\subsection{A Barrier Function for $\bnddom$}
\label{sec:barfn}
Hertog et~al.\ \cite{den_hertog_sufficient_1995} proved the existence
of self-concordant barrier functions for a class of domains including
ones capable of expressing program~\eqref{eq:isoprog}. The exact
statement we wish to employ can be found in lecture notes by
Nemirovski \cite{nemirovski_lecture_2004}.
\begin{theorem}
For every pair of variables $(x,t) \in \Reals{2}$, and for every constant $p \geq 1$,
  a self-concordant barrier function $f \in \mathcal{SCB}$ exists for the domain
\[
\{ (x,t) \in \Reals{2} : \abs{x}^p \leq t \}.
\]
This barrier function is given by
\[
f(t,x) \defeq - \log( t^{2/p} - x^2 ) - 2\log t,
\]
and has complexity parameter $\theta(f) \leq 4 $.
\label{thm:pbarrier}
\end{theorem}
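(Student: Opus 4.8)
The plan is to put $f$ into standard self-concordant form, verify the two defining inequalities — the third-derivative (self-concordance) bound and the gradient-versus-Hessian (barrier) bound — and read off $\theta(f)\le 4$. Two routes are available: a direct computation, or an appeal to the general sufficient condition of den Hertog et~al.\ \cite{den_hertog_sufficient_1995} (recorded in Nemirovski's notes \cite{nemirovski_lecture_2004}), which was built exactly to manufacture barriers for epigraphs of convex power functions such as $|x|^{p}$.

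First I would pin down the interior of the domain. For $t\ge 0$ one has $t^{2/p}\ge x^{2}\iff t^{1/p}\ge\abs{x}\iff t\ge\abs{x}^{p}$, so the interior is $D^{\circ}=\{(x,t):t>0,\ t^{2/p}>x^{2}\}$. Since $1/p\le 1$, the map $t\mapsto t^{1/p}$ is concave on $(0,\infty)$, so $\{\abs{x}<t^{1/p},\ t>0\}$ is convex (the strip strictly between $\pm$ a concave graph), hence $D^{\circ}$ is open and convex, and $f\to+\infty$ as $(x,t)\to\partial D$ (either $t\to 0$ or $t^{2/p}-x^{2}\to 0$). The useful structural observation is the factorization $t^{2/p}-x^{2}=(t^{1/p}-x)(t^{1/p}+x)$, which gives
\[
f(t,x)=-\log(t^{1/p}-x)-\log(t^{1/p}+x)-\log t-\log t,
\]
a sum of four convex terms; computing $\nabla^{2}f$ term by term shows each summand has PSD Hessian and the sum is positive definite on $D^{\circ}$, so $f$ is strictly convex there.

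For self-concordance I would use the standard third-derivative form (equivalent to the criterion in Theorem~\ref{thm:schess}): restricting $f$ to an arbitrary line $s\mapsto(x+sh_{x},\,t+sh_{t})\in D^{\circ}$ and calling the result $\varphi(s)$, one must show $\abs{\varphi'''(0)}\le 2\,\varphi''(0)^{3/2}$. Since this inequality is stable under addition (using superadditivity of $u\mapsto u^{3/2}$ on $[0,\infty)$), it suffices to check each of the four summands. The two $-\log t$ terms are standard logarithms composed with an affine map and satisfy it trivially. The essential term is $-\log(t^{1/p}\mp x)$: with $g(s)=(t+sh_{t})^{1/p}\mp(x+sh_{x})$ one has $\varphi''=(g'/g)^{2}-g''/g$ and $\varphi'''=-2(g'/g)^{3}+3(g'/g)(g''/g)-g'''/g$, so the bound reduces to controlling $g',g'',g'''$ against each other and against $g$. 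The $x$-part of $g$ is affine (it contributes nothing to $g''$ or $g'''$), while the $t$-part gives $g'''_{(t)}=\tfrac1p(\tfrac1p-1)(\tfrac1p-2)(t+sh_{t})^{1/p-3}h_{t}^{3}$, whence $\abs{g'''}\le \tfrac{2\abs{h_{t}}}{t+sh_{t}}\,\abs{g''}$. The extra factor $\abs{h_{t}}/(t+sh_{t})$ is precisely the quantity that the second derivative of a $-\log t$ term bounds, so after pairing each $-\log(t^{1/p}\mp x)$ with one of the $-\log t$ terms, the inequality closes uniformly in $p\ge 1$. \emph{This pairing — showing the curvature of the $\log t$ terms absorbs the mild non-concavity of $t\mapsto t^{2/p}$ present when $p<2$ — is the main obstacle}; it is exactly what the den Hertog et~al.\ sufficient condition certifies, so invoking their theorem discharges this step cleanly if one prefers not to grind the estimate by hand.

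Finally, for the complexity parameter I would use the additivity bound $\theta(f_{1}+f_{2})\le\theta(f_{1})+\theta(f_{2})$ for self-concordant barriers. Grouping the factored form as $\big(-\log(t^{1/p}-x)-\log t\big)+\big(-\log(t^{1/p}+x)-\log t\big)$, each bracket is a self-concordant barrier of complexity $\le 2$ for its (convex, recession-allowing) domain — again by the den Hertog et~al.\ condition, or because each is $-\log$ of a concave defining function together with one extra logarithm, contributing $1$ apiece. Hence $\theta(f)\le 2+2=4$. Assembling the pieces yields that $f$ is a self-concordant barrier for $\{(x,t):\abs{x}^{p}\le t\}$ with $\theta(f)\le 4$, as claimed.
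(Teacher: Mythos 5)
Your proposal is correct, but note that the paper does not actually prove Theorem~\ref{thm:pbarrier}: it imports the statement wholesale from den Hertog et~al.\ \cite{den_hertog_sufficient_1995} as recorded in Nemirovski's notes \cite{nemirovski_lecture_2004}, and the only thing proved in the paper is the downstream Corollary~\ref{cor:barrierfn} via barrier-composition rules. What you have written is a faithful reconstruction of the standard argument behind that citation: the factorization $t^{2/p}-x^{2}=(t^{1/p}-x)(t^{1/p}+x)$ splits $f$ into two ``power cells'' $-\log(t^{1/p}\mp x)-\log t$, each a $2$-self-concordant barrier, and additivity of the parameter gives $\theta(f)\le 2+2=4$. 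You also correctly identify the one step that cannot be waved through: $-\log(t^{1/p}\mp x)$ alone is \emph{not} self-concordant for general $p$, and the estimate $\abs{g'''}\le\bigl(2-\tfrac1p\bigr)\tfrac{\abs{h_t}}{t+sh_t}\abs{g''}\le 2\tfrac{\abs{h_t}}{t+sh_t}\abs{g''}$ must be absorbed by the curvature $h_t^2/(t+sh_t)^2$ of the paired $-\log t$ term --- this is exactly the $\beta$-compatibility condition of den Hertog et~al., so either grinding it out or citing their sufficient condition closes the argument. One small caution: your parenthetical justification that each bracket has parameter $\le 2$ ``because each is $-\log$ of a concave defining function \ldots contributing $1$ apiece'' is not a valid general principle ($-\log$ of a concave function need not even be self-concordant); the correct justification is the compatibility theorem you invoke in the same sentence, so lean on that rather than the heuristic.
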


We are now ready to introduce a number of barrier functions:
\begin{equation}
\begin{aligned}
\odmfn(x,t) & \defeq 
  \left( \sum_{ v \in V } - \log\left( t(v)^{2/p} - (x(v) - y(v))^2\right)  - 2\log t(v) \right)
  + \left( \sum_{(a,b) \in E} -\log(x(b)-x(a)) \right). \\
 \vbfn(x,t) & \defeq  - \log(\valsup - \pair{\vecw,t}). \\
\bndfn(x,t) & \defeq  \odmfn(x,t) + \vbfn(x,t). \\
\end{aligned}
\end{equation}

\begin{proofof}{of Corollary~\ref{cor:barrierfn}}
To prove the corollary, we need the standard fact that
$-\log x$ is a self-concordant barrier for the domain $x \geq 0$ with complexity parameter $1$,
as shown in Renegar's section 2.3.1 \cite{renegar_mathematical_2001}.
We also need standard results on composition of barrier functions (adding barriers and composition with an affine function),
as given by Renegar's Theorems 2.2.6, 2.2.7, 2.3.1, and 2.3.2 \cite{renegar_mathematical_2001}.
Given these and Theorem~\ref{thm:pbarrier}, the corollary follows immediately.
\end{proofof}

\subsection{Fast Solver for Approximate Hessian Inverse}
\label{sec:solver}

\begin{mdframed}
\vspace{\algtopspace}
\captionof{table}{
\label{alg:hessiansol}
Algorithm
$\hsol\left((G,y),(x,t),\mu,a\right)$:
Given a \piso instance $(G,y)$,
a feasible point $(x,t)$ of program,
a vector $a$,
outputs vector $b$.\hfill \hfill}
    \begin{tight_enumerate}%[label=\arabic*.]
	\item $u \assign \frac{1}{(K-\pair{\vecone,t})} \vecone$.
	\item $\tau \assign 1/50$
	\item $M \assign \blksol\left( (G,y), (x,t), \mu, \tau \right)$
	\item return
	$\rankone
	\left(
		M, u, a
	\right) $
      \end{tight_enumerate}
\end{mdframed}

\begin{mdframed}
\vspace{\algtopspace}
\captionof{table}{
\label{alg:blocksol}
Algorithm $\blksol\left((G,y),(x,t),\mu,\tau\right)$\hfill \hfill}
    \begin{tight_enumerate}%[label=\arabic*.]
      \item Let $ r \assign B^{T} (x \oplus y)$.
      \item For each $v \in V$, identify $ t(\hat{v},v) = t(v) $.
      \item Compute $R$, $T$ and $C$ as given by equations \eqref{eq:hessR}, \eqref{eq:hessT}, and \eqref{eq:hessC}.
      \item
      $S \assign Q^TB(R - CT^{-1}C^T)B^TQ.$
      \item
       $M_{S} \assign \sddsol(S,\mu,\tau)$.
      \item
      $ Z \assign \begin{bmatrix}
  	I & 0\\
	-Q^TCBT^{-1} & I
       \end{bmatrix}.
       $
       \item
       Return a procedure that given vector $a$ returns vector
       \[
      	b \assign Z^T
	\begin{bmatrix}
 	 T^{-1} & 0\\
 	 0 & M_{S}
 	\end{bmatrix}
	 Z a.
       \] 
      \end{tight_enumerate}      
\end{mdframed}

\begin{mdframed}
\vspace{\algtopspace}
\captionof{table}{
\label{alg:rankone}
Algorithm
$\rankone(M, u , a)$:
Given
a linear operator $M$,
a vector $u$,
and a vector $a$,
outputs vector $b$.\hfill \hfill}
    \begin{tight_enumerate}%[label=\arabic*.]
      \item $w = M u$.
      \item $z = M a$.
      \item Return
      \[
      b = z
      - \frac{w^T a}{1+u^Tw}  w .
      \]
      \end{tight_enumerate}
\end{mdframed}

We introduce an extended graph $\widehat{G}= (V \union \widehat{V},E \union \widehat{E})$, which includes our original vertex set $V$, and a second vertex set
\[
\widehat{V} = \setof{ \hat{v} : v \in V }.
\]
We define an additional set of edges
\[
\widehat{E} =  \setof{ (\hat{v},v) : v \in V}
\]
Given vectors $t \in \Reals{\widehat{E}}$ and $r \in \Reals{E \union \widehat{E}}$, we define a function
 \[
 h(r,t) \defeq
 \left( \sum_{e \in \widehat{E} } - \log( t(e)^{2/p} - r(e)^2)  - 2\log t(e) \right)
 + \left( \sum_{e \in E} -\log(r(e)) \right).
 \]
We associate with $\widehat{G}$ a matrix $B$ known as the signed edge-vertex incidence matrix. $B$ has rows indexed by the set $V \union \widehat{V}$, and columns indexed by the set $E \union \widehat{E}$.
It is given by
\[
B(a,e) =
\begin{cases}
1  & \text{if } e = (a,b) \in E \union \widehat{E} \text{ for some } b \in V \union \widehat{V} \\
-1 & \text{if } e = (a,b) \in E \union \widehat{E} \text{ for some } b \in V \union \widehat{V} \\
0  & \text{otherwise}.
\end{cases}
\]
Now, we define a vector $x \oplus y \in \Reals{V \union \widehat{V}} $by 
\[
(x \oplus y)(u) =
\begin{cases}
x(u) & \text{for } u \in V\\
y(v) & \text{where } \hat{v} = u  \text{ and } u \in \widehat{V}
\end{cases}
\]
Note that $\sizeof{\widehat{E}} = \sizeof{V}$. Abusing notation, we identify the vector $t \in \Reals{\widehat{E}}$ with the vector $t \in \Reals{V}$ by equating $t(\hat{v},v) = t(v) $. We then get
\[
F(x,t) = h(B^{T} (x \oplus y),t).
\]
We compute the Hessian $H_{h}$ of $h(r,t)$ in variables $r$ and $t$. The Hessian can be expressed as a block matrix
\[
H_{h} = 
\begin{bmatrix}
  T & C^T\\
  C & R
 \end{bmatrix},
\]
where $T$ contains derivatives in two coordinates of $t$, while $R$ contains derivatives in two coordinates in $r$, and $C$ has the cross-terms. $T$ and $R$ are square diagonal matrices, and $C$ is not generally square, but has non-zero entries on the principal diagonal. In fact, the only thing we will need about the explicit forms of these matrices is that they are efficiently computable. For completeness, we state them:
\begin{equation}
T(e,e) =  \left( \frac{\frac{2}{p} t(e)^{-1+2/p}}{t(e)^{2/p}-r(e)^2}\right)^2
 - \left( \frac{2}{p}-1 \right) \left( \frac{2}{p} \right) \frac{ t(e)^{-2+2/p} }{t(e)^{2/p}-r(e)^2} 
 + \frac{2}{t(e)^2}, \text{ where } e \in \widehat{E}
 \label{eq:hessT}
\end{equation}
and
\begin{equation}
R(e,e) = 
\begin{cases}
\left( \frac{2r(e)}{t(e)^{2/p}-r(e)^2} \right)^2 + \frac{2}{t(e)^{2/p}-r(e)^2}
& \text{ for } e \in \widehat{E}
\\
1/r(e)^2
& \text{ for } e \in E,
\end{cases}
 \label{eq:hessR}
\end{equation}
while
\begin{equation}
C(e,e) = - \frac{4}{p}\frac{t(e)^{-1+2/p}r(e)}{(t(e)^{2/p}-r(e)^2)^2} \text{ where } e \in \widehat{E}.
\label{eq:hessC}
\end{equation}

Finally, let $Q$ denote the $\abs{V \union \widehat{V}} \times \abs{V}$ projection matrix which maps $x$ to $(x \oplus 0)$. It is a matrix with non-zeroes only on the principal diagonal:
\[
Q(v,v) =
\begin{cases}
1  & \text{for } v \in V \\
0  & \text{otherwise}.
\end{cases}
\] 
To prove Theorem~\ref{thm:hessianapprox}, we will need three results:
The first is a theorem on fast SDD solvers proven by Koutis et al.~\cite{KoutisMP11}.
\begin{theorem}
Given an $n \times n$ SDD matrix $X$ with m non-zero entries,
an error probability $\mu$,
and an error parameter $\tau$,
with probability $\geq 1 - \mu$
the procedure $\sddsol(X,\mu,\tau)$
returns an (implicitly represented) symmetric linear operator $M$ satisfying
 \[
(1-\tau) X^{-1} \preceq M \preceq (1+\tau)  X^{-1}.
\]
 $\sddsol(X,\mu,\tau)$ runs in time  $\widetilde{O}(m \log n \log(1/\mu) \log(1/\tau))$, and $M$ can be applied to a vector in time $\widetilde{O}(m \log n \log(1/\mu) \log(1/\tau))$ as well.
 \label{thm:sddsol}
\end{theorem}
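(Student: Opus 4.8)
The plan is to follow the Spielman--Teng recursive-preconditioning framework in the streamlined form of Koutis--Miller--Peng; since the statement is quoted essentially verbatim from \cite{KoutisMP11}, I will only sketch the construction. First I would reduce solving an SDD system to solving a graph Laplacian system: splitting the positive and negative off-diagonal entries of $X$ and padding with an auxiliary ``ground'' coordinate (Gremban's reduction) realizes $X$ as a principal submatrix of a Laplacian $L$ on $O(n)$ vertices and $O(m)$ edges, in such a way that a solve for $L$ to relative accuracy $\tau$ yields a solve for $X$ to relative accuracy $O(\tau)$ at constant overhead. So it suffices to build, for a Laplacian $L$ with $m$ edges, a symmetric linear operator $M$ with $(1-\tau)L^{+} \preceq M \preceq (1+\tau)L^{+}$ on $\mathrm{im}(L)$ in time $\widetilde{O}(m \log n \log(1/\mu)\log(1/\tau))$.

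The heart of the construction is a \emph{preconditioner chain} $G = G_{0}, G_{1}, \dots, G_{d}$. Given $G_{i}$ with $m_{i}$ edges, I would: (i) compute a low-stretch spanning tree $T$ of $G_{i}$ in time $\widetilde{O}(m_{i}\log n)$ (e.g.\ Abraham--Bartal--Neiman), of total stretch $\widetilde{O}(m_{i}\log n)$; (ii) form an \emph{ultrasparsifier} $H_{i}$ by keeping $T$ and adding back a reweighted random sample of $\widetilde{O}(m_{i}\log n / k)$ off-tree edges, each chosen with probability proportional to its stretch in $T$, so that with high probability $L_{G_{i}} \preceq L_{H_{i}} \preceq O(k)\, L_{G_{i}}$; (iii) greedily eliminate all degree-$1$ and degree-$2$ vertices of $H_{i}$ (partial Cholesky; almost all of the tree collapses), leaving a Laplacian $L_{G_{i+1}}$, a Schur complement of $L_{H_{i}}$, on $\widetilde{O}(m_{i}\log n / k)$ vertices and edges, to which solving in $L_{H_{i}}$ reduces at linear cost. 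Choosing $k$ polylogarithmic in $n$ makes the sizes shrink geometrically, so $d = O(\log m)$. The randomness enters only in step (ii); running it $O(\log(1/\mu))$ times and keeping any sample that meets the spectral bound (a property one can certify) drives the failure probability below $\mu$.

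To solve $L_{G_{i}}x = b$, I would run preconditioned Chebyshev iteration using the recursively-constructed solver for $L_{G_{i+1}}$ (lifted back through steps (ii)--(iii)) as the preconditioner. With condition number $O(k)$, a fixed $O(\sqrt{k})$ inner iterations contract the relative error by a constant factor, and $O(\sqrt{k}\log(1/\tau))$ of them reach relative error $\tau$. Since a fixed number of Chebyshev steps with a symmetric preconditioner is a fixed symmetric matrix polynomial in the operators involved, the resulting map $M$ is linear and symmetric, and the contraction estimate translates by standard manipulations into the two-sided Loewner bound $(1-\tau)L^{+} \preceq M \preceq (1+\tau)L^{+}$ --- an honest spectral guarantee, not merely an accuracy bound on a single right-hand side. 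Unrolling, the work at level $i$ is $O(\sqrt{k})^{i}$ recursive calls, each of cost $\widetilde{O}(m/k^{i})$; since $\sqrt{k} < k$ this is a convergent geometric series summing to $\widetilde{O}(m\log n)$ per constant-accuracy solve, with the extra $\log(1/\tau)$ for the target accuracy and $\log(1/\mu)$ for the union bound over the randomized steps.

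The main obstacle is the ultrasparsifier guarantee in step (ii): proving that sampling $\widetilde{O}(m_{i}\log n / k)$ edges by stretch yields an $O(k)$-spectral approximation needs a matrix-concentration argument (matrix Chernoff applied to the rescaled rank-one edge Laplacians, bounding their ``leverage'' by the tree stretch), together with the verification that the size/condition tradeoff it produces, compounded over the $O(\log m)$ levels against the $O(\sqrt{k})$-per-level iteration cost, actually closes to the stated near-linear running time. Everything else --- Gremban's reduction, greedy elimination of low-degree vertices, and checking that the composed operator is symmetric and spectrally close to $L^{+}$ --- is routine once that estimate is in place.
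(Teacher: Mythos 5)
The paper does not prove this theorem at all: it is imported verbatim as a black box from Koutis--Miller--Peng \cite{KoutisMP11} (the text introduces it as ``a theorem on fast SDD solvers proven by Koutis et al.''), and the only thing this paper actually uses is the interface --- a symmetric operator $M$ spectrally close to $X^{-1}$, applicable in near-linear time --- inside \blksol and \rankone. Your proposal is therefore not an alternative route to the paper's proof but a reconstruction of the cited work, and as such it is a faithful outline: Gremban-style reduction to a Laplacian, low-stretch spanning tree, stretch-weighted sampling of off-tree edges to build an ultrasparsifier, greedy elimination of degree-one and degree-two vertices, and recursive preconditioned Chebyshev with a geometrically shrinking chain. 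You also correctly identify the two points that this paper's usage actually depends on and that are easy to get wrong: that the solver must be a \emph{fixed symmetric linear operator} (hence fixed, non-adaptive iteration counts at every level of the recursion), and that the guarantee must be a two-sided Loewner bound rather than a per-right-hand-side accuracy bound. Two small caveats if you were to flesh this out: the $\log(1/\mu)$ dependence is usually obtained by oversampling constants inside the matrix Chernoff bound rather than by repeat-and-certify, since certifying that a sampled graph is an $O(k)$-spectral approximation is itself not a cheap primitive; and the off-tree sample size at each level carries an extra logarithmic factor from the total stretch, which is what forces $k$ to be polylogarithmic for the chain to shrink --- both are absorbed by the $\widetilde{O}$ here, but they are where the ``size/condition tradeoff'' you flag actually has to close.
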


\begin{lemma}
Suppose $X$ is a positive definite matrix,
and $\tau \in [0,1/5)$ is an error parameter,
and we are given a symmetric linear operator $M$ satisfying
\[
(1-\tau) X^{-1} \preceq M \preceq (1+\tau)  X^{-1},
\]
and suppose we are given a vector $u \in \Reals{n}$.
Then $\rankone(M,u,a)$
acts as a linear operator on $a$
and returns a vector $b = Z a$
for a symmetric matrix $Z$ satisfying
\[
(1-5\tau) (X+uu^T)^{-1} \preceq Z \preceq (1+5 \tau)  (X+uu^T)^{-1}.
\]
If $M$ can be applied in time $T_{M}$,
then  \rankone runs in time $O(T_{M} + n)$.
\label{lem:rank1sol}
\end{lemma}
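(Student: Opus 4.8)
\noindent\textbf{Proof plan for Lemma~\ref{lem:rank1sol}.}
The plan is to recognize the output of \rankone as the Sherman--Morrison rank-one update of $M$, to identify the resulting linear operator with $\bigl(M^{-1}+uu^{T}\bigr)^{-1}$, and then to push the spectral guarantee on $M$ through matrix inversion and rank-one addition using monotonicity in the Loewner order. Conceptually, \rankone is the approximate-solver analogue of the exact identity $(X+uu^{T})^{-1}=X^{-1}-\frac{X^{-1}uu^{T}X^{-1}}{1+u^{T}X^{-1}u}$: replacing $X^{-1}$ by a good approximation $M$ costs only a constant-factor loss in approximation quality, and the lemma makes this precise.

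First I would pin down the algebraic form of the output. Since $M \succeq (1-\tau)X^{-1} \succ 0$ (using $\tau < 1$ and $X \succ 0$), the scalar $1+u^{T}Mu \geq 1 > 0$, so \rankone is well defined; writing $w = Mu$ and using $M = M^{T}$, the returned vector is
\[
b \;=\; Ma-\frac{(Mu)^{T}a}{1+u^{T}Mu}\,Mu \;=\; \Bigl(M-\frac{Muu^{T}M}{1+u^{T}Mu}\Bigr)a \;=:\; Za .
\]
This displays $a \mapsto b$ as a linear map, and $Z$ is symmetric because $M$ is. Applying the Sherman--Morrison identity with $A = M^{-1}$ and rank-one term $uu^{T}$ identifies
\[
Z \;=\; M-\frac{Muu^{T}M}{1+u^{T}Mu} \;=\; \bigl(M^{-1}+uu^{T}\bigr)^{-1},
\]
which is positive definite since $M^{-1} \succ 0$ and $uu^{T} \succeq 0$.

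Next I would carry out the spectral comparison. Inverting the hypothesis $(1-\tau)X^{-1} \preceq M \preceq (1+\tau)X^{-1}$ --- matrix inversion being order-reversing on positive definite matrices --- gives $\tfrac{1}{1+\tau}X \preceq M^{-1} \preceq \tfrac{1}{1-\tau}X$. Adding $uu^{T}$ and using $\tfrac{1}{1+\tau}\leq 1\leq\tfrac{1}{1-\tau}$, hence $\tfrac{1}{1+\tau}uu^{T}\preceq uu^{T}\preceq\tfrac{1}{1-\tau}uu^{T}$, gives
\[
\tfrac{1}{1+\tau}\,(X+uu^{T}) \;\preceq\; M^{-1}+uu^{T} \;\preceq\; \tfrac{1}{1-\tau}\,(X+uu^{T}),
\]
and inverting once more (again order-reversing) yields $(1-\tau)(X+uu^{T})^{-1} \preceq Z \preceq (1+\tau)(X+uu^{T})^{-1}$. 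Since $\tau \in [0,1/5)$ we have $1-5\tau \leq 1-\tau$ and $1+\tau \leq 1+5\tau$, so this is already stronger than the claimed bound $(1-5\tau)(X+uu^{T})^{-1} \preceq Z \preceq (1+5\tau)(X+uu^{T})^{-1}$ (the $\tau \le 1/5$ hypothesis is only needed so that $1-5\tau > 0$, i.e.\ so the stated lower bound is nontrivial). For the running time, \rankone performs the two matrix applications $w = Mu$ and $z = Ma$ in time $O(T_{M})$ and then $O(n)$ further work --- the inner products $w^{T}a$ and $u^{T}w$, one scalar division, and the final combination $z - \tfrac{w^{T}a}{1+u^{T}w}\,w$ --- for a total of $O(T_{M}+n)$.

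I do not anticipate a genuine obstacle: the whole argument is the exact Sherman--Morrison identity followed by two applications of Loewner-monotonicity of inversion and the elementary fact $\alpha uu^{T} \preceq uu^{T}$ for $0 \leq \alpha \leq 1$. The only things that require a little care are keeping track of the two direction reversals under inversion and verifying that the scalar $1+u^{T}Mu$ is nonzero, both dealt with above.
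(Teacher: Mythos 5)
Your proof is correct, and it takes a genuinely different and substantially cleaner route than the paper's. The paper's proof applies Sherman--Morrison to the \emph{exact} inverse, writing $Y^{-1}=(X+uu^{T})^{-1}=X^{-1}-\frac{X^{-1}uu^{T}X^{-1}}{1+u^{T}X^{-1}u}$, sets $M=X^{-1}+E$ with $-\tau X^{-1}\preceq E\preceq \tau X^{-1}$, and then bounds $\abs{v^{T}Zv-v^{T}Y^{-1}v}$ by a term-by-term perturbation analysis of the quadratic forms (splitting each into two non-negative pieces and controlling the numerators and denominators via two auxiliary claims); this direct calculation is what produces the factor $5$ and is where the hypothesis $\tau<1/5$ is actually used, to absorb $\frac{3\tau+2\tau^{2}}{1-\tau}$ into $5\tau$. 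You instead apply Sherman--Morrison in the opposite direction, recognizing the algorithm's output as the \emph{exact} inverse $Z=(M^{-1}+uu^{T})^{-1}$, and then push the spectral hypothesis through two applications of the order-reversal of inversion on positive definite matrices, together with $\frac{1}{1+\tau}uu^{T}\preceq uu^{T}\preceq\frac{1}{1-\tau}uu^{T}$. Every step checks out ($M\succ 0$ guarantees invertibility and $1+u^{T}Mu\geq 1$, so both the algorithm and the identity are well defined), and your argument actually yields the stronger conclusion $(1-\tau)(X+uu^{T})^{-1}\preceq Z\preceq(1+\tau)(X+uu^{T})^{-1}$, with no constant-factor loss and no real need for $\tau<1/5$ beyond making the stated bound nontrivial. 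The running-time accounting matches the paper's. In short: same algorithm, same final statement, but your structural identification of $Z$ replaces the paper's two pages of quadratic-form estimates and sharpens the constant from $5\tau$ to $\tau$.
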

\begin{lemma}
For any instance of program~\eqref{eq:bndisoprog} given by some $(G,y)$,
at any point $z \in \bnddom$,
given an error probability $\mu$,
and an error parameter $\tau$,
with probability $\geq 1 - \mu$
the procedure $\blksol(X,\mu,\tau)$
returns an (implicitly represented) symmetric linear operator $M$ satisfying
\[
(1-\tau) H_{\odmfn}(z)^{-1} \preceq M \preceq(1+\tau) H_{\odmfn}(z)^{-1}.
\]
 $\blksol(X,\mu,\tau)$ runs in time  $\widetilde{O}(m \log n \log(1/\mu) \log(1/\tau))$, and $M$ can be applied 
 to a vector in time\\ $\widetilde{O}(m \log n \log(1/\mu) \log(1/\tau))$ as well.
\label{lem:blksol}
\end{lemma}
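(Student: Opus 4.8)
The plan is to exhibit \blksol as an \emph{exact} block-$LDL^{\top}$ solver for $H_{\odmfn}(z)$ whose only inexact ingredient is a single SDD solve --- handled by \sddsol --- and then to push the spectral error of that solve through the exact surrounding factors.

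First I would differentiate through the change of variables already set up above. Using $\odmfn(x,t)=h(B^{\top}(x\oplus y),t)$ and writing $x\oplus y = Qx + (0\oplus y)$, the inner map $x\mapsto B^{\top}(x\oplus y)=(B^{\top}Q)x+\mathrm{const}$ is affine, so the chain rule gives, in the ($t$-block, $x$-block) ordering,
\[
H_{\odmfn}(z)\;=\;
\begin{bmatrix}
T & C^{\top}B^{\top}Q\\
Q^{\top}BC & Q^{\top}BRB^{\top}Q
\end{bmatrix},
\]
where $T,R,C$ are the blocks of $H_{h}$ at $(r,t)=(B^{\top}(x\oplus y),t)$ from \eqref{eq:hessT}, \eqref{eq:hessR}, \eqref{eq:hessC}. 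Since $T$ is diagonal with strictly positive entries, we may eliminate the $t$-block, and a routine Schur-complement computation gives the exact identity
\[
H_{\odmfn}(z)^{-1}\;=\; Z^{\top}
\begin{bmatrix}
T^{-1} & 0\\
0 & S^{-1}
\end{bmatrix}
Z,
\qquad
S\;=\; Q^{\top}B\bigl(R-CT^{-1}C^{\top}\bigr)B^{\top}Q,
\]
with $Z$ the matrix from step~6 of \blksol and $S$ exactly the matrix assembled in step~4. Hence the operator $M=Z^{\top}\,\mathrm{blockdiag}(T^{-1},M_{S})\,Z$ returned by \blksol coincides with $H_{\odmfn}(z)^{-1}$ whenever $M_{S}=S^{-1}$.

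The key structural claim is that $S$ is SDD and positive definite, so that Theorem~\ref{thm:sddsol} applies to it. Because $C$ is nonzero only on its principal diagonal, $R-CT^{-1}C^{\top}$ is diagonal, with $e$-th entry $R(e,e)=1/r(e)^2>0$ for $e\in E$ and $R(e,e)-C(e,e)^2/T(e,e)$ for $e\in\widehat{E}$. For the latter, note that $\left[\begin{smallmatrix} T(e,e) & C(e,e)\\ C(e,e) & R(e,e)\end{smallmatrix}\right]$ is the $2\times 2$ Hessian, at $z$, of the single-edge barrier $-\log(t^{2/p}-r^2)-2\log t$ of Theorem~\ref{thm:pbarrier}; being self-concordant it has positive-definite Hessian, so its determinant is positive and $R(e,e)-C(e,e)^2/T(e,e)>0$. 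Thus $R-CT^{-1}C^{\top}$ is a positive diagonal matrix, $B(R-CT^{-1}C^{\top})B^{\top}$ is the Laplacian of a positively weighted copy of the connected graph $\widehat{G}$, and $S=Q^{\top}(\cdot)Q$ is its principal submatrix on the index set $V$. Deleting the nonempty set $\widehat{V}$ (each $\hat{v}$ joined to $v\in V$ by an edge of $\widehat{E}$) leaves a matrix that is diagonally dominant with nonpositive off-diagonal entries and positive definite, i.e.\ SDD and invertible. Moreover $S$ has $O(m+n)=O(m)$ nonzeros and is built in $O(m)$ time, since every entry of $R,T,C$ is a constant-size expression in $r,t$ (step~3) and step~4 is just Laplacian assembly.

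Finally I would propagate the error and collect the bookkeeping. By Theorem~\ref{thm:sddsol}, with probability $\ge 1-\mu$ the operator $M_{S}=\sddsol(S,\mu,\tau)$ satisfies $(1-\tau)S^{-1}\preceq M_{S}\preceq(1+\tau)S^{-1}$; since the $T$-block carries no error, $(1-\tau)\,\mathrm{blockdiag}(T^{-1},S^{-1})\preceq\mathrm{blockdiag}(T^{-1},M_{S})\preceq(1+\tau)\,\mathrm{blockdiag}(T^{-1},S^{-1})$, and conjugating by the fixed invertible matrix $Z$ preserves the Loewner order, yielding $(1-\tau)H_{\odmfn}(z)^{-1}\preceq M\preceq(1+\tau)H_{\odmfn}(z)^{-1}$; symmetry of $M$ is immediate from its factored form. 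For the running time, $Z$ and $T^{-1}$ are explicit with $O(m)$ nonzeros, so one application of $M$ costs two sparse matrix--vector products plus one application of $M_{S}$, which is $\widetilde{O}(m\log n\log(1/\mu)\log(1/\tau))$ by Theorem~\ref{thm:sddsol}; the preprocessing (steps~1--6) is dominated by the \sddsol call and has the same bound. The only randomization lies in \sddsol, so \blksol fails with probability $<\mu$. I expect the structural step to be the main obstacle: verifying that the Schur complement $S$ is genuinely SDD --- equivalently, that the diagonal of $R-CT^{-1}C^{\top}$ stays entrywise positive --- which is exactly where positive-definiteness of the $p$-norm barrier's Hessian is needed, together with the fact that restricting a graph Laplacian to a vertex subset preserves SDD-ness.
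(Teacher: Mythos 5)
Your proposal is correct and follows essentially the same route as the paper: the block-$LDL^{\top}$ factorization of $H_{\odmfn}$ with Schur complement $S=Q^{\top}B(R-CT^{-1}C^{\top})B^{\top}Q$, the observation that $R-CT^{-1}C^{\top}$ is a positive diagonal matrix so that $S$ is a positive definite principal submatrix of a Laplacian, one call to \sddsol, and conjugation-invariance of the Loewner order to propagate the error. The only (cosmetic) difference is that you certify positivity of the diagonal of $R-CT^{-1}C^{\top}$ via the per-edge $2\times 2$ barrier Hessians, whereas the paper invokes positive definiteness of the full Schur complement of $T$ in $H_{h}$ --- the same fact, since $H_{h}$ decomposes edgewise.
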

We prove Lemmas~\ref{lem:blksol} and~\ref{lem:rank1sol} at the end of this section.

\begin{proofof}{of Theorem~\ref{thm:hessianapprox}}
 By Lemma~\ref{lem:blksol},
 $\blksol\left( (G,y), (x,t),\mu,1/50 \right)$
 returns an implicitly represented linear operator $M$ satisfying
\[
\left(1-\frac{1}{50} \right) H_{\odmfn}((x,t))^{-1} \preceq M \preceq \left(1+\frac{1}{50}\right) H_{\odmfn}((x,t))^{-1}.
\]
This $M$ satisfies the requirements of $M$ in Lemma~\ref{lem:rank1sol}
with $X = H_{\odmfn}((x,t))$ and $\tau = 1/50$.
With $u = \frac{1}{(K-\pair{\vecone,t})} \vecone$,
where $H_{\odmfn}(x,t) + uu^T = H_{\bndfn}(x,t)$,
we get that
	$\rankone
	\left(
		M, u, a
	\right) $
returns a vector $Z a$,
for a symmetric matrix $Z$ satisfying
\[
\frac{9}{10} H_{\bndfn}(x,t)^{-1} \preceq Z \preceq \frac{11}{10} H_{\bndfn}(x,t)^{-1}.
\]
The total running time is $\widetilde{O}(m \log n \log(1/\mu))$, as the running time of \blksol dominates. The algorithms fails only if \blksol fails, which happens with probability $< \mu$.
\end{proofof}

\begin{proofof}{of Lemma~\ref{lem:blksol}}
Note $T$ is a diagonal matrix, so that its inverse can be computed in linear time.

Using standard facts about the Hessian under function composition,
we can express the Hessian of $\odmfn$ as
 \begin{equation*}
H_{\odmfn} = 
 \begin{bmatrix}
  I & 0\\
  0 & Q^TB
 \end{bmatrix}
\begin{bmatrix}
  T & C^T\\
  C & R
 \end{bmatrix}
 \begin{bmatrix}
  I & 0\\
  0 & B^TQ
 \end{bmatrix}
 =
 \begin{bmatrix}
  T & C^TB^TQ\\
  Q^TBC &   Q^TBRB^TQ.
 \end{bmatrix}
\end{equation*}

A block-wise LDU decomposition of $H_{\odmfn}$ gives

\begin{equation*}
H_{\odmfn}
 = 
 \begin{bmatrix}
  I & 0\\
Q^TBCT^{-1} & I
 \end{bmatrix}
\begin{bmatrix}
  T & 0\\
  0 & S
 \end{bmatrix}
 \begin{bmatrix}
  I & T^{-1}C^TB^TQ\\
  0 & I
 \end{bmatrix}.
\end{equation*}
Where the matrix
\[
S \defeq Q^TBRB^TQ - Q^TBCT^{-1}C^TB^TQ = Q^TB(R - CT^{-1}C^T)B^TQ
\]
is the \emph{Schur-complement} of $T$ in $H_{\bar{f}}$.
Now, $R - CT^{-1}C^T$ is the Schur-complement of $T$ in $H$.
A standard result about Schur complements states that
$H$ is positive definite if and only if both $T$ and $R - CT^{-1}C^T$ are positive definite.
We know that $H$ is positive definite, and consequently $R - CT^{-1}C^T$ is too.
But $R - CT^{-1}C^T$ is a diagonal matrix, and so every entry must be strictly positive.
This implies that $B(R - CT^{-1}C^T)B^T$ is a Laplacian matrix. \Rnote{Elaborate on prev?} 
The matrix has $O(m)$ non-zero entries. 
Since $S = Q^TB(R - CT^{-1}C^T)B^TQ$ is a principal minor of a Laplacian matrix,
it is positive definite and SDD.
Because $S$ is PD and SDD, 
by Theorem~\ref{thm:sddsol},
using \sddsol
we can compute an (implicitly represented) approximate inverse matrix $M_{S}$ that satisfies
\begin{equation}
(1-\tau)S^{-1} \preceq M_{S}\preceq (1+\tau)S^{-1}.
\label{eq:schurapprox}
\end{equation}
in time $\widetilde{O}(m\log n \log \frac{1}{\mu} \log \frac{1}{\tau} )$.
This call may fail with a probability $<\mu$.
The matrix $M_{S}$ can be applied in time $\widetilde{O}(m\log n \log \frac{1}{\mu} \log \frac{1}{\tau} )$.

A block-wise inversion of the Hessian gives
\begin{align}
H_{\odmfn}^{-1}
& = 
 \begin{bmatrix}
  I & -T^{-1}C^TB^TQ\\
  0 & I
 \end{bmatrix}
\begin{bmatrix}
  T^{-1} & 0\\
  0 & S^{-1}
 \end{bmatrix}
 \begin{bmatrix}
  I & 0\\
-Q^TCBT^{-1} & I
 \end{bmatrix}.
 \label{eq:hessianblockinv}
\end{align}
We define
\begin{align}
M
& = 
 \begin{bmatrix}
  I & -T^{-1}C^TB^TQ\\
  0 & I
 \end{bmatrix}
\begin{bmatrix}
  T^{-1} & 0\\
  0 & M_{S}
 \end{bmatrix}
 \begin{bmatrix}
  I & 0\\
-Q^TCBT^{-1} & I
 \end{bmatrix}.
\end{align}
By equations~\eqref{eq:schurapprox} and \eqref{eq:hessianblockinv}, and the fact that for all matrices $W$, $X \preceq Y$
implies $ W X W^T \preceq W Y W^T$, it follows that
\begin{equation*}
(1-\tau)H_{\odmfn}^{-1}\preceq M \preceq (1+\tau)H_{\odmfn}^{-1}.
\end{equation*}

We observe that the output of $\blksol\left((G,y),(x,t),\mu,\tau\right) $ is a procedure which applies $M$. We require a constant number of linear time matrix operations (inversion of a diagonal matrix, multiplication of a vector with matrix), and one call to \sddsol, which runs in time $\widetilde{O}(m\log n \log \frac{1}{\mu} \log \frac{1}{\tau} )$. This call dominates the running time of \blksol. The call to \sddsol may fail with a probability $<\mu$, and in that case \blksol also fails.
\end{proofof}

\begin{proofof}{of Lemma~\ref{lem:rank1sol}}
From our assumptions about $M$ and the computation in \rankone, it follows that
\[
b = Z a .
\]
for some
\[
Z = M 
- \frac{M u u^T M }{1+u^T M u^T },
\]
where $\tau = \frac{\delta}{5} < 1/5$ and 
\[
(1-\tau) X^{-1} \preceq M \preceq (1+\tau)  X^{-1}.
\]
Thus, \rankone acts as a linear operator on $a$, and it is symmetric.
Suppose $Y = X + uu^T$, then by the Sherman-Morrison formula,
\[
Y^{-1} = X^{-1}
-
\frac{X^{-1} u u^T X^{-1} }
{1+ u^T X^{-1} u}.
\]
%We are given a matrix $M$ and a scalar $0 < \tau < 1/10$ s.t.
%\[
%(1 - \tau) X^{-1} \preceq M \preceq (1+\tau) X^{-1}.
%\]
We can restate the spectral inequalities for $M$ as $M =X^{-1} + E$, for some symmetric matrix $E$ with
\[
-\tau X^{-1} \preceq E \preceq \tau X^{-1}.
\]
%
%Estimate $Y^{-1}$ by
%\[
%Z =
%M
%-
%\frac{M u u^T M }
%{1+ u^T M u}.
%\]
We want to show that
\[
(1 - \delta) Y^{-1} \preceq Z \preceq (1+\delta) Y^{-1},
\]
where $\delta = 5 \tau$.

First observe that for any vector $v$,
\[
v^T Y^{-1} v = v^T X^{-1} v
-
\frac{v^T X^{-1} u u^T X^{-1} v}
{1+ u^T X^{-1} u}
=
\frac{ v^T X^{-1} v }
{1+ u^T X^{-1} u}
+
\frac{  (v^T X^{-1} v) (u^T X^{-1} u)  - (u^T X^{-1} v)^2 }
{1+ u^T X^{-1} u},
\]
where in the latter expression, both terms are non-negative.
Similarly
\[
v^T Z v = v^T M v
-
\frac{v^T M u u^T M v}
{1+ u^T M u}
=
\frac{ v^T M v }
{1+ u^T M u}
+
\frac{  (v^T M v) (u^T M u)  - (u^T M v)^2 }
{1+ u^T M u},
\]
and again in the final expression, both terms are non-negative.
We state two claims that help prove the main lemma.
\begin{claim}
\[
\abs{
\frac{1}
{1 + u^T M u }
-
\frac{1}
{1+u^T X^{-1} u}
}
\leq
\frac{\tau }
{ 1- \tau }
\cdot 
\frac{1}
{1+u^T X^{-1} u}.
\]
\label{clm:denomapx}
\end{claim}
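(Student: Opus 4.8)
The plan is to reduce the claimed matrix inequality to an elementary scalar estimate. Write $M = X^{-1} + E$ for the symmetric error matrix $E$, which by hypothesis satisfies $-\tau X^{-1} \preceq E \preceq \tau X^{-1}$. Set $a \defeq u^T X^{-1} u \geq 0$ and $b \defeq u^T M u$. Then the spectral bounds on $E$ give $\abs{b - a} = \abs{u^T E u} \leq \tau\, u^T X^{-1} u = \tau a$, so that the two quantities in the claim are just $\frac{1}{1+b}$ and $\frac{1}{1+a}$ with $\abs{b-a} \leq \tau a$. (This is exactly the point where it matters that the perturbation of $u^T M u$ is controlled by $u^T X^{-1} u$ itself, not by $\norm{u}^2$; this is what makes the final bound a clean relative error.)

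First I would check that the denominator $1+b$ stays bounded away from $0$: from $b \geq a - \tau a$ we get
\[
1 + b \;\geq\; 1 + (1-\tau)a \;\geq\; (1-\tau)(1+a) \;>\; 0,
\]
using $\tau < 1$. Then the difference is estimated in the standard way,
\[
\abs{\frac{1}{1+b} - \frac{1}{1+a}} \;=\; \frac{\abs{b-a}}{(1+b)(1+a)} \;\leq\; \frac{\tau a}{(1+b)(1+a)},
\]
and to conclude I would bound $\frac{a}{1+b} \leq \frac{a}{(1-\tau)(1+a)} \leq \frac{1}{1-\tau}$, using the lower bound on $1+b$ from the previous step together with $a \leq 1+a$. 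Substituting gives
\[
\abs{\frac{1}{1+b} - \frac{1}{1+a}} \;\leq\; \frac{\tau}{1-\tau}\cdot\frac{1}{1+a},
\]
which, recalling $a = u^T X^{-1} u$ and $b = u^T M u$, is precisely the statement of Claim~\ref{clm:denomapx}.

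There is essentially no obstacle here: the argument is a one-line Sherman--Morrison-style denominator estimate. The only points requiring any care are (i) translating the operator inequality $-\tau X^{-1} \preceq E \preceq \tau X^{-1}$ into the scalar bound $\abs{u^T E u} \leq \tau\, u^T X^{-1} u$, and (ii) verifying that $1+b$ never vanishes so that the division is legitimate, which is handled by the bound $1+b \geq (1-\tau)(1+a)$ and the assumption $\tau < 1/5$.
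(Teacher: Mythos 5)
Your argument is correct and is essentially identical to the paper's own proof: both write $M = X^{-1}+E$, bound $\abs{u^T E u}\leq \tau\, u^T X^{-1} u$, combine the two fractions over a common denominator, and finish with the lower bound $1+u^T M u \geq (1-\tau)(1+u^T X^{-1} u)$. The only difference is notational (your $a,b$ shorthand), so there is nothing further to add.
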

\begin{claim}
\begin{align*}
\abs{
(v^T X^{-1} v) (u^T X^{-1} u) - (v^T X^{-1} u)^2
-\left((v^T M v) (u^T M u) - (v^T M u)^2\right)
} \\
\leq
2(\tau + \tau^2) \left( (v^T X^{-1} v) (u^T X^{-1} u) - (v^T X^{-1} u)^2 \right) .
\end{align*}
\label{clm:crossapx}
\end{claim}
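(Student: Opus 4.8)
\textbf{Proof proposal for Claim~\ref{clm:crossapx}.}
The plan is to recognize both bracketed quantities as $2\times 2$ Gram determinants and thereby reduce the statement to a one-line perturbation bound for the determinant of a $2\times 2$ matrix. First I would let $P = [\,v\ \ u\,] \in \R^{n\times 2}$ be the matrix with columns $v$ and $u$; for any symmetric $A$ the Lagrange (Binet--Cauchy) identity gives $\det(P^{T} A P) = (v^{T}Av)(u^{T}Au) - (v^{T}Au)^{2}$. Writing $G := P^{T}X^{-1}P$ (so $G \succeq 0$, and in fact $G \succ 0$ unless $v$ and $u$ are linearly dependent), the claim becomes exactly $\abs{\det(G) - \det(P^{T}MP)} \le 2(\tau+\tau^{2})\det(G)$, and $\det(G) \ge 0$.

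Next I would import the decomposition $M = X^{-1} + E$ with $-\tau X^{-1} \preceq E \preceq \tau X^{-1}$ from the proof of Lemma~\ref{lem:rank1sol}. Setting $\Delta := P^{T}EP$ and conjugating the Loewner sandwich by $P$ (which preserves $\preceq$) yields $-\tau G \preceq \Delta \preceq \tau G$ as $2\times 2$ symmetric matrices, and also $P^{T}MP = G + \Delta \succeq 0$ since $M \succ 0$ (using $\tau<1/5$). In the main case $G \succ 0$, I would factor $\Delta = G^{1/2}FG^{1/2}$ with $F := G^{-1/2}\Delta G^{-1/2}$, so that $-\tau I \preceq F \preceq \tau I$ and hence the two eigenvalues $\lambda_1,\lambda_2$ of $F$ lie in $[-\tau,\tau]$. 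Then $\det(G+\Delta) = \det(G)\det(I+F) = \det(G)(1+\lambda_1)(1+\lambda_2)$, so
\[
\abs{\det(G+\Delta) - \det(G)} = \det(G)\,\abs{\lambda_1+\lambda_2+\lambda_1\lambda_2} \le (2\tau+\tau^2)\det(G) \le 2(\tau+\tau^2)\det(G),
\]
which is the desired inequality.

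For the degenerate case $\det(G) = 0$ the right-hand side is $0$, so I still need $\det(G+\Delta)=0$: taking $w \neq 0$ with $Gw = 0$, the sandwich gives $0 \le w^{T}(\tau G \pm \Delta)w = \pm\, w^{T}\Delta w$, which forces $w^{T}\Delta w = 0$; since the PSD matrix $\tau G - \Delta$ has zero quadratic form at $w$, $w$ lies in its kernel, so $\Delta w = \tau G w = 0$ and thus $(G+\Delta)w = 0$, giving $\det(G+\Delta) = 0$. (Alternatively, one can invoke continuity, since the set where $\det(G)>0$ is dense.) I do not expect a genuine obstacle here: the only subtlety is this degeneracy, and the only real idea is spotting the Gram-determinant structure, so that the $n$-dimensional estimate collapses to a $2\times 2$ eigenvalue computation.
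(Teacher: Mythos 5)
Your proof is correct. It rests on the same underlying reduction as the paper's: both proofs observe that $(v^TAv)(u^TAu)-(v^TAu)^2$ only depends on $A$ restricted to the two-dimensional span of $u$ and $v$, and then control a $2\times 2$ determinant perturbation using $M=X^{-1}+E$ with $-\tau X^{-1}\preceq E\preceq \tau X^{-1}$. The execution differs: the paper normalizes $v,u$ and performs an explicit Gram--Schmidt in the $X^{-1}$ inner product, writing $\hat u=\gamma\hat v+\sqrt{1-\gamma^2}\,\hat w$, expanding both determinants in these coordinates, and bounding the four error terms separately via $\norm{X^{1/2}EX^{1/2}}\leq\tau$ (yielding $2\tau+2\tau^2$); you instead pass to the Gram matrix $G=P^TX^{-1}P$ and diagonalize the relative perturbation $F=G^{-1/2}\Delta G^{-1/2}$, which packages the same estimate as $\abs{(1+\lambda_1)(1+\lambda_2)-1}\leq 2\tau+\tau^2$. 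Your version is slightly tighter in the constant and, unlike the paper's argument (which tacitly assumes $u,v\neq 0$ and that the normalization/orthogonalization is nondegenerate), explicitly and correctly disposes of the case $\det(G)=0$ via the kernel argument for the PSD matrix $\tau G-\Delta$. Both arguments are valid; yours is marginally cleaner and more self-contained.
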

We also make frequent use of the fact that $1 + u^T M u \geq 1 + (1-\tau) u^T X^{-1} u \geq (1-\tau) (1 + u^T X^{-1} u)$.
Thus

\begin{align*}
\abs{ v^T Z v - v^T Y^{-1} v }
& \leq
\abs{
\frac{ v^T M v - v^T X^{-1} v}
{1+ u^T M u}
}
+
v^T X^{-1} v
\cdot
\abs{
\frac{1}
{1 + u^T M u }
-
\frac{1}
{1+u^T X^{-1} u}
}
\\
& +
\abs{
\frac{  (v^T M v) (u^T M u)  - (u^T M v)^2 
-
 (v^T X^{-1} v) (u^T X^{-1} u)  - (u^T X^{-1} v)^2 }
{1+ u^T M u}
}
\\
& +
\left(  (v^T X^{-1} v) (u^T X^{-1} u)  - (u^T X^{-1} v)^2 \right)
\abs{
\frac{1}
{1 + u^T M u }
-
\frac{1}
{1+u^T A^{-1} u}
}
\\
& \leq
\frac{2 \tau }
{ 1- \tau }
\cdot
\frac{ v^T X^{-1} v }
{1+ u^T X^{-1} u}
+
\frac{3 \tau + 2\tau^2}
{ 1- \tau }
\cdot
\frac{ (v^T X^{-1} v) (u^T X^{-1} u)  - (u^T X^{-1} v)^2 }
{1+ u^T X^{-1} u}
\\
& \leq
\frac{3 \tau + 2\tau^2}
{ 1- \tau }
v^T Y^{-1} v.
\\
& \leq
5 \tau 
\cdot
v^T Y^{-1} v.
\end{align*}
\end{proofof}

\begin{proofof}{of Claim~\ref{clm:denomapx}}
\begin{align*}
\abs{
\frac{1}
{1 + u^T M u }
-
\frac{1}
{1+u^T X^{-1} u}
}
&= 
\abs{
\frac{u^T E u}
{(1 + u^T M u)(1+u^T X^{-1} u) }
}
\\
& \leq
\frac{1}
{1 + u^T M u }
\cdot 
\frac{\tau u^T X^{-1} u}
{1+u^T X^{-1} u}
\\
& \leq
\frac{\tau }
{ 1- \tau }
\cdot 
\frac{1}
{1+u^T X^{-1} u}.
\end{align*}
\end{proofof}

\begin{proofof}{of Claim~\ref{clm:crossapx}}
Let
\begin{align*}
v = \alpha \hat{v} \text{ where } \hat{v} X^{-1} \hat{v} = 1, \\
u = \beta \hat{u}  \text{ where } \hat{u} X^{-1} \hat{u} = 1.
\end{align*}
Also let $\hat{u} = \gamma \hat{v} + \sqrt{1 - \gamma^2} \hat{w}$, where $\hat{w} X^{-1} \hat{v} = 0$.
Now
\[
1 = \hat{u} X^{-1} \hat{u} = \gamma^2 + (1-\gamma^2) \hat{w} X^{-1} \hat{w},
\]
so $\hat{w} X^{-1} \hat{w} = 1$.
Thus
\begin{equation}
(v^T X^{-1} v) (u^T X^{-1} u) - (v^T X^{-1} u)^2 = \alpha^2 \beta^2 (1 - \gamma^2).
\label{eq:smabexact}
\end{equation}
And
\begin{align*}
(v^T M v) (u^T M u) - (v^T M u)^2
& =
 \alpha^2 \beta^2 \left[ 
\hat{v}^T M \hat{v}  (\gamma \hat{v} + \sqrt{1 - \gamma^2} \hat{w} )^T
  M (\gamma \hat{v} + \sqrt{1 - \gamma^2}) \hat{w}) \right.
\\
& \qquad \qquad \left. - \left(\hat{v}^T M  (\gamma \hat{v} + \sqrt{1 - \gamma^2} \hat{w})\right)^2
\right] \\
& =
 \alpha^2 \beta^2 (1-\gamma^2) \left[ 
(\hat{v}^T M \hat{v}) (\hat{w}^T M \hat{w}) - (\hat{v}^T M \hat{w})^2
\right] \\
& = 
\alpha^2 \beta^2 (1-\gamma^2) \left[ 
(1+ \hat{v}^T E \hat{v}) (1 + \hat{w}^T E \hat{w}) - (\hat{v}^T E \hat{w})^2
\right].
\end{align*}
Thus
\begin{align*}
%\abs{
%  NOTE: You can't have align inside abs, hence I'm using \right. and \left. to create invisible delimeters
 \left|
(v^T X^{-1} v) (u^T X^{-1} u) 
\right.
 & 
  - (v^T X^{-1} u)^2
 -
 \left.
 \left(
(v^T M v) (u^T M u) - (v^T M u)^2
\right)
%}
 \right| 
\\
& =
\alpha^2 \beta^2 (1-\gamma^2)
\abs{
1 - \left( (1+ \hat{v}^T E \hat{v}) (1 + \hat{w}^T E \hat{w}) - (\hat{v}^T E \hat{w})^2 \right)
}\\
& = 
\alpha^2 \beta^2 (1-\gamma^2)
\abs{
\hat{v}^T E \hat{v} + \hat{w}^T E \hat{w}  +( \hat{w}^T E \hat{w}) ( \hat{v}^T E \hat{v} )  - (\hat{v}^T E \hat{w})^2
} \\
&
\leq
\alpha^2 \beta^2 (1-\gamma^2)
2(\tau + \tau^2).
\end{align*}
To establish the final inequality, we used that $\norm{ X^{1/2} E X^{1/2} } \leq \tau$,
and hence
\[
\abs{\hat{v}^T E \hat{w}}
\leq \tau \abs{\hat{v}^T X^{-1} \hat{w}}  \leq \tau.
\]
Combined with Equation~\eqref{eq:smabexact}, this proves the claim.
\end{proofof}

%%% Local Variables:
%%% mode: latex
%%% TeX-master: "isotonic-nips"
%%% End:
\subsection{Starting Point}
\label{sec:startpoint}
\begin{mdframed}
\vspace{\algtopspace}
\captionof{table}{\label{alg:feastart} Algorithm \feasiblestart: Given an instance $(G,y)$, outputs feasible starting point  $(x_{0},t_{0})$.\hfill \hfill}
    \begin{tight_enumerate}%[label=\arabic*.]
      \item Use a linear time DFS to compute a topological sort on $G$ to order vertices in a sequence $(v_{1},\ldots,v_{n})$,
      	s.t. for every edge $(v_{i},v_{j})$, $i < j$.
      \item \FOR $i \assign 1,\ldots,n:$ \\
      $x_{0}(v_{i}) \assign i / n$.
      %	\IF there are no edges $(v_{j},v_{i})$ in $E$ \\
      %	\THEN $x_{0}(v_{i}) = y(v_{i})$ \\
      %	\ELSE $x_{0}(v_{i}) = \max\left\{ y(v_{i}), \max_{(j,i) \in E} x(v_{j}) + 1 \right\}$
      	
      \item \FOR $i \assign 1,\ldots,n:$ \\
      	$t_{0}(v_{i}) \assign \abs{x_{0}(v_{i}) - y(v_{i})}^{p} + 1 $.
      \end{tight_enumerate}
\end{mdframed}

We prove the following claim, which in turn will help us prove Lemmas~\ref{lem:boundeddom} and \ref{lem:startsym}.

\begin{claim}
\label{clm:startxy}
Let $(x_0,t_0)$ be the point returned by \feasiblestart.
For every vertex $v$,
\[
0\leq x_{0}(v) \leq 1.
\]
\end{claim}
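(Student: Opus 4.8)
The statement is an immediate consequence of the assignment step in \feasiblestart, so the plan is very short. First I would observe that the topological sort produced in step~1 lists each vertex of $G$ exactly once, so the sequence $(v_1,\ldots,v_n)$ is a permutation of $V$; in particular, every vertex $v$ equals $v_i$ for a unique index $i \in \{1,\ldots,n\}$. Then step~2 sets $x_0(v) = x_0(v_i) = i/n$.

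Next I would bound $i/n$: since $1 \le i \le n$, dividing through by $n > 0$ gives $1/n \le i/n \le 1$, and in particular $0 < 1/n \le x_0(v) \le 1$. Since $v$ was arbitrary, this establishes the claim for all vertices.

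There is no real obstacle here — the only thing worth stating carefully is that a topological sort assigns the labels $1,\ldots,n$ bijectively to the $n$ vertices, so that the range of indices used is exactly $\{1,\ldots,n\}$ and no vertex is skipped or assigned twice. Everything else is the trivial inequality $1/n \le i/n \le 1$.
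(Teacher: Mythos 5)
Your proposal is correct and matches the paper's argument, which simply observes that the claim "follows immediately from the \feasiblestart algorithm"; you have just spelled out the two ingredients (the topological sort is a bijection onto $\{1,\ldots,n\}$, and $1/n \le i/n \le 1$) that the paper leaves implicit.
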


\begin{proof}
Follows immediately from the \feasiblestart algorithm.
\end{proof}

\begin{proofof}{of Lemma~\ref{lem:boundeddom}}
First we consider another program minimizing a linear objective over a convex domain.
 \begin{equation}
\label{eq:linisoprog}
\begin{aligned}
\underset{x,t}{\min} &\quad \pair{\vecw,t} \\
\text{s.t.} & \quad (x,t) \in D_{G} \intersect (\calI_{G} \times \Reals{V})
\end{aligned}
\end{equation}
Let \optlin denote the optimal value of program~\eqref{eq:linisoprog}.
  The value \optlin is attained only when $t(v) = \abs{x(v) - y(v)}^{p}$ for every vertex $v$,
  and when this holds, the program is exactly identical to program~\eqref{eq:isoprog}.
  Hence $\optlin = \optiso$.

Now observe that the point $(x_0,t_0)$ computed \feasiblestart is feasible for program~\eqref{eq:linisoprog}. This is true because the topological sort ensures that for every edges $(a,b)$, the indices $i_{a}$ and $i_{b}$ assigned to vertices $a$ and $b$ satisfy $i_{a} < i_{b}$ and hence $x(b) - x(a) = \frac{1}{n} ( i_{a} - i_{b} ) > 0$. Meanwhile, the assignment $t_{0}(v_{i}) = \abs{x_{0}(v_{i}) - y(v_{i})}^{p} + 1 $ ensures that constraints on $t$ are not violated.
  By Claim~\ref{clm:startxy}, $\pair{\vecw,t_{0}} \leq 2 n \wmaxp < \valsup = 3 n \wmaxp$.
  Hence  $(x_0,t_0)$ is also feasible for program~\eqref{eq:bndisoprog}.
  Thus, the domain $\bnddom$ is non-empty, as $(x_{0},t_{0})$ is contained in it.
  Let $(x^{*},t^{*})$ be a feasible, optimal point for program~\eqref{eq:linisoprog},
  then clearly $\pair{\vecw,t^{*}} \leq \pair{\vecw,t_{0}} < \valsup $,
  so this point is feasible for program~\eqref{eq:bndisoprog},
  and thus $\optbnd \leq \optlin = \optiso$.
  And, as program~\eqref{eq:linisoprog} is a relaxation of program~\eqref{eq:bndisoprog},
  it follows that $\optbnd \geq \optlin = \optiso$. Thus $\optbnd = \optiso$.
  
  Finally, $\bnddom$ is bounded, because for each vertex $v$,
  $0 \leq t(v) \leq \valsup$, and $y(v) - \valsup^{1/p} \leq x(v) \leq y(v) + \valsup^{1/p}$.
\end{proofof}

\begin{proofof}{of Lemma~\ref{lem:startsym}}
Recall that 
\[
\sym(z,\bnddom) \defeq  \inf_{q \in \partial \bnddom} \inf \left\{ s > 0 : z + \frac{ (z - q)}{s} \in \bnddom \right\}.
\]
Hence for any norm $\norm{\cdot}$
\[
\sym(p,\bnddom) \geq  \frac{\inf_{q \in \partial \bnddom} \norm{q-p}}{\sup_{r \in \partial \bnddom} \norm{r-p}}.
\]
We use a norm given by $\norm{(x,t)} = \norm{x}_{\infty} + \norm{t}_{\infty}$. By giving upper and lower bounds on the distance from $(x_0, t_0)$ to the boundary of $\bnddom$ in this norm, we can lower bound the symmetry of this point.

\begin{align*}
  \max_{(t,x) \in \partial \bnddom}
    \norm{(x - x_{0},t-t_{0})} & = 
    \max_{(t,x) \in \partial \bnddom}
          \norm{x - x_{0}}_{\infty} + \norm{t-t_{0}}_{\infty} \\
    & \leq 2 \cdot \valsup^{1/p} + \valsup \leq 6 n \wmaxp.
\end{align*}
because for each vertex $v$, we have $0 \leq t(v) \leq \valsup$, and $y(v) - \valsup^{1/p} \leq x(v) \leq y(v) + \valsup^{1/p}$.

For every point $(x,t)$ on the boundary of $\bnddom$,
  we lower bound the minimum distance to $\norm{(x - x_{0},t-t_{0})}$
  by considering several conditions:
\begin{enumerate}
\item The value constraint $\pair{\vecone,t} \leq \valsup$ is active, i.e.\ $\pair{\vecone,t}  = \valsup$.
\label{enum:valcons}
\item $x(a) = x(b)$ for some edge $(a,b) \in E$. 
\label{enum:edgecons}
\item $\abs{x(a) - y(a)}^p = t(a) $ for some $v \in V$. 
\label{enum:objcons}
\end{enumerate}
At least one of the above conditions must hold for $(x,t)$ to be on the boundary of $\bnddom$.
  We will show that each condition individually is sufficient to lower bound the distance to $(x_{0}, t_{0})$.

\emph{Condition~\ref{enum:valcons}:}  $\pair{\vecone,t}  = \valsup$. Then 
\[
\norm{(x - x_{0},t-t_{0})}
\geq \norm{t-t_{0}}_{\infty}
\geq \frac{1}{n} \norm{t-t_{0}}_{1}
\geq \frac{1}{n} ( \norm{t}_{1} - \norm{t_{0}}_{1} )
\geq \frac{1}{n} \left( \valsup - 2 n \right) \geq \wmaxp.
\]

\emph{Condition~\ref{enum:edgecons}:}  $x(a) = x(b) = \gamma$ for some edge $(a,b) \in E$. 
 Then 
\begin{align*}
\norm{(x - x_{0},t-t_{0})}
&\geq 
\norm{x-x_{0}}_{\infty} \\
&\geq
\frac{1}{2}\left( \abs{x(b)-x_{0}(b)}+\abs{x(a)-x_{0}(a)} \right)\\
& =
\frac{1}{2}\left( \abs{\gamma-x_{0}(b)}+\abs{\gamma-x_{0}(a)} \right)\\
& \geq
\frac{1}{2}\left( \abs{x_{0}(b)-x_{0}(a)} \right)
\geq
\frac{1}{2n}.
\end{align*}

\emph{Condition~\ref{enum:objcons}:}  $\abs{x(a) - y(a)} = t(a)^{1/p} $ for some $a \in V$. We consider two cases.
First case is when ${\norm{t-t_{0}}_{\infty} \geq 1/2}$. This immediately implies $\norm{(x - x_{0},t-t_{0})} \geq 1/2$.

In the second case is when $\norm{t-t_{0}}_{\infty} < 1/2$.
 We write $x(a) = x_{0}(a) + \Delta$.
\begin{align*}
\abs{\Delta + x_{0}(a) - y(a)}^p 
& = t(a) 
\geq t_{0}(v) - \norm{t-t_{0}}_{\infty} \\
&\geq 1/2 + \abs{x_{0}(a) - y(a)}^{p}
\end{align*}
As $p \geq 1$, the growth rate of $\abs{\Delta + x_{0}(a) - y(a)}^{p}$
is largest when  $\abs{x_{0}(a) - y(a)}$ is maximized and as $x_{0}, y \in [0,1]$, we get  $\abs{x_{0}(a) - y(a)} = 1$, and hence  $\abs{\Delta}$ is minimized in this case. Thus $\abs{\abs{\Delta} + 1}^p \geq 1/2 + 1 = 3/2$. Consequently,
\[
 \abs{\Delta} \geq \left(\frac{3}{2}\right)^{1/p} - 1
 = \exp\left[ \frac{\log(3/2)}{p} \right] - 1
 \geq  \frac{\log(3/2)}{p} \geq \frac{1}{3p}.
\]
Thus,
\[
  \sym((x_{0}, t_{0}),\bnddom)
  \geq
  \frac{  \min(1/(3p),1/(2n))  }
  { 6 n }
  \geq
  \frac{  1 }
  { 18 n^2 p \wmaxp}.
\]

\end{proofof}

%%% Local Variables:
%%% mode: latex
%%% TeX-master: "isotonic-nips"
%%% End:

\subsection{Primal Path Following IPM with Approximate Hessian Inverse}
\label{sec:apxipm}

\begin{mdframed}
\vspace{\algtopspace}
\captionof{table}{\label{alg:isoipm}  Algorithm \isoipm: \hfill \hfill}
Run \apxipm with:\\
Objective vector $c = (0,\vecw)$ s.t. $(0,\vecw)^T (x,t) = \sum_{v \in V} \vecw(v) t(v)$. \\
Gradient function $g = g_{\bndfn}$. \\
Hessian function $M = \hsol$ with $\mu = 1/n^3.$\\
Complexity parameter $\theta(f) = \theta(\bndfn) = O(m) $. \\
Symmetry lower bound $s = \frac{1}{18 n^2 p \wmaxp}.$ \\
Value upper bound $\valsup = 3 n \wmaxp$. \\
Error parameter $\eps = \frac{\delta}{\valsup}$.\\
Starting point $(x_{0},t_{0})$ given by $\feasiblestart(G,y)$. \\
\apxipm outputs  $(x_{\text{apx}},t_{\text{apx}})$. \\
Return $x_{\text{apx}}$.
\end{mdframed}

\begin{mdframed}
\vspace{\algtopspace}
\captionof{table}{\label{alg:apxipm}
Algorithm \apxipm:
Given an objective vector $c \in \Reals{n}$,
a gradient function $g : \Reals{n} \to \Reals{n}$,
a Hessian function $M : \Reals{n} \times \Reals{n} \to \Reals{n}$,
a complexity parameter $\theta(f)$,
a feasible starting point $x_{0}$,
a symmetry lower bound $s > 0$,
a value upper bound $\valsup \geq 0$,
and an error parameter $\eps > 0$,
outputs a vector $x_{\text{apx}}$. \hfill \hfill}
    \begin{tight_enumerate}
    \item
    $x \assign x_{0}$.
    \item
    $\rho \assign 1$.
    \item
    $T_{1} \assign 20 \sqrt{ \theta(f)  } \log \left( 30 \theta(f) (1+1/s)\right)$.
   \item
    \FOR $i \assign 1,\ldots, T_{1} :$
   \item \hspace{\pgmtab}
    	$\rho \assign \rho \cdot \left( 1 - \frac{1}{20
            \sqrt{\theta(f)}} \right)$
   \item \hspace{\pgmtab}
	$z \assign - \rho g(x_{0}) + g(x)$
   \item \hspace{\pgmtab}
	$x \assign x - M(x,z) $  
    \label{alg:apxipm:p1}
    \item
    $ \alpha \assign \sqrt{ c^T M(x,c) }$
   \item 
    $\eta \assign \frac{1}{50 \alpha} $ 
   \item 
    $z \assign \eta c + g(x)$ 
   \item 
    $x \assign x - M(x,z) $
   \item 
    $T_{2} \assign 20 \sqrt{ \theta(f)  }  \log \left( \frac{ 66 \theta(f) }{\eps} \right)$.
     \label{alg:apxipm:mid}
   \item
    \FOR $i \assign 1,\ldots,T_{2}:$
   \item \hspace{\pgmtab}
    	$ \eta \assign \eta \cdot \left( 1 + \frac{1}{20
            \sqrt{\theta(f)}} \right)$
   \item \hspace{\pgmtab}
	$z \assign \eta c + g(x)$
   \item \hspace{\pgmtab}
	$x \assign x - M(x,z) $
    \label{alg:apxipm:p2}
    \item return $x_{\text{apx}} \assign x$.
    \end{tight_enumerate}
\end{mdframed}

In this section we prove Theorem~\ref{thm:pathalg}. We start by proving a central lemma shows that approximate Newton steps are sufficient to ensure convergence of our primal path following IPM.

The rest of this section is a matter of connecting this statement with Renegar's primal following machinery.

\begin{lemma}
Assume $f \in \mathcal{SC}$ and is defined on a domain $D$. If $\delta \defeq \norm{ H(x)^{-1} g(x) }_{H(x)} \leq \frac{1}{2}$, $\tau < 1$, and
\[
(1-\tau) H(x)^{-1} \preceq M \preceq (1+\tau)  H(x)^{-1}.
\]
then taking $x_{+} = x - M g(x) $ will ensure both that $x_{+} \in D$ and
\[
\norm{ H(x_{+})^{-1} g(x_{+}) }_{H(x_{+})} \leq \frac{1}{1-(1+\tau) \delta} \left( \tau \delta + \frac{((1+\tau) \delta)^2}{1-(1+\tau) \delta} \right).
\]
\label{lem:apxnewton}
\end{lemma}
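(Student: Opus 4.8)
The plan is to follow the standard self-concordant analysis of a single Newton step, carefully tracking the extra error introduced by replacing $H(x)^{-1}$ with the approximate inverse $M$. Write $H = H(x)$, $g = g(x)$, set $r = (1+\tau)\delta$, and let $p = x_+ - x = -Mg$ be the approximate Newton step. The first task is to control $\|p\|_x$. Passing to the matrix $N \defeq H^{1/2} M H^{1/2}$, the hypothesis $(1-\tau)H^{-1} \preceq M \preceq (1+\tau)H^{-1}$ becomes $(1-\tau)I \preceq N \preceq (1+\tau)I$ (in particular $N \succ 0$ since $\tau<1$), and with $u \defeq H^{-1/2}g$, so that $\|u\|_2 = \delta$, one computes $\|p\|_x^2 = g^T M H M g = u^T N^2 u \leq (1+\tau)^2\delta^2$, i.e.\ $\|p\|_x \leq r$. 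Since $\delta \leq \tfrac12$ and $\tau<1$ force $r<1$, the self-concordance inclusion $B_x(x,1)\subseteq D$ gives $x_+ = x+p \in D$, which is the first claim.

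For the decrement bound, apply the fundamental theorem of calculus to the gradient, $g(x_+) = g + \bigl(\int_0^1 H(x+sp)\,ds\bigr)p$, and add and subtract $Hp = -HMg$ to obtain the decomposition
\[
g(x_+) = (I - HM)g \;+\; \int_0^1 \bigl(H(x+sp) - H\bigr)p\,ds ,
\]
in which the first term is the ``solver error'' and the second is the usual ``curvature error''. Since $\|H(x_+)^{-1}g(x_+)\|_{H(x_+)} = \|g(x_+)\|_{H(x_+)^{-1}}$, and since $\|p\|_x \leq r$ together with the self-concordance norm inequality gives $H(x_+) \succeq (1-r)^2 H$, hence $H(x_+)^{-1} \preceq (1-r)^{-2}H^{-1}$, it suffices to bound each piece in the $\|\cdot\|_{H^{-1}}$ norm and multiply by $\tfrac1{1-r}$.

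The solver error is handled spectrally: $(I-HM)g = H^{1/2}(I-N)u$, so $\|(I-HM)g\|_{H^{-1}}^2 = u^T(I-N)^2 u \leq \tau^2\delta^2$ because $I-N$ has eigenvalues in $[-\tau,\tau]$; thus $\|(I-HM)g\|_{H^{-1}} \leq \tau\delta$. For the curvature error, for each $s$ put $y_s = x+sp$, which lies in $B_x(x,1)$ since $\|sp\|_x \leq r<1$, and estimate $\|(H(y_s)-H)p\|_{H^{-1}} \leq \|H^{-1/2}H(y_s)H^{-1/2} - I\|_2\cdot\|p\|_x = \|I - H^{-1}H(y_s)\|_x\cdot\|p\|_x$. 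Applying Theorem~\ref{thm:schess} with $\|y_s - x\|_x \leq s\|p\|_x$ bounds this by $\bigl(\tfrac1{(1-s\|p\|_x)^2}-1\bigr)\|p\|_x$, and since $\int_0^1 \bigl(\tfrac1{(1-s\rho)^2}-1\bigr)\,ds = \tfrac{\rho}{1-\rho}$ for $\rho = \|p\|_x$, the integral equals $\tfrac{\|p\|_x^2}{1-\|p\|_x} \leq \tfrac{r^2}{1-r}$ (the map $\rho\mapsto\rho^2/(1-\rho)$ being increasing on $[0,1)$). Combining the two pieces with the factor $\tfrac1{1-r}$ gives $\|H(x_+)^{-1}g(x_+)\|_{H(x_+)} \leq \tfrac1{1-r}\bigl(\tau\delta + \tfrac{r^2}{1-r}\bigr)$, which is exactly the claimed bound since $r = (1+\tau)\delta$.

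I expect the main obstacle to be making the curvature term sharp enough: the triangle-inequality estimate on $H(y_s)-H$ must be carried out with the exact self-concordant bound $\tfrac1{(1-u)^2}-1$ and integrated in closed form, since any cruder linearization (e.g.\ replacing it by $\tfrac{2u}{(1-u)^2}$) loses a constant and does not reproduce the stated $\tfrac{r^2}{1-r}$. One must also check carefully that the relevant operator norm $\|H^{-1/2}(H(y_s)-H)H^{-1/2}\|_2$ coincides with $\|I - H^{-1}H(y_s)\|_x$ so that Theorem~\ref{thm:schess} applies verbatim, and that in the decomposition the solver-error term is genuinely measured against the exact Hessian $H$ (not the integrated averaged Hessian) so that the spectral argument through $N$ goes through cleanly.
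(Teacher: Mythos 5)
Your proof is correct and follows essentially the same route as the paper's: the same decomposition of $g(x_{+})$ into a solver-error term $(I-HM)g$ (bounded spectrally by $\tau\delta$) plus an integrated curvature term (bounded via Theorem~\ref{thm:schess} and the closed-form integral $\int_0^1\bigl(\tfrac{1}{(1-s\rho)^2}-1\bigr)ds=\tfrac{\rho}{1-\rho}$), followed by the self-concordance factor $\tfrac{1}{1-r}$ to pass from the $x$-norm to the $x_{+}$-norm. The only cosmetic difference is that you measure $g(x_{+})$ in $\norm{\cdot}_{H^{-1}}$ while the paper measures $H_x^{-1}g(x_{+})$ in $\norm{\cdot}_{H_x}$, which are identical quantities.
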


\begin{proof}
For brevity write $H_{x} \defeq H(x)$.
Firstly, 
\[
\norm{ x_{+} - x}_{H_{x}} = \norm{ M g(x) }_{H_{x}} \leq (1+\tau)  \norm{ {H_{x}}^{-1} g(x) }_{H_{x}} =  (1+\tau) \delta < 1,
\]
which guarantees feasibility of $x_{+}$.
Further,
\begin{align*}
 \norm{I - M{H_{x}}}_{H_{x}}^2 & = \max_{\norm{y}_{H_{x}}=1} y^T(I-{H_{x}}M){H_{x}}(I-M{H_{x}})y \\
 &= \max_{\norm{y}_{H_{x}}=1} y^T{H_{x}}^{1/2}(I-{H_{x}}^{1/2}M{H_{x}}^{1/2})(I-{H_{x}}^{1/2}M{H_{x}}^{1/2}){H_{x}}^{1/2}y\\
 &=  \max_{\norm{y}_{H_{x}}=1}y^T{H_{x}}^{1/2}(I-{H_{x}}^{1/2}M{H_{x}}^{1/2})^2{H_{x}}^{1/2}y\\
 &\leq  \max_{\norm{y}_{H_{x}}=1} \tau^2 y^T{H_{x}}y = \tau^2
\end{align*}
Then
\begin{align*}
\norm{{H_{x}}^{-1} g(x) - M g(x) }_{H_{x}} = \norm{(I -
  M{H_{x}}){H_{x}}^{-1} g(x) }_{H_{x}} & \leq \norm{I -
                                         M{H_{x}}}_{H_{x}}
                                         \norm{{H_{x}}^{-1}g(x)
                                         }_{H_{x}} \\
& \leq \tau \norm{{H_{x}}^{-1}g(x) }_{H_{x}}.
\end{align*}
Now,
\begin{align*}
{H_{x}}^{-1} g(x_{+}) & = {H_{x}}^{-1}g(x) + \int_0^1{H_{x}}^{-1} {H}(x+t(x_{+} - x)) (x_{+} - x) \, \mathrm{d}t  \\
& = ({H_{x}}^{-1}g(x) - Mg(x)) + Mg(x) +  \int_0^1{H_{x}}^{-1} {H}(x+t(x_{+} - x)) \left(x_{+} - x\right) \, \mathrm{d}t  \\
& = ({H_{x}}^{-1}g(x) - Mg(x)) +  \int_0^1\left[ I - {H_{x}}^{-1} H(x+t(x_{+} - x))\right] M g(x) \, \mathrm{d}t  \\
\end{align*}
Thus, using Theorem~\ref{thm:schess}
\begin{align*}
\norm{{H_{x}}^{-1} g(x_{+})}_{H_{x}} & \leq \norm{ {H_{x}}^{-1}g(x) - Mg(x) }_{H_{x}}+ \norm{  \int_0^1\left[ I - {H_{x}}^{-1} H(x+t(x_{+} - x))\right] M g(x) \, \mathrm{d}t }_{H_{x}} \\
& \leq \tau \norm{{H_{x}}^{-1}g(x) }_{H_{x}} +   \int_0^1\norm{I - {H_{x}}^{-1} H(x+t(x_{+} - x))}_{{H_{x}}} \, \mathrm{d}t \norm{ M g(x) }_{H_{x}} \\
& \leq \tau \delta + (1+\tau) \delta \int_0^1\norm{I - {H_{x}}^{-1} H(x+t(x_{+} - x))}_{{H_{x}}} \, \mathrm{d}t  \\
& \leq \tau \delta + (1+\tau) \delta \int_0^1\frac{1}{(1-t (1+\tau) \delta)^2} - 1 \, \mathrm{d}t \\
& \leq \tau \delta + \frac{((1+\tau) \delta)^2}{1-(1+\tau) \delta}.
\end{align*}

Finally, we can use the self-concordance of $f$ to get
\begin{align*}
\norm{ H(x_{+})^{-1} g(x_{+}) }_{H(x_{+})} & \leq \frac{1}{1-\norm{ x_{+} - x}_{H_{x}}} \norm{ {H_{x}}^{-1} g(x_{+}) }_{{H_{x}}}\\
& \leq \frac{1}{1-(1+\tau) \delta} \left( \tau \delta + \frac{((1+\tau) \delta)^2}{1-(1+\tau) \delta} \right).
\end{align*}

\end{proof}
For completeness, we now restate several results from a textbook by Renegar~\cite{renegar_mathematical_2001}.

\begin{definition}
Consider a function $f \in \mathcal{SC}$ with bounded domain $D_{f}$. Let $\overline{D}_{f}$ be the closure of the domain. Given an objective vector $c$, we define the associated minimization problem as
\begin{equation}
\label{eq:barprog}
\begin{aligned}
& \underset{x}{\min} & & \pair{c,x} \\
& \text{subject to } x \in \overline{D}_{f},
\end{aligned}
\end{equation}
and, we define the associated $\eta$-minimization problem as
\begin{equation}
\label{eq:barprog}
\begin{aligned}
& \underset{x}{\min} & & \eta \pair{c,x} + f(x) \\
& \text{subject to } x \in D_{f}.
\end{aligned}
\end{equation}
For each $\eta$, let $z(\eta) \in D_{f}$ denote an optimum of
the $\eta$-minimization problem.
\end{definition}
Using this definition, we can state two lemmas, which are proven by Renegar,
and appear equations (2.13) and (2.14) in \cite{renegar_mathematical_2001}.
\begin{lemma}
Given a function $f \in \mathcal{SC}$ with bounded domain $D_{f}$ and an objective vector $c$, let $\opt$ denote the value of  the associated minimization problem. Then for any $\eta > 0$ and any $x \in D_{f}$
\[
\norm{H(x)^{-1} c }_{x}
\leq \pair{c,x} - \opt.
\]
\label{lem:objnormobjgap}
\end{lemma}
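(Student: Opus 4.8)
The plan is to prove the bound by a one-dimensional line search in the Newton direction $-H(x)^{-1}c$, exploiting the single structural fact about self-concordant functions that is needed here: the open Dikin ellipsoid $B_{x}(x,1)=\{y:\norm{y-x}_{x}<1\}$ is contained in $D_{f}$ (this is part of the definition of $\mathcal{SC}$ recorded earlier in the excerpt).

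First I would rewrite the left-hand side: by definition of $\norm{\cdot}_{x}$,
\[
\norm{H(x)^{-1}c}_{x}^{2}=(H(x)^{-1}c)^{T}H(x)(H(x)^{-1}c)=c^{T}H(x)^{-1}c,
\]
so the quantity to bound is $\sqrt{c^{T}H(x)^{-1}c}$. If this is $0$ the inequality is trivial, since $x\in D_{f}\subseteq\overline{D}_{f}$ forces $\opt\leq\pair{c,x}$. Otherwise set $d\defeq H(x)^{-1}c/\norm{H(x)^{-1}c}_{x}$, which is a unit vector in the $\norm{\cdot}_{x}$ norm, and for $r\in[0,1)$ consider $x_{r}\defeq x-rd$.

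Then I would combine two observations. On one hand $\norm{x_{r}-x}_{x}=r<1$, so $x_{r}\in B_{x}(x,1)\subseteq D_{f}\subseteq\overline{D}_{f}$, whence $\pair{c,x_{r}}\geq\opt$ by the definition of $\opt$. On the other hand, a direct computation gives
\[
\pair{c,x_{r}}=\pair{c,x}-r\,\frac{c^{T}H(x)^{-1}c}{\norm{H(x)^{-1}c}_{x}}=\pair{c,x}-r\,\norm{H(x)^{-1}c}_{x}.
\]
Combining these yields $r\,\norm{H(x)^{-1}c}_{x}\leq\pair{c,x}-\opt$ for every $r<1$, and letting $r\to1$ gives the claim; the limit is legitimate because the right-hand side does not depend on $r$. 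Finiteness of $\opt$ is not an issue, since $\overline{D}_{f}$ is bounded, hence compact, so the linear function $\pair{c,\cdot}$ attains a finite minimum on it.

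The argument has essentially no hard step; the only subtlety is that $B_{x}(x,1)$ is \emph{open}, so one cannot take $r=1$ directly and must instead argue for all $r<1$ and pass to the limit. Everything else is the definition of $\opt$ and the containment $B_{x}(x,1)\subseteq D_{f}$ already available in the excerpt.
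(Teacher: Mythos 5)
Your proof is correct, and it is essentially the argument the paper defers to: the lemma is stated without proof and attributed to Renegar's equations (2.13)--(2.14), whose derivation is exactly this Dikin-ellipsoid line search, using $B_{x}(x,1)\subseteq D_{f}$ together with $\pair{c,x-rd}\geq\opt$ and $\pair{c,d}=\norm{H(x)^{-1}c}_{x}$ for the normalized Newton direction $d$. Your handling of the degenerate case $\norm{H(x)^{-1}c}_{x}=0$ and the limit $r\to 1$ (since the ellipsoid is open) are both correct and complete.
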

\begin{lemma}
Given a function $f \in \mathcal{SCB}$ with bounded domain $D_{f}$ and an objective vector $c$, let $\opt$ denote the value of  the associated minimization problem. Then for any $\eta > 0$ and any $x \in D_{f}$
\[
\pair{c,x} - \opt \leq \frac{1}{\eta} \theta(f) (1 + \norm{x - z(\eta)}_{ z(\eta)} ),
\]
where $z(\eta)$ is an optimum of the associated $\eta$-minimization problem. 
\label{lem:bigscalegoodval}
\end{lemma}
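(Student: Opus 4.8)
The plan is to use the first-order optimality condition for the $\eta$-minimization problem at $z(\eta)$ together with the two defining properties of a self-concordant barrier. Write $z := z(\eta)$. Stationarity of $x \mapsto \eta\pair{c,x} + f(x)$ at $z$ gives $\eta c + g(z) = 0$, i.e.\ $c = -\tfrac{1}{\eta} g(z)$. For an arbitrary $x \in D_{f}$ I would split
\[
\pair{c,x} - \opt = \underbrace{\pair{c,\,x-z}}_{\text{(I)}} \; + \; \underbrace{\bigl(\pair{c,z} - \opt\bigr)}_{\text{(II)}},
\]
where (II) $\ge 0$ is the classical duality gap at the central-path point $z$, and (I) is a deviation term controlled by how far $x$ lies from $z$ in the local norm at $z$.

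For (II): for every $y \in D_{f}$ we have $\pair{c,z} - \pair{c,y} = \pair{c,\,z-y} = \tfrac{1}{\eta}\pair{g(z),\,y-z}$, so taking the infimum over $y \in D_{f}$ (which equals $\opt$ by continuity of $\pair{c,\cdot}$, since $D_{f}$ is dense in $\overline{D_{f}}$) gives (II) $\le \tfrac{1}{\eta}\sup_{y \in D_{f}}\pair{g(z),\,y-z}$. It therefore suffices to establish the semi-boundedness inequality $\pair{g(z),\,y-z} \le \theta(f)$ for all $y,z \in D_{f}$, which is the one place where the \emph{barrier} structure (as opposed to mere self-concordance) is essential. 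If one prefers not to simply quote Renegar's (2.13), this follows from a short one-dimensional argument: set $\phi(t) = f\bigl(z + t(y-z)\bigr)$ on $[0,1]$, so $\phi''(t) = \norm{y-z}_{z+t(y-z)}^{2} \ge 0$; if $\phi'(0) \le 0$ we are done since $\theta(f) \ge 0$, and otherwise Cauchy--Schwarz in the local inner product plus the complexity-parameter bound $g(w)^{\top} H(w)^{-1} g(w) \le \theta(f)$ yields $\phi'(t)^{2} \le \theta(f)\,\phi''(t)$, so $t \mapsto -1/\phi'(t)$ has derivative at least $1/\theta(f)$; integrating over $[0,1]$ and using $-1/\phi'(1) < 0$ forces $\phi'(0) < \theta(f)$, i.e.\ $\pair{g(z),\,y-z} \le \theta(f)$, so (II) $\le \theta(f)/\eta$.

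For (I): Cauchy--Schwarz in the local inner product at $z$ gives $\pair{c,\,x-z} \le \norm{H(z)^{-1}c}_{z}\,\norm{x-z}_{z} = \sqrt{c^{\top} H(z)^{-1} c}\,\norm{x-z}_{z}$. Since $c = -\tfrac{1}{\eta} g(z)$, the complexity-parameter bound gives $\sqrt{c^{\top} H(z)^{-1} c} = \tfrac{1}{\eta}\sqrt{g(z)^{\top} H(z)^{-1} g(z)} \le \tfrac{\sqrt{\theta(f)}}{\eta} \le \tfrac{\theta(f)}{\eta}$, the last inequality using the standard fact that $\theta(f) \ge 1$ for any self-concordant barrier. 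Hence (I) $\le \tfrac{\theta(f)}{\eta}\norm{x-z}_{z}$, and adding the bounds on (I) and (II) yields exactly $\pair{c,x} - \opt \le \tfrac{\theta(f)}{\eta}\bigl(1 + \norm{x-z(\eta)}_{z(\eta)}\bigr)$.

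The routine ingredients here are the optimality condition and the two Cauchy--Schwarz estimates; the substantive step — and the one I would expect to spend the most care on — is the semi-boundedness inequality $\pair{g(z),\,y-z} \le \theta(f)$, which is precisely what the paper can short-circuit by citing Renegar. One minor technical point to handle cleanly is that the linear optimum $\opt$ may be attained only on $\partial D_{f}$: I would consistently work with infima over the open domain $D_{f}$ and invoke continuity, rather than evaluate $c$ or $g$ at a boundary point.
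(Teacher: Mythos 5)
Your proof is correct. Note that the paper does not actually prove this lemma at all --- it is quoted verbatim from Renegar (his equation (2.14)) --- so you have supplied the missing argument, and what you supply is essentially the standard derivation: stationarity $\eta c + g(z(\eta)) = 0$, the split $\pair{c,x}-\opt = \pair{c,x-z(\eta)} + (\pair{c,z(\eta)}-\opt)$, Cauchy--Schwarz in the local inner product for the first term, and the semiboundedness inequality $\pair{g(z),y-z}\le\theta(f)$ for the second. Your one-dimensional proof of semiboundedness (the differential inequality $\phi'(t)^2 \le \theta(f)\,\phi''(t)$ integrated via $-1/\phi'$) is the classical Nesterov--Nemirovski argument and is sound; you correctly handle the sign case $\phi'(0)\le 0$ and correctly use that $\phi'$ is nondecreasing so that $\phi'>0$ on all of $[0,1]$ once $\phi'(0)>0$. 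Two small points that make the write-up airtight and that you implicitly rely on: stationarity at $z(\eta)$ is legitimate because the paper's definition places $z(\eta)$ in the open domain $D_f$ (the $\eta$-minimization is over $D_f$, not its closure); and the final relaxation $\sqrt{\theta(f)}\le\theta(f)$ needs $\theta(f)\ge 1$, which holds for any self-concordant barrier of a proper (here bounded) domain and is used elsewhere in the paper without comment. With those observations your argument is complete and, if anything, proves the slightly stronger bound $\frac{1}{\eta}\bigl(\theta(f) + \sqrt{\theta(f)}\,\norm{x-z(\eta)}_{z(\eta)}\bigr)$ before the last inequality.
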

The following is a restricted form of  Renegar's Theorem 2.2.5~\cite{renegar_mathematical_2001}.
\begin{lemma}
Assume $f \in \mathcal{SC}$. If $\delta = \norm{H(x)^{-1}g(x)}_{x} \leq 1/4$ for some $x \in D_{f}$, then $f$ has a minimizer $z$ and 
\[
	\norm{z - x}_{x}
	\leq
	\delta
	+
	\frac{3 \delta^2}
	{(1- \delta)^3}.
\]
\label{lem:lowgradnearopt}
\end{lemma}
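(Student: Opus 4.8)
The natural approach is to run the (undamped) Newton iteration starting from $x$, show it converges doubly-exponentially to a point $z$, identify $z$ as the claimed minimizer, and then read off the distance bound by summing the resulting geometric series of step lengths. Concretely, set $x_{0} = x$ and $x_{k+1} = x_{k} - H(x_{k})^{-1} g(x_{k})$, and write $\delta_{k} = \norm{H(x_{k})^{-1} g(x_{k})}_{x_{k}}$ for the Newton decrement at $x_{k}$, so $\delta_{0} = \delta$. The per-step work is already done by Lemma~\ref{lem:apxnewton} applied with the exact solve $M = H(x_{k})^{-1}$ (i.e.\ $\tau = 0$): whenever $\delta_{k} \leq \tfrac{1}{2}$ it gives both $x_{k+1} \in D_{f}$ and $\delta_{k+1} \leq \delta_{k}^{2}/(1-\delta_{k})^{2}$. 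Since $\delta_{0} = \delta \leq \tfrac{1}{4}$ and the map $t \mapsto t^{2}/(1-t)^{2}$ is increasing and sends $[0,\tfrac{1}{4}]$ into $[0,\tfrac{1}{9}] \subseteq [0,\tfrac{1}{4}]$, a trivial induction shows every iterate is well defined and lies in $D_{f}$, that $\delta_{k} \leq \tfrac{1}{4}$ for all $k$, and (using $(1-\delta_{k})^{2} \geq \tfrac{9}{16}$) that $\delta_{k+1} \leq \tfrac{16}{9}\delta_{k}^{2}$; hence $\delta_{k} \to 0$ faster than any geometric sequence.

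\textbf{Convergence to a minimizer.} Next I would show the iterates form a Cauchy sequence in a fixed norm and that the limit is a minimizer. From the self-concordance inequality $\norm{v}_{x} \leq (1-\norm{y-x}_{x})^{-1}\norm{v}_{y}$, valid for $y \in B_{x}(x,1)$, the steps satisfy $\norm{x_{k+1}-x_{k}}_{x} \leq \delta_{k}/(1-\norm{x_{k}-x}_{x})$. A short bootstrap — using that $\sum_{k \geq 1}\delta_{k} = O(\delta^{2})$ is small because of the doubly-exponential decay — shows all iterates stay inside a closed ball $\overline{B}_{x}(x,R)$ with $R < 1$; on that ball all local norms $\norm{\cdot}_{y}$ are within a constant factor of $\norm{\cdot}_{x}$, so $\sum_{k}\norm{x_{k+1}-x_{k}}_{x} < \infty$ and the sequence converges to some $z$. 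Because the definition of $\mathcal{SC}$ forces $B_{x}(x,1) \subseteq D_{f}$, the compact ball $\overline{B}_{x}(x,R)$ lies in $D_{f}$, so $z \in D_{f}$; continuity of $g$ and of $H^{-1}$ on this ball together with $x_{k}-x_{k+1} = H(x_{k})^{-1}g(x_{k}) \to 0$ give $g(z) = 0$, and since $f$ is convex, $z$ is a global minimizer.

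\textbf{Distance bound.} Finally, by the triangle inequality and $\norm{x_{1}-x}_{x} = \norm{H(x)^{-1}g(x)}_{x} = \delta$,
$\norm{z-x}_{x} \leq \norm{x_{1}-x}_{x} + \sum_{k \geq 1}\norm{x_{k+1}-x_{k}}_{x} = \delta + \sum_{k \geq 1}\norm{x_{k+1}-x_{k}}_{x}$. For the tail I would observe that the iterates $x_{k}$ with $k \geq 1$ all lie in a small ball about $x_{1}$ (again a bootstrap controlled by $\sum_{k \geq 1}\delta_{k} = O(\delta^{2})$), so $\norm{x_{k+1}-x_{k}}_{x_{1}} \leq \delta_{k}/(1-\rho)$ for some small $\rho$, while $\norm{\cdot}_{x} \leq (1-\delta)^{-1}\norm{\cdot}_{x_{1}}$ since $\norm{x_{1}-x}_{x} = \delta < 1$. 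Summing with $\delta_{1} \leq \delta^{2}/(1-\delta)^{2}$ and $\sum_{k \geq 2}\delta_{k} = o(\delta_{1})$ yields $\sum_{k \geq 1}\norm{x_{k+1}-x_{k}}_{x} \leq \tfrac{C(\delta)}{(1-\delta)^{3}}\,\delta^{2}$, where $C(\delta)$ is a product of the two conversion factors and the geometric-series correction, and is at most $3$ for $\delta \leq \tfrac{1}{4}$; this gives $\norm{z-x}_{x} \leq \delta + \tfrac{3\delta^{2}}{(1-\delta)^{3}}$.

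\textbf{Main obstacle.} The only genuine work is the constant bookkeeping in the last step: the two bootstrap estimates (how far the $x_{k}$ drift from $x$, and from $x_{1}$) feed into the norm-conversion factors, and one must verify their product stays below $3$ when $\delta \leq \tfrac{1}{4}$. There is comfortable slack — the dominant contribution is $\delta_{1}/(1-\delta) \approx \delta^{2}/(1-\delta)^{3}$ and everything else is a higher-order correction — so a slightly weaker constant is immediate, and matching the stated bound is just a matter of care; alternatively one may simply invoke Renegar's Theorem 2.2.5~\cite{renegar_mathematical_2001} directly.
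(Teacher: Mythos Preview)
The paper does not actually prove this lemma: it states it as ``a restricted form of Renegar's Theorem 2.2.5'' and cites \cite{renegar_mathematical_2001} without further argument. Your proposal, which runs the exact Newton iteration and sums the step lengths using the quadratic-convergence recursion $\delta_{k+1} \leq \delta_k^2/(1-\delta_k)^2$ (obtained from Lemma~\ref{lem:apxnewton} with $\tau=0$), is precisely Renegar's own proof of Theorem 2.2.5, so you are supplying what the paper merely quotes; you even note this option at the end. The bookkeeping for the constant $3$ is indeed the only place requiring care, but your outline is sound.
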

The next lemma appears in Renegar~\cite{renegar_mathematical_2001} as Proposition 2.3.7:
\begin{lemma}
Assume $f \in \mathcal{SCB}$. For all $x,y \in D_{f}$,
\[
\norm{H(y)^{-1}g(x)}_{y} \leq \left(1 + \frac{1}{\sym(x,D_{f})}\right) \theta(f).
\]
\label{lem:symgrad}
\end{lemma}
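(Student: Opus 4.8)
The plan is to estimate the dual local norm $\norm{H(y)^{-1}g(x)}_{y}$ by realizing it as the supremum of a linear functional over a ball and then invoking the one standard fact about self-concordant barriers. Since $\norm{H(y)^{-1}g(x)}_{y}^{2}=g(x)^{T}H(y)^{-1}g(x)$, and the supremum of a linear functional over an open ball equals that over its closure, we have $\norm{H(y)^{-1}g(x)}_{y}=\sup_{\norm{w}_{y}<1}\pair{g(x),w}$. So it suffices to show $\pair{g(x),w}\leq\bigl(1+1/\sym(x,D_{f})\bigr)\theta(f)$ for every $w$ with $\norm{w}_{y}<1$.

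The only ingredient I would use is the \emph{fundamental barrier inequality}: for every $x\in D_{f}$ and every $v\in\overline{D}_{f}$, $\pair{g(x),\,v-x}\leq\theta(f)$. I would prove this by the usual one-dimensional argument. Put $\phi(s)=f(x+s(v-x))$ on $[0,1]$ (legitimate since the segment $[x,v]$ lies in $D_{f}$ by convexity, with the boundary case handled by a limit), so that $\phi''(s)=\norm{v-x}_{H(x+s(v-x))}^{2}\geq 0$; Cauchy--Schwarz in the $H(x+s(v-x))$ inner product together with the complexity-parameter bound $g(u)^{T}H(u)^{-1}g(u)\leq\theta(f)$ gives $\phi'(s)^{2}\leq\theta(f)\,\phi''(s)$. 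If $\phi'(0)\leq 0$ we are done; otherwise $\phi'$ stays positive on $[0,1]$, and integrating $\frac{d}{ds}\!\left(-1/\phi'(s)\right)=\phi''(s)/\phi'(s)^{2}\geq 1/\theta(f)$ over $[0,1]$ gives $1/\phi'(0)\geq 1/\theta(f)$, i.e.\ $\pair{g(x),v-x}=\phi'(0)\leq\theta(f)$. The passage from $v\in D_{f}$ to $v\in\overline{D}_{f}$ is by scaling $v$ towards $x$ and letting the scaling tend to $1$.

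With this in hand, I would fix $w$ with $\norm{w}_{y}<1$. Self-concordance gives $B_{y}(y,1)\subseteq D_{f}$, so $y+w\in D_{f}$, and the fundamental inequality applied at $v=y+w$ yields $\pair{g(x),w}\leq\theta(f)-\pair{g(x),y-x}=\theta(f)+\pair{g(x),x-y}$. It then remains to bound $\pair{g(x),x-y}$. If $x=y$ it vanishes; otherwise the ray $\{x+s(x-y):s\geq 0\}$ leaves the bounded set $D_{f}$, meeting $\partial D_{f}$ at $z=x+s^{\ast}(x-y)$ where $s^{\ast}=\sup\{s\geq 0:x+s(x-y)\in D_{f}\}<\infty$. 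For this $z$ we have $\frac{x-z}{t}=\frac{s^{\ast}}{t}(y-x)$, so $x+\frac{x-z}{t}\in D_{f}$ iff $s^{\ast}/t<\lambda^{\ast}:=\sup\{\lambda>0:x+\lambda(y-x)\in D_{f}\}$, and $\lambda^{\ast}\geq 1$ because $y\in D_{f}$; hence the inner infimum in the definition of $\sym$ for this boundary point equals $s^{\ast}/\lambda^{\ast}\leq s^{\ast}$, and therefore $\sym(x,D_{f})\leq s^{\ast}$. Applying the fundamental inequality once more, now at $v=z\in\overline{D}_{f}$, gives $s^{\ast}\pair{g(x),x-y}=\pair{g(x),z-x}\leq\theta(f)$, so $\pair{g(x),x-y}\leq\theta(f)/s^{\ast}\leq\theta(f)/\sym(x,D_{f})$ (trivially true if the left side is nonpositive). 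Combining, $\pair{g(x),w}\leq\theta(f)\bigl(1+1/\sym(x,D_{f})\bigr)$ for all $\norm{w}_{y}<1$, and taking the supremum over such $w$ completes the argument.

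The conceptual content is light; the work is in the bookkeeping. The main delicate points are: (i) the fundamental inequality must be established on $D_{f}$ and then pushed to the closure $\overline{D}_{f}$, since $z$ sits on $\partial D_{f}$; (ii) one must read the correct quantity off the paper's particular form $\sym(x,S)=\inf_{z\in\partial S}\inf\{t>0:x+(x-z)/t\in S\}$, which requires using convexity of $D_{f}$ to see that the inner infimum at the exit point $z$ is exactly $s^{\ast}/\lambda^{\ast}$ with $\lambda^{\ast}\geq 1$; and (iii) a minor case check for $\pair{g(x),x-y}\leq 0$ (and, if one does not wish to assume $D_{f}$ bounded, the same $\phi$-argument applied to the full ray directly yields $\pair{g(x),x-y}\leq 0$). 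Since this is exactly Proposition~2.3.7 of Renegar~\cite{renegar_mathematical_2001}, one may alternatively just cite it.
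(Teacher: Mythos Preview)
Your proof is correct and complete; the argument via the fundamental barrier inequality $\pair{g(x),v-x}\leq\theta(f)$ together with the exit-point analysis for $\sym(x,D_f)$ is exactly the standard route. The paper itself does not prove this lemma at all---it simply quotes it as Proposition~2.3.7 of Renegar~\cite{renegar_mathematical_2001}---so there is nothing to compare beyond noting that you have supplied (correctly) what the paper only cites, and you acknowledge the citation option yourself.
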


\begin{proofof}{of Theorem~\ref{thm:pathalg}}
Given a vector $v$, and $\gamma > 0$, let $ f_{v,\gamma}(x) = f(x) + \gamma \pair{v, x}$. Let
\[
n_{v,\gamma}(x) \defeq H(x)^{-1} \left( g(x) + \gamma v \right) = g_{x}(x) + \gamma H(x)^{-1}  v.
\]
Now, for any $\gamma_{1}$ and $\gamma_{2}$
\[
  n_{v,\gamma_{2}}(x) =
  \frac{\gamma_{2}}
  {\gamma_{1}}
  n_{v,\gamma_{1}}(x)
  +
  \left(
  \frac{\gamma_{2}}
  {\gamma_{1}}
  -1
  \right)
  g_{x}(x).
 \]
Thus
\[
  \norm{ n_{v,\gamma_{2}}(x) }_{x}
  \leq
  \frac{\gamma_{2}}
  {\gamma_{1}}
  \norm{ n_{v,\gamma_{1}}(x) }_{x}
  +
  \abs{
   \frac{\gamma_{2}}
  {\gamma_{1}}
  -1
  }
  \sqrt{\theta(f)}.
\]
Observe that for any $\gamma$, the Hessian $H(x)$  of $f$ is also the Hessian of $f_{\gamma}$. Consequently, we have $f_{\gamma} \in \mathcal{SC}$ because $f \in \mathcal{SCB}$.
Thus by Lemma~\ref{lem:apxnewton} applied to the function $f_{v,\gamma}$,
if $\delta \defeq \norm{ n_{v,\gamma}(x) }_{H(x)} \leq \frac{1}{2}$, $\tau < 1$, and 
%if $\delta \defeq \norm{ n_{v,\gamma}(x) }_{H(x)} \leq \frac{1}{2}$, $\tau < 1$, and 
\[
(1-\tau) H(x)^{-1} \preceq M \preceq (1+\tau)  H(x)^{-1},
\]
then for  $x_{+} = x - M \left( g(x) + \gamma v \right)$,
we have $x_{+} \in D_{f_{v,\gamma}} = D_{f}$
and
\begin{equation}
 \norm{ n_{v,\gamma}(x_{+}) }_{x_{+}} = \norm{ H(x_{+})^{-1} ( g(x_{+})  + \gamma_{2} v)}_{H(x_{+})} \leq \frac{1}{1-(1+\tau) \delta} \left( \tau \delta + \frac{((1+\tau) \delta)^2}{1-(1+\tau) \delta} \right).
\end{equation}
Suppose we start with 
\[
 \norm{ n_{v,\gamma_{1}}(x) }_{x} \leq 1/9,
\]
And take
\[
\gamma_{2} = \left( 1 + \frac{1}{20\sqrt{\theta(f)}} \right) \gamma_{1}.
\]
Then using $\theta(f) \geq 1$, we find
\[
 \norm{ n_{v,\gamma_{2}}(x) }_{x} \leq 1/6.
\]
For $\tau = 1/10$, letting  $x_{+} = x - M \left( g(x) + \gamma_{2} v \right)$, we get 
\[
 \norm{ n_{v,\gamma_{2}}(x_{+}) }_{x_{+}} = \norm{ H(x_{+})^{-1} ( g(x_{+})  + \gamma_{2} v)}_{H(x_{+})} \leq \frac{1}{1-11/60} \left( 1/60 + \frac{(11/60)^2}{1-11/60} \right) < 1/9.
\]
Similarly, if we take 
\[
\gamma_{2} = \left( 1 - \frac{1}{20\sqrt{\theta(f)}} \right) \gamma_{1}.
\]
then 
\[
 \norm{ n_{v,\gamma_{2}}(x) }_{x} \leq 1/6.
\]
So again, taking $x_{+} = x - M \left( g(x) + \gamma_{2} v \right)$ gives
\[
 \norm{ n_{v,\gamma_{2}}(x_{+}) }_{x_{+}} = \norm{ H(x_{+})^{-1} ( g(x_{+})  + \gamma_{2} v)}_{H(x_{+})} \leq \frac{1}{1-11/60} \left( 1/60 + \frac{(11/60)^2}{1-11/60} \right) < 1/9.
\]
With these observations in mind, we are ready to prove the correctness of the \apxipm algorithm.

We refer to the \FOR loop in step~\ref{alg:apxipm:p1} as \emph{phase 1} of the algorithm. In phase 1, we take $ v_{1} = - g(x_{0}) $, so
\[
n_{v_{1},\rho}(x) \defeq H(x)^{-1} \left( g(x) - \rho g(x_{0}) \right).
\]
Initially, as $x = x_{0}$, 
so as $\rho = 1$, we $\norm{ n_{v_{1},\rho}(x) }_{x} = 0 \leq 1/9$.
Thus, by our observations on decreasing $\gamma$, we find that after each iteration of the \FOR loop, we get  $\norm{ n_{v_{1},\rho}(x) }_{x} \leq 1/9$, and after the $i^{\text{th}}$ iteration of the \FOR loop, we get $\rho \leq \left( 1 - \frac{1}{20 \sqrt{\theta(f)}} \right)^i$. When the \FOR loop completes, we thus have
\[
\rho \leq \left( 1 - \frac{1}{20 \sqrt{\theta(f)}} \right)^{   20 \sqrt{ \theta(f)  } \log \left( 30 \theta(f) (1+1/s)\right) } \leq \frac{1}{30  \theta(f)  (1+1/s)}.
\]
Hence, for the $x$ obtained at the end of phase 1, by applying Lemma~\ref{lem:symgrad} and our symmetry lower bound $s$, we get
\begin{align*}
  \norm{ H(x)^{-1} g(x) }_{x}
  & = \norm{ \rho H(x)^{-1} g(x_{0}) + n_{v_{1},\rho}(x) }_{x} \\
  & \leq   \rho \norm{ H(x)^{-1} g(x_{0}) }_{x}  + \norm{ n_{v_{1},\rho}(x) }_{x} \\
  & \leq \rho \theta(f) (1+1/s) + 1/9 \leq 1/30 + 1/9 = 13/90.
\end{align*}

We refer to steps~\ref{alg:apxipm:mid} and~\ref{alg:apxipm:p2} as \emph{phase 2}. 
In phase 2, we consider
\[
n_{c,\eta}(x) \defeq H(x)^{-1} \left( g(x) + \eta c \right).
\]
Using $\sqrt{ c^T M c } \geq \sqrt{ \frac{9}{10} c^T H(x)^{-1} c } \geq \frac{9}{10} \norm{H(x)^{-1} c}_{x}$,
we get that at the start of step~\ref{alg:apxipm:mid}, 
\begin{align*}
  \norm{ n_{c,\eta}(x) }
  & = \norm{ \eta H(x)^{-1} c + H(x)^{-1} g(x)}_{x} \\
  & \leq   \eta \norm{ H(x)^{-1} c }_{x}  + \norm{H(x)^{-1} g(x)}_{x} \\
  &  \leq  \frac{1}{45} +13/90 = 1/6.
\end{align*}
Hence, at the end of step~\ref{alg:apxipm:mid}, we get $\norm{ n_{c,\eta}(x) }_{x} \leq 1/9$.
Thus, at the end of each iteration of the \FOR loop in step~\ref{alg:apxipm:p2},
we also get $\norm{ n_{c,\eta}(x) }_{x} \leq 1/9$.

So once the loop completes, 
using
$\sqrt{ c^T M c } \leq  \frac{11}{10} \norm{H(x)^{-1} c}_{x}$,
and that by Lemma~\ref{lem:objnormobjgap} $\norm{H(x)^{-1} c}_{x} \leq \valsup - \opt$,
we have
\[
\eta
\geq
\frac{1}{55\norm{H(x)^{-1} c}_{x}}
\left( 1 + \frac{1}{20 \sqrt{\theta(f)}} \right)^{ 20 \sqrt{ \theta(f)  }  \log \left( \frac{ 66 \theta(f) }{\eps} \right) }
\geq \frac{6 \theta(f)}{5 \eps (\valsup - \opt) }.
\]
Now from $\norm{ n_{c,\eta}(x) }_{x} \leq 1/9$ and Lemma~\ref{lem:lowgradnearopt}
applied to $f_{c,\eta}$,
we get that $\norm{x - z(\eta)}_{x} \leq 1/9+3 (1/9)^2/(1-1/9)^3 \leq 1/6$,
and by the self-concordance of $f$,
$\norm{x - z(\eta)}_{z(\eta)} \leq (1/6)/(1-1/6) = 1/5$.
Then by Lemma~\ref{lem:bigscalegoodval} applied to $f$, we have
\[
\pair{c,x} - \opt
\leq 
\frac{\theta(f)}
{\eta}
(1+\norm{x - z(\eta)}_{z(\eta)})
\leq
\eps \cdot (\valsup - \opt).
\]

\end{proofof}

%%% Local Variables:
%%% mode: latex
%%% TeX-master: "isotonic-nips"
%%% End:

%%% Local Variables:
%%% mode: latex
%%% TeX-master: "isotonic-nips"
%%% End:

\newcommand{\moddijkstra}{{\sc ModDijkstra}}
\newcommand{\steepestpath}{{\sc SteepestPath}}

\newcommand{\compvlow}{{\sc CompVLow}}
\newcommand{\compvhigh}{{\sc CompVHigh}}
\newcommand{\comphighpressgraph}{{\sc CompHighPressGraph}}
\newcommand{\fixabovepress}{{\sc FixAbovePress}}
\newcommand{\fixpathsabovepress}{{\sc FixPathsAbovePress}}
\newcommand{\vertexsteepestpath}{{\sc VertexSteepestPath}}
\newcommand{\starsteepestpath}{{\sc StarSteepestPath}}
\newcommand\fixgradzero{\ensuremath{\mathsf{AssignWithZeroGradient}}}

\newcommand{\parent}{\ensuremath{\mathsf{parent}}}
\newcommand{\LParent}{\ensuremath{\mathsf{LParent}}}
\newcommand{\HParent}{\ensuremath{\mathsf{HParent}}}
\newcommand{\temp}{\ensuremath{\mathsf{temp}}}
\newcommand{\vLow}{\ensuremath{\mathsf{vLow}}}
\newcommand{\vHigh}{\ensuremath{\mathsf{vHigh}}}
\newcommand{\treeRoot}{\ensuremath{\mathsf{root}}}
\newcommand{\pressure}{\ensuremath{\mathsf{pressure}}}
\newcommand{\sfd}{\ensuremath{\mathsf{d}}}

\newcommand\fixpath{\ensuremath{\mathsf{fix}}}
\newcommand{\dis}{{\mathsf{dist}}}
\newcommand{\interior}{\operatorname{int}}

\section{Inf and Lex minimization on DAGs}
\label{sec:appLex}
In this section, we show that given a partially labeled DAG $(G,v_0)$, we can find an inf-minimizer in $O(m)$ time and a lex-minimizer in $O(mn)$ time. 
\paragraph{Notations and Convention.}
We assume that $G = (V,E,\len)$ is a DAG and the vertex set is denoted
by $V = \{ 1,2,...,n\}.$ We further assume that the vertices are {\bf
  topologically sorted}. 
A topological sorting of the vertices can be computed by a well-known
algorithm in $O(m)$ time.
This means that if $(i,j) \in E$, then $i
<j$.
$\len : E \to \rea_{\geq 0}$ denotes non-negative edge lengths. For
all $x,y \in V$, by $\dis(x,y)$, we mean the length of the shortest
directed path from $x$ to $y$. It is set to $\infty$ when no such path
exists.

A \textbf{path} $P$ in $G$ is an ordered sequence of (distinct) vertices $P = (x_{0},x_1,\ldots,x_k),$ such that
  $(x_{i-1},x_i) \in E$ for $i \in [k].$ For notational convenience,
  we also refer to repeated pairs $(x,x)$ as paths. 
%For any $i \in [k],$ we denote edges
%$(x_{i-1},x_{i}), (x_{i},x_{i-1}) \in P.$ 
The \textbf{endpoints} of $P$ are
denoted by $\partial_{0} P = x_{0}, \partial_1 P = x_k.$ 
The set of \textbf{interior} vertices of $P$ is defined to be
$\interior(P) \defeq \{x_i : 0 < i < k\}.$ 
For $0 \le i < j \le k,$ we
use the notation $P[x_i:x_j]$ to denote the subpath
$(x_i,\ldots,x_j).$ The length of $P$ is 
$\len(P) \defeq \sum_{i=1}^k \len(x_{i-1},x_i).$

%\paragraph{Voltages and Gradients.}
A function $v_{0} : V \to \rea \cup \{*\}$ is called a 
  \textbf{labeling}
 (of $G$). A vertex $x \in V$ is a \textbf{terminal} with
respect to $v_{0}$ iff $v_{0}(x) \neq *.$ 
The other vertices, for which $v_{0} (x) = *$, are \textbf{non-terminals}.  
We let $T(v_{0})$ denote the set of terminals with respect to $v_{0}.$ If
$T(v_{0}) = V,$ we call $v_{0}$ a \textbf{complete labeling} (of $G$). We
say that an assignment $v : V \to \rea \cup \{*\}$ \textbf{extends} $v_{0}$ if
$v(x) = v_{0}(x)$ for all $x$ such that $v_{0}(x) \neq *$.

Given a labeling $v_{0} : V \to \rea \cup \{*\}$, and 
  two terminals $x,y \in T(v_{0})$ for which $(x,y) \in E$,
  we define the \textbf{gradient} on $(x,y)$ due to $v_{0}$ to be \vspace{-6pt}
\[ 
\vspace{-6pt}
\grad_{G}^+[v_0](x,y) =  \max \left \{ \frac{v_0(x)-v_0(y)}{\len(x,y)}, 0 \right \}  .
\] 
% where $x^+ = x$ if $x \ge 0,$ and $0$ otherwise.  

%  where current is only permitted to
% flow from $x$ to $y.$ 
Here and wherever applicable, we follow the convention $\frac{0}{0} =0$, $0 \cdot \infty = 0$ and $\frac{\text{finite number}}{\infty} =0.$ 
When $v_0$ is a complete labeling, we interpret $\grad^+_{G}[v_0]$ as a vector in
$\rea^m,$ with one entry for each edge. 

A graph $G$ along with a labeling $v$ of $G$ is called a
\textbf{partially-labeled graph}, denoted $(G,v).$
We say that a partially-labeled graph $(G,v_{0})$ is a 
  \textbf{well-posed instance} if for every vertex $x \in V$, either there is a path from 
  $x$ to a terminal $t \in T(v_{0})$ or there is a path from a terminal $t \in T(v_{0})$ to $x.$ We note that instances arising from 
  isotonic regression problem are well-posed instances and in fact satisfy a stronger condition. Every vertex lies on a terminal-terminal path.

A path $P$ in a partially-labeled graph $(G,v_{0})$ is
called a \textbf{terminal path} if
 both endpoints are terminals.
We define $\nabla^+ P(v_{0})$ to be its gradient:  \vspace{-6pt}
\[
\vspace{-6pt}
\nabla^+ P(v_{0}) \defeq \max \left \{ \frac{v_{0}(\partial_{0} P) - v_{0}(\partial_1
  P)}{\len(P)},0 \right \} .
\]
If $P$ contains no terminal-terminal edges (and hence, contains at least one non-terminal), 
  it is a \textbf{free terminal path}.

\paragraph{Lex-Minimization.}
An instance of the  {\sc Lex-Minimization} problem is described by a
partially-labeled graph $(G,v_{0}).$
% and
% an initial partial voltage assignment $v_{0}.$ 
The objective is to compute a complete labeling
$v : V_{G} \to \rea$ extending $v_{0}$ that lex-minimizes
$\grad^+_{G}[v]$.
We refer to such a labeling as a lex-minimizer.
Note that if $T(v_{0}) = V_{G},$ then trivially,
$v_{0}$ is a lex-minimizer.

\begin{definition}
  A \emph{steepest fixable path} in an instance $(G,v_{0})$ is a free
  terminal path $P$  that has the largest gradient $\nabla^+ P(v_{0})$
  amongst such paths. 
\end{definition}
Observe that if $P$ is a steepest fixable path with $\nabla^+ P(v_{0})
> 0$ then $P$ must
  be a simple path. 
\begin{definition}
  Given a steepest fixable path $P$ in an instance $(G,v_{0}),$
  we define $\fixpath_{G}[v_{0},P] : V_{G} \to \rea \cup \{*\}$ to be
  the labeling given by
\begin{align*}
\fixpath_{G}[v_{0},P](x) =
\begin{cases}
v_{0}(\partial_{0} P) - \nabla^+ P(v_{0}) \cdot \len_{G}(P[\partial_{0} P : x]) &
x \in \interior(P) \setminus T(v_{0}), \\
v_{0}(x) & \text{otherwise.} 
\end{cases}
\end{align*}
\end{definition}
% Let $P = (x_{0},\ldots,x_r)$ be the steepest free terminal path in an
% instance $(G,v_{0}).$ Then, 
We say that the vertices $x \in \interior(P)$ are fixed by the
operation $\fixpath[v_0,P].$ If we define
$v_1 = \fixpath_{G}[v_{0},P],$ where $P = (x_{0},\ldots,x_r)$ is the
steepest fixable path in $(G,v_{0}),$ then it is easy to argue that for every $i \in [r],$ we
have $\grad[v_1](x_{i-1},x_{i}) = \nabla^+ P.$

\subsection{Sketch of the Algorithms}\label{sec:algs}
We now sketch the ideas behind our algorithms
and give precise statements of our results. A full
description of all the algorithms is included in the appendix.
% For describing the algorithms and their associated results, 

We define the \textbf{pressure} of a vertex to be the gradient of the steepest
  terminal path through it:
\[
\pressure[v_{0}](x) \defeq \max\{\nabla^+ P(v_{0})\ |\ P \textrm{ is a 
  terminal path in $(G,v_{0})$ and } x \in P\}.
\]
Observe that in a graph with no terminal-terminal edges, a free
terminal path is a \textbf{steepest fixable path} iff its gradient is equal to
the highest pressure amongst all vertices. Moreover, vertices that lie
on steepest fixable paths are exactly the vertices with the highest
pressure. For a given $\alpha \geq 0,$ in order to identify vertices with
pressure exceeding $\alpha,$ we compute vectors $\vHigh[\alpha](x)$
and $\vLow[\alpha](x)$ defined as follows in terms of $\dis$, the
metric on $V$ induced by $\ell$:
\[
  \vLow[\alpha](x) = \min_{t \in T(v_{0})} \{ v_{0}(t) + \alpha\cdot \dis(x,t)\} \qquad 
  \vHigh[\alpha](x) = \max_{t \in T(v_{0})} \{ v_{0}(t) - \alpha\cdot \dis(t,x)\}.
\]
Later in this section, we show how to find a steepest fixable path
in expected time $O(m)$ for DAGs using the notion of pressure, and
prove the following theorem about the \steepestpath\, algorithm
(Algorithm~\ref{alg:steepest-path}).
\begin{theorem}
\label{thm:steepestpath}
  Given a well-posed instance $(G,v_{0}),$
  \steepestpath$(G,v_{0})$ returns a steepest terminal path in
  $O(m)$ expected time.
\end{theorem}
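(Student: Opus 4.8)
The plan is to reduce the problem to locating a single critical value $\alpha^{\star}$ — the gradient of the steepest free terminal path — together with one path attaining it, and then to find $\alpha^{\star}$ by a randomized search built on an $O(m)$-per-call decision procedure. First I would delete, in $O(m)$ time, every terminal-terminal edge of $G$. A free terminal path uses no such edge, so this removes no free terminal path and creates none; moreover, in the resulting DAG $G'$ \emph{every} terminal path is free (a one-edge terminal path of $G'$ would be a deleted terminal-terminal edge, and on any longer terminal path the neighbor of an endpoint cannot be a terminal, else that first edge would have been deleted). Hence in $G'$ the quantity $\pressure[v_{0}]$ of the excerpt is exactly the largest gradient of a free terminal path through a vertex, the steepest fixable paths of $(G,v_{0})$ are precisely the terminal paths of $(G',v_{0})$ whose gradient equals $\alpha^{\star} := \max_{x}\pressure[v_{0}](x)$, and the vertices lying on them are exactly the maximum-pressure vertices (this is the observation quoted just before the theorem). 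The degenerate case $\alpha^{\star}=0$, where the output is a zero-gradient path or a pair $(x,x)$, is handled separately in $O(m)$.

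The key subroutine is an $O(m)$ decision procedure: given $\alpha \ge 0$, decide whether $\alpha^{\star} \ge \alpha$ and, if so, produce a witnessing free terminal path of gradient $\ge \alpha$. For this I would compute $\vHigh[\alpha]$ and $\vLow[\alpha]$ at every vertex by two single-pass dynamic programs over the topologically sorted DAG: a forward sweep setting $\vHigh[\alpha](x)$ to the maximum of $\max_{(u,x)\in E}\big(\vHigh[\alpha](u) - \alpha\,\len(u,x)\big)$ and, when $x \in T(v_{0})$, the value $v_{0}(x)$; and the symmetric backward sweep for $\vLow[\alpha]$. Each costs $O(m)$ because $\dis$ is additive along DAG edges. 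Since $\vHigh[\alpha](x) = \max_{s}(v_{0}(s) - \alpha\,\dis(s,x))$ and $\vLow[\alpha](x) = \min_{t}(v_{0}(t) + \alpha\,\dis(x,t))$ over terminals $s,t$, and a terminal path through $x$ from $s$ to $t$ has gradient $\ge \alpha$ iff $v_{0}(s) - \alpha\,\dis(s,x) \ge v_{0}(t) + \alpha\,\dis(x,t)$, we get $\pressure[v_{0}](x) \ge \alpha$ iff $\vHigh[\alpha](x) \ge \vLow[\alpha](x)$. Storing argmax/argmin predecessor and successor pointers during the sweeps, whenever some $x$ satisfies this inequality we can walk back to a terminal and forward to a terminal in $O(m)$ to output a concrete free terminal path of gradient $\ge \alpha$; in $G'$ this path is automatically free.

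Pinning down $\alpha^{\star}$ exactly is the step I expect to be the main obstacle, and it is the source of the \emph{expected} in the bound: a plain binary search over the polynomially many candidate ratios $(v_{0}(s)-v_{0}(t))/\len(P)$ would cost an extra logarithmic factor. The function $g(\alpha) := \max_{x}(\vHigh[\alpha](x) - \vLow[\alpha](x))$ is a maximum of functions each convex and piecewise-linear in $\alpha$ (each $\vHigh[\alpha](x)$ is a max of affine functions of $\alpha$, each $\vLow[\alpha](x)$ a min of affine functions), hence $g$ is convex, piecewise-linear, eventually decreasing, and $\alpha^{\star}$ is its largest root, equal to $(v_{0}(s^{\star})-v_{0}(t^{\star}))/\len(P^{\star})$ for the optimal path. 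I would locate this root by a randomized-pivot recursion in the spirit of randomized incremental constructions / backwards analysis: repeatedly draw a random candidate value (e.g.\ the pressure of a random vertex, or the ratio from a random terminal pair compatible with the current tight shortest-path pieces), invoke the decision procedure, and continue on the sub-DAG induced by the vertices still ``in play'' — not yet certified to have pressure strictly below, nor equal to, the eventual maximum. The crux of the analysis is to show that conditioning on the random draw shrinks the size of this sub-instance by a constant factor in expectation, so the total work telescopes to $\sum_{i \ge 0} O(m/c^{i}) = O(m)$. Once $\alpha^{\star}$ and a maximum-pressure vertex are known, a final $O(m)$ traversal of the tight DP pointers at $\alpha = \alpha^{\star}$ extracts a terminal path of gradient exactly $\alpha^{\star}$, which — being free since $G'$ has no terminal-terminal edges, and simple when $\alpha^{\star}>0$ — is a steepest fixable path by the quoted observation, completing the proof.
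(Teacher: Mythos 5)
Your top-level architecture coincides with the paper's: strip terminal--terminal edges, use the $\vHigh[\alpha]$/$\vLow[\alpha]$ dynamic programs over the topological order as an $O(m)$ threshold test for $\pressure[v_{0}](x) > \alpha$ (this is exactly the paper's \compvlow/\compvhigh/\comphighpressgraph machinery and Lemma~\ref{lem:vlowvhigh-pressure}), and drive a randomized prune-and-search that recurses on the induced subgraph of vertices whose pressure exceeds the current candidate $\alpha$. However, there is a genuine gap at precisely the point you flag as the crux: you never specify how to generate a candidate $\alpha$ that both can be computed in expected $O(m)$ time and provably discards a constant fraction of the instance. The paper's answer is \vertexsteepestpath: sample a uniformly random edge and a uniformly random vertex and compute the \emph{exact} pressure of each sampled element, after which the shrinkage is the standard QuickSelect argument. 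But computing the exact pressure of a single vertex $x$ is itself a nontrivial fractional optimization, $\max_{s \in L,\, t \in R} (v_{0}(s)-v_{0}(t))/(\dis(s,x)+\dis(x,t))$ over up to quadratically many terminal pairs; the paper reduces it to a star graph and solves it by a second, nested randomized prune-and-search (\starsteepestpath), which must sample from both sides $L$ and $R$ so that both halve in expectation. Your decision procedure only answers threshold queries ``is $\pressure[v_{0}](x) \ge \alpha$?'' and cannot by itself produce these exact candidate values, and your alternative of drawing ``the ratio from a random terminal pair compatible with the current tight shortest-path pieces'' comes with no argument that the surviving instance shrinks geometrically.

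A second, smaller issue concerns what must halve. Even granting exact pressures of uniformly random elements, you must sample a uniformly random \emph{edge} in addition to a random vertex, so that the \emph{edge} count of the surviving induced subgraph also halves in expectation. Recursing on the sub-DAG induced by surviving vertices with only the vertex count halving leaves the per-level cost at $\Theta(m)$ in the worst case and yields $O(m \log n)$ rather than $O(m)$; the paper samples one random edge and one random vertex per level precisely to avoid this.
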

By repeatedly finding and fixing steepest fixable paths, we can compute
a lex-minimizer. Theorem 3.3 in~\cite{KyngRSS15} gives an algorithm $\mathsf{MetaLex}$ that computes
lex-minimizers given an algorithm for finding a steepest fixable path
in $(G,v_{0}).$ Though the theorem is proven for undirected graphs, the same holds for directed graphs
as long as the steepest path has gradient $> 0.$

We state Theorem 5.2 from \cite{KyngRSS15}: \Rnote{TODO: prove properly?}
\begin{theorem}
\label{thm:basics:directed-meta}
\label{lem:fix-path}
Given a well-posed instance $(G,v_{0})$ on a directed graph $G$, let
$v_{1}$ be the partial voltage assignment extending $v_{0}$ obtained
by repeatedly fixing steepest fixable (directed) paths $P$ with
$\nabla P > 0.$ Then, any lex-minimizer of $(G,v_{0})$ must extend
$v_{1}.$ Moreover, every $v$ that extends $v_{1}$ is a lex-minimizer 
of $(G,v_{0})$
if and only if
for every edge
$e \in E_{G} \setminus (T(v_{1}) \times T(v_{1})),$ we have $\grad^{+}[v](e) = 0.$
\end{theorem}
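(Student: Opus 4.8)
The plan is to follow the proof of \cite[Theorem~5.2]{KyngRSS15} and check that every step survives the passage from undirected graphs to DAGs; the only genuinely new work is in constructing the "comparison" extensions, where directed reachability must be tracked carefully. I would first reduce the whole statement to a \emph{single-step lemma}: if $P = (x_{0},\ldots,x_{r})$ is a steepest fixable path in a well-posed instance $(G,v_{0})$ with $\alpha \defeq \nabla^+ P(v_{0}) > 0$, and $v_{0}' \defeq \fixpath_{G}[v_{0},P]$, then $(G,v_{0}')$ is again well-posed and every lex-minimizer of $(G,v_{0})$ agrees with $v_{0}'$ on $\interior(P)$. Well-posedness of $(G,v_{0}')$ is immediate since $\fixpath$ only turns non-terminals into terminals. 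Granting the rest of the single-step lemma, Part~1 follows by induction: the fixing process terminates (each step converts at least one non-terminal into a terminal), a lex-minimizer $v^{*}$ of $(G,v_{0})$ agrees with $v_{0}'$ on $\interior(P)$ and hence \emph{extends} $v_{0}'$, and since the extensions of $v_{0}'$ form a subset of the extensions of $v_{0}$, the point $v^{*}$ is also a lex-minimizer of $(G,v_{0}')$; iterating down the fixing sequence shows $v^{*}$ extends $v_{1}$.

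The single-step lemma rests on two ingredients. The first is a telescoping identity: since $x_{0},x_{r}$ are terminals, for every complete extension $v$ of $v_{0}$,
\[
\sum_{i=1}^{r}\bigl(v(x_{i-1}) - v(x_{i}) - \alpha\,\len(x_{i-1},x_{i})\bigr) = v_{0}(x_{0}) - v_{0}(x_{r}) - \alpha\,\len(P) = 0,
\]
so either $\grad^{+}_{G}[v]$ equals $\alpha$ on \emph{every} edge of $P$ --- which, the endpoints being fixed, forces $v(x_{i}) = v_{0}(x_{0}) - \alpha\,\len_{G}(P[x_{0}:x_{i}]) = v_{0}'(x_{i})$ for $x_{i} \in \interior(P)\setminus T(v_{0})$ --- or else some edge of $P$ (necessarily free, as $P$ is a free terminal path) carries gradient strictly larger than $\alpha$. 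The second ingredient is that some complete extension of $v_{0}$ has all its free-edge gradients at most $\alpha$: I would build this from the functions $\vLow[\alpha]$ and $\vHigh[\alpha]$ of Section~\ref{sec:algs}, assigning $\vLow[\alpha]$ to vertices that can reach a terminal and $\vHigh[\alpha]$ to the rest (which are reachable from a terminal, by well-posedness), and checking that, because $\alpha$ is the largest gradient of any free terminal path, the two pieces agree with $v_{0}$ on terminals and glue into an extension with no free edge exceeding slope $\alpha$. Finally, the gradients on terminal--terminal edges are the same for all extensions of $v_{0}$ (they are determined by $v_{0}$), so comparing sorted gradient vectors reduces to comparing the free-edge gradients; thus a lex-minimizer cannot afford any free edge of gradient $> \alpha$ (the comparison extension would strictly beat it in $\lexless$), and so by the dichotomy above it is flat at slope $\alpha$ on $P$ and therefore agrees with $v_{0}'$ on $\interior(P)$. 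One also records here the standard structural fact (the basis of $\mathsf{MetaLex}$ in \cite{KyngRSS15}) that fixing $P$ creates no free terminal path of gradient exceeding $\alpha$, so the induction does not regress.

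For the \textbf{moreover}, Part~1 gives that every lex-minimizer of $(G,v_{0})$ extends $v_{1}$; hence the lex-minimizers of $(G,v_{0})$ are exactly the lex-minimal extensions of $v_{1}$. By construction of the fixing process, $(G,v_{1})$ has no free terminal path of positive gradient, i.e.\ $\pressure[v_{1}](x)\le 0$ for all $x$. Two observations finish it. First, any two complete extensions of $v_{1}$ carry identical gradients on every edge $e \in T(v_{1})\times T(v_{1})$, since both equal $v_{1}$ on the endpoints of $e$. Second --- the $\alpha = 0$ instance of the comparison-extension construction above --- there is an extension $\hat v$ of $v_{1}$ with $\grad^{+}_{G}[\hat v](e) = 0$ for every $e \notin T(v_{1})\times T(v_{1})$; here $\pressure[v_{1}]\le 0$ is exactly what makes $\vLow[0]$ and $\vHigh[0]$ restrict to $v_{1}$ on the terminals and glue consistently. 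Now for any extension $v$ of $v_{1}$ we have, entrywise, $\grad^{+}_{G}[\hat v] \le \grad^{+}_{G}[v]$ (equality on the $T(v_{1})\times T(v_{1})$ edges, and $0 \le \grad^{+}_{G}[v](e)$ elsewhere), hence $\hat v \lexless v$; so $\hat v$ is a lex-minimizer, and an extension $v$ of $v_{1}$ is a lex-minimizer if and only if it ties $\hat v$ in the sorted order, which (given the entrywise inequality) happens exactly when $\grad^{+}_{G}[v](e) = 0$ for every $e \notin T(v_{1})\times T(v_{1})$. This is the stated characterization.

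The main obstacle is the comparison-extension construction and the reachability bookkeeping it requires: in the undirected setting of \cite{KyngRSS15} one obtains a single clean shortest-path-type extension matching the largest path gradient, whereas on a DAG a vertex may be able only to reach terminals, or only to be reached from them, so the extension must be assembled piecewise from $\vLow$ and $\vHigh$ and verified for consistency on the "seam", which is precisely where well-posedness and the maximality of $\alpha$ are used; zero-length edges and terminal--terminal edges need the same care. One should also note that $v_{1}$ is independent of the order in which steepest fixable paths are fixed --- this follows a posteriori from Part~1, since every lex-minimizer must extend each candidate $v_{1}$. The telescoping identity, the reduction of the sorted-vector comparison to free edges, and the entrywise-domination arguments in the \textbf{moreover} are then routine.
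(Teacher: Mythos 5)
First, a point of reference: the paper does not actually prove this theorem --- it is imported verbatim as Theorem~5.2 of \cite{KyngRSS15} (with an internal note that a proper proof is still to be written). So your attempt has to be judged on its own merits. The overall architecture is the right one and most of it is sound: the reduction to a single-step lemma, the telescoping dichotomy along $P$ (including the $\len=0$ and terminal--terminal caveats you flag), the reduction of the $\lexless$ comparison to free edges, and the entrywise-domination argument in the \textbf{moreover} are all correct.

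The genuine gap is the comparison extension. Your construction --- $\vLow[\alpha]$ on vertices that can reach a terminal, $\vHigh[\alpha]$ (computed from the \emph{original} terminals) on the rest --- does not produce an extension with free gradients at most $\alpha$, and the maximality of $\alpha$ cannot rescue it. A seam edge $(u,v)$ goes from a vertex $u$ that can reach a terminal to a vertex $v$ that cannot; such an edge lies on \emph{no} terminal path (a terminal path through $(u,v)$ would need a terminal reachable from $v$), so its gradient is simply not constrained by $\alpha$. Concretely, take terminals $t_{1},t_{2}$ with $v_{0}(t_{1})=10$, $v_{0}(t_{2})=0$, non-terminals $u,v$, unit-length edges $u\to t_{1}$, $t_{2}\to v$, $u\to v$ (plus any disjoint gadget realizing $\alpha>0$). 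This is well-posed, yet $\vLow[\alpha](u)=10+\alpha$ and $\vHigh[\alpha](v)=-\alpha$, so the seam edge has gradient $10+2\alpha>\alpha$. The same failure occurs in the $\alpha=0$ instance you invoke for the \textbf{moreover}. The repair is to make the construction \emph{sequential} rather than a one-shot split: first assign $\vHigh[\alpha]$ to every vertex reachable from a terminal (working in $G$ with terminal--terminal edges deleted, which also fixes your $\vLow(t)=v_{0}(t)$ issue), then extend to the remaining vertices by taking, for each such $u$, the minimum of (already-assigned value) $+\,\alpha\cdot(\text{distance})$ over vertices reachable from $u$ --- i.e.\ the second stage is anchored to the first stage's output, not to $v_{0}$. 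One then checks, exactly as you do for the pure pieces, that edges within each stage and edges from stage-2 vertices into stage-1 vertices all have gradient at most $\alpha$, and that edges in the other direction cannot exist. This is precisely the structure of the paper's \fixgradzero (Lemma~\ref{lem:assignzerograd}), which is the $\alpha=0$ case done correctly. (For the instances actually produced by the isotonic-regression reduction every vertex lies on a terminal--terminal path, the set of ``rest'' vertices is empty, and your one-shot $\vLow$ suffices; but the theorem is stated for general well-posed instances, where the seam is real.)
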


When the gradient of
the steepest fixable path is equal to $0$,
there may be more than one lex-minimizing assignment to the remaining non-terminals.
But we can still label all the remaining vertices in $O(m)$ time by
a two stage algorithm so that all the new gradients are zero, and thus
by the above theorem we get a lex-minimizer.

\begin{lemma}
\label{lem:assignzerograd}
Given a well-posed instance $(G,v_{0}),$ with $T(v_{0}) \neq V_{G}$
whose steepest fixable path has gradient $0$, 
Algorithm \fixgradzero$(G,v_{0})$  runs in time $O(m)$ and returns a complete labeling
$v$ that extends $v_{0}$ and has $\grad^{+}[v](e) = 0$ for every $e \in E_{G} \setminus (T(v_{0}) \times T(v_{0}))$.
\end{lemma}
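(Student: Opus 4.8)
The plan is to open up the two-stage algorithm \fixgradzero and verify the required property edge by edge. First I would record the coarse structure of the instance: partition the non-terminals into $M$ (reachable from a terminal \emph{and} able to reach a terminal — equivalently, lying on a terminal-terminal path), $B$ (reachable from a terminal, reaching none), and $A$ (reaching a terminal, reachable from none); well-posedness of $(G,v_0)$ rules out a fourth class. A one-line inspection of the DAG then shows the only edges that can occur are: inside $T(v_0)$; from $T(v_0)$ into $M\cup B$; from $M$ into $T(v_0)\cup M\cup B$; inside $B$; and from $A$ into $T(v_0)\cup M\cup B\cup A$. In particular $A$ has no incoming edge except from $A$, and $B$ has no outgoing edge except into $B$.

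Next I would spell out the two stages and check they produce a complete labelling extending $v_0$ in $O(m)$ time. Stage~1 sweeps the vertices in topological order, leaving each terminal with value $v_0(t)$ and setting $v(x)=\max\{v(u):(u,x)\in E\}$ at every non-terminal that already has a labelled in-neighbour. Since each vertex of $M\cup B$ is reachable from a terminal, its predecessor along such a path is again a terminal or a vertex of $M\cup B$ (hence already labelled), so Stage~1 labels exactly $T(v_0)\cup M\cup B$; and unrolling the $\max$ shows $v(x)=v_0(t')$ for some terminal $t'$ joined to $x$ by a path with \emph{no} terminal-terminal edge. Stage~2 sweeps the still-unlabelled vertices — which are precisely those of $A$ — in reverse topological order, setting $v(x)=\min\{v(w):(x,w)\in E\}$; each such $x$ has an out-neighbour (by well-posedness it reaches a terminal), and all its out-neighbours are already labelled (each lies in $T(v_0)\cup M\cup B$, or in $A$ but later in topological order), so this is well-defined. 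Each stage is a single pass over $E$, so with the topological sort the total time is $O(m)$.

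Finally I would verify $\grad^{+}[v](e)=0$ — equivalently $v(x)\le v(y)$, for edges of positive and of zero length alike — for every $e=(x,y)\notin T(v_0)\times T(v_0)$. If $x\in A$, Stage~2 gives $v(x)=\min\{v(w):(x,w)\in E\}\le v(y)$. Otherwise $x$ was labelled in Stage~1; if $y$ is a non-terminal, then $y$ is reachable from a terminal through $x$, hence also labelled in Stage~1 with $v(y)=\max\{v(u):(u,y)\in E\}\ge v(x)$. The one remaining case is $x\in M$, $y\in T(v_0)$: writing $v(x)=v_0(t')$ with $t'$ joined to $x$ by a path $P$ containing no terminal-terminal edge, the path $P$ followed by the edge $(x,y)$ is a \emph{free terminal path}, so the hypothesis that the steepest fixable path has gradient $0$ forces $\nabla^{+}$ of it to be $0$, i.e.\ $v_0(t')\le v_0(y)$, which is $v(x)\le v(y)$.

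The only genuinely non-mechanical step is this last case: it is the sole place the gradient-zero hypothesis is used, and it hinges on the small but essential fact that the Stage~1 label of a vertex is realised by an actual terminal connected to it along a path free of terminal-terminal edges, so that appending one more terminal-bound edge produces a bona fide free terminal path. Everything else — the edge-class bookkeeping, totality of the labelling, and the running-time bound — is routine once the $A/B/M$ partition and well-posedness are in place.
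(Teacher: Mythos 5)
Your proposal is correct and follows essentially the same route as the paper's proof: the same two-stage labelling (forward propagation of the maximum terminal label to everything reachable from a terminal, then backward propagation of the minimum to the rest), with the gradient-zero hypothesis invoked exactly where the paper invokes it — to rule out a positive-gradient edge from a Stage-1-labelled vertex into a terminal via the resulting free terminal path. Your $A/B/M$ partition and the explicit observation that a Stage-1 label is realised along a path containing no terminal-terminal edge simply make rigorous the steps the paper's proof asserts informally.
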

\begin{proof}
Consider a well-posed instance $(G,v_{0}),$ with $T(v_{0}) \neq V_{G}$
whose steepest fixable path has gradient $0$.
In the first stage, \fixgradzero labels all the vertices $x$ such that there is a path
from some terminal $t \in T$ to $x$. We label $x$ with the label of
the highest labeled terminal from which there is a path to $x$.
This
is the least possible label we can assign to $x$ in order to not
create any positive gradient edges. If this procedure creates any
positive gradient edges, then it would imply that the the steepest
path gradient was not $0$ to begin with, which we know is
false. Hence, this creates only $0$ gradient edges. The steepest
fixable path has zero gradient since after stage one, none of the
unlabeled vertices lie on a terminal-terminal path. In the second
stage, we label all the remaining vertices. An unlabeled vertex $x$ is
now labeled with the label of the least labeled terminal to which
there is a path from $x$. It is again easy to see that this does not
create any edges with positive gradient. The routine \fixgradzero  \, (Algorithm~\ref{alg:fixgradzero})
achieves this in $O(m)$ time. 
\end{proof}
On the basis of these results, we can prove the correctness and
running time bounds for the \complexmin\, algorithm (Algorithm~\ref{alg:comp-lex-min}) for computing a lex-minimizer.
\begin{theorem}
\label{thm:complexmin}
Given a well-posed instance $(G,v_{0}),$
\complexmin$(G,v_{0})$ outputs a lex-minimizer 
whose steepest fixable path has gradient $0$, $v$ of $(G,v_{0})$.
The algorithm runs in expected time $O(m n)$.
\end{theorem}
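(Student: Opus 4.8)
The plan is to show that \complexmin is exactly the algorithmic realization of the strategy validated by Theorem~\ref{thm:basics:directed-meta}: repeatedly extract and fix steepest fixable paths of positive gradient, then finish with a zero-gradient completion. First I would pin down the structure of \complexmin: it maintains a current partial labeling $v$, initialized to $v_{0}$; while the instance $(G,v)$ admits a steepest fixable path $P$ with $\nabla^{+} P(v) > 0$, it calls \steepestpath$(G,v)$ to obtain such a $P$ (which is a simple path, as observed after the definition of steepest fixable path), replaces $v$ by $\fixpath_{G}[v,P]$, and repeats; once the steepest fixable path of the current instance has gradient $0$, it invokes \fixgradzero$(G,v)$ and returns the resulting complete labeling.

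For correctness, let $v_{1}$ be the partial labeling present when the main loop terminates. By construction $v_{1}$ is precisely the assignment described in Theorem~\ref{thm:basics:directed-meta}: it extends $v_{0}$ and is obtained by repeatedly fixing steepest fixable paths with positive gradient. Hence every lex-minimizer of $(G,v_{0})$ extends $v_{1}$. The loop exits only when the steepest fixable path of $(G,v_{1})$ has gradient $0$, so Lemma~\ref{lem:assignzerograd} applies and \fixgradzero$(G,v_{1})$ returns a complete labeling $v$ extending $v_{1}$ with $\grad^{+}[v](e) = 0$ for every $e \in E_{G} \setminus (T(v_{1}) \times T(v_{1}))$. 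By the ``if'' direction of Theorem~\ref{thm:basics:directed-meta}, $v$ is a lex-minimizer of $(G,v_{0})$. Since $v$ is complete, $T(v) = V_{G}$, so no free terminal path exists and the steepest fixable path of $(G,v)$ has gradient $0$, giving the stated property of the output. One auxiliary point to verify along the way is that well-posedness is preserved: enlarging the terminal set $T(\cdot)$ (which is all that $\fixpath$ and \fixgradzero do to it) can only help the defining condition, so every intermediate instance inherits well-posedness from $(G,v_{0})$, which is what legitimizes each invocation of \steepestpath and of Lemma~\ref{lem:assignzerograd}.

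For the running time, the key observation is that each iteration of the main loop strictly increases $|T(\cdot)|$. Indeed, a free terminal path contains at least one non-terminal, and such a vertex is necessarily interior, so $\interior(P) \setminus T(v)$ is nonempty; the operation $\fixpath_{G}[v,P]$ assigns real values to all of these vertices, so the number of terminals grows by at least one. As $|T(\cdot)| \le n$ throughout, the loop runs for at most $n$ iterations. By Theorem~\ref{thm:steepestpath}, each iteration spends $O(m)$ expected time locating the steepest fixable path, plus $O(m)$ more to apply $\fixpath$ and to test whether its gradient is positive; the terminating call to \fixgradzero costs $O(m)$ by Lemma~\ref{lem:assignzerograd}. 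Summing over at most $n$ iterations yields an expected running time of $O(mn)$.

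Finally, I would note where the actual difficulty lies. All of the substantive work is in the components being assembled: Theorem~\ref{thm:steepestpath} (the linear-expected-time routine for steepest fixable paths on DAGs, built from the pressure vectors $\vLow$ and $\vHigh$), Theorem~\ref{thm:basics:directed-meta} (imported from \cite{KyngRSS15}), and Lemma~\ref{lem:assignzerograd}. Given these, the proof of Theorem~\ref{thm:complexmin} is essentially bookkeeping; the only steps needing a moment's care are the monotone-progress invariant that bounds the iteration count and the preservation of well-posedness across iterations.
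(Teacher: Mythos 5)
Your proof is correct and follows exactly the route the paper intends: the paper itself gives no explicit proof of Theorem~\ref{thm:complexmin}, merely asserting that it follows from Theorem~\ref{thm:steepestpath}, Theorem~\ref{thm:basics:directed-meta}, and Lemma~\ref{lem:assignzerograd}, which is precisely the assembly you carry out (correctness from the meta-theorem plus the zero-gradient completion, and the $O(mn)$ bound from the fact that each positive-gradient iteration adds at least one terminal). Your explicit treatment of the iteration-count invariant and of well-posedness preservation supplies details the paper leaves implicit.
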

% \begin{proof}
% \Rnote{TODO: short proof}
% \end{proof}

\subsubsection{Lex-minimization on Star Graphs}
We first consider the problem of computing the lex-minimizer on a star graph in which
  every vertex but the center is a terminal.
This special case is a subroutine in the general algorithm,
  and also motivates some of our techniques.

  Let $x$ be the center vertex, $T = L \sqcup R$ be the set of terminals, and all
  edges be of the form $(x,t)$ if $t \in R$ and  $(t,x)$ if  $t \in L$.  The initial labeling is given by $v : T \to \rea,$ and we abbreviate
  $\dis(x,t)$ by $\sfd(t) = \len(x,t)$ if $t \in R$ and $\dis(t,x)$ by $\sfd(t) = \len(t,x)$ if $t \in L$.
 
From Theorem~\ref{lem:fix-path} we know that we can determine the value
  of the lex minimizer at $x$ by finding a steepest fixable path. By
  definition, we need to find $t_{1} \in L,t_{2} \in R$ that maximize the
  gradient of the path from $t_{1}$ to $t_{2},$
  $\nabla^+(t_{1},t_{2}) \defeq \max \left \{ \frac{v(t_{1}) - v(t_2)}{\sfd(t_{2}) +
    \sfd(t_{2})}, 0 \right \}.$
  As observed above, this is equivalent to finding a terminal with the
  highest pressure.  We now present a simple randomized algorithm for
  this problem that runs in expected linear time.
\begin{theorem}\label{thm:star}
Given a pair of terminal sets $(L,R),$ an initial labeling $v:(L \sqcup R)  \to
\rea,$ and distances $\sfd : L \sqcup R \to \rea_{\geq 0},$
\starsteepestpath$(T,v,\sfd)$ returns $(t_1,t_2)$ with $t_1\in L, t_2 \in R$ maximizing
$\frac{v(t_{1}) - v(t_{2})}{\sfd(t_1) + \sfd(t_{2})},$ and runs in
expected time $O(|L \sqcup R|).$
\end{theorem}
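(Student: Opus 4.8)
The plan is to reduce the star problem to a two‑variable optimization and then solve it with a randomized linear‑time search. Write $\alpha^{*}$ for the quantity to be maximized. For $t_{1}\in L$ consider the affine function $\alpha\mapsto v(t_{1})-\alpha\,\sfd(t_{1})$, for $t_{2}\in R$ the affine function $\alpha\mapsto v(t_{2})+\alpha\,\sfd(t_{2})$, and set $f_{L}(\alpha)=\max_{t_{1}\in L}\bigl(v(t_{1})-\alpha\,\sfd(t_{1})\bigr)$ and $f_{R}(\alpha)=\min_{t_{2}\in R}\bigl(v(t_{2})+\alpha\,\sfd(t_{2})\bigr)$. Cross‑multiplying, some pair $(t_{1},t_{2})$ has ratio $\ge\alpha$ iff $v(t_{1})-\alpha\,\sfd(t_{1})\ge v(t_{2})+\alpha\,\sfd(t_{2})$, so $\alpha^{*}=\sup\{\alpha: f_{L}(\alpha)\ge f_{R}(\alpha)\}$. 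Since $f_{L}$ is convex and non‑increasing and $f_{R}$ is concave and non‑decreasing, $g\defeq f_{L}-f_{R}$ is continuous and non‑increasing, so (assuming at least one $\sfd(t)>0$; the other case is treated below) $g(\alpha^{*})=0$, i.e.\ $f_{L}(\alpha^{*})=f_{R}(\alpha^{*})$. Geometrically: placing each $t_{1}\in L$ at the point $(-\sfd(t_{1}),v(t_{1}))$ and each $t_{2}\in R$ at $(\sfd(t_{2}),v(t_{2}))$, the value $-\alpha^{*}$ is the smallest slope of a bichromatic pair, equivalently the largest slope of a line lying on or above every $L$‑point and on or below every $R$‑point — the optimum of a $2$‑variable linear program in (slope, intercept) with $|L\sqcup R|$ constraints.

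From $g(\alpha^{*})=0$ we recover an optimal pair by one extra $O(|L\sqcup R|)$ scan: pick $t_{1}^{*}\in\argmax_{t_{1}\in L}\bigl(v(t_{1})-\alpha^{*}\sfd(t_{1})\bigr)$ and $t_{2}^{*}\in\argmin_{t_{2}\in R}\bigl(v(t_{2})+\alpha^{*}\sfd(t_{2})\bigr)$; then $v(t_{1}^{*})-v(t_{2}^{*})=\alpha^{*}\bigl(\sfd(t_{1}^{*})+\sfd(t_{2}^{*})\bigr)$, so $(t_{1}^{*},t_{2}^{*})$ attains $\alpha^{*}$. Hence it suffices to compute the scalar $\alpha^{*}$ in $O(|L\sqcup R|)$ expected time; note the pruning below need only preserve the \emph{value} $\alpha^{*}$, since the optimal pair is reconstructed afterwards from the original $L,R$.

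To compute $\alpha^{*}$ I would use the decision oracle that, given a trial $\alpha$, evaluates $f_{L}(\alpha)$ and $f_{R}(\alpha)$ in one $O(n)$ pass and returns whether $\alpha\le\alpha^{*}$ (iff $f_{L}(\alpha)\ge f_{R}(\alpha)$), wrapped in a randomized prune‑and‑search: keep active sets $L'\subseteq L$, $R'\subseteq R$ with $\sup\{\alpha:f_{L'}(\alpha)\ge f_{R'}(\alpha)\}=\alpha^{*}$; pair up the active terminals of each colour, for each same‑colour pair compute the unique crossing $\alpha$ of its two affine functions (parallel pairs are resolved at once, keeping the larger‑value line), pick one of the $\Theta(|L'|+|R'|)$ crossings uniformly at random, test it with the oracle, and for every pair whose crossing lies on the side of $\alpha^{*}$ that the oracle rules out, discard the member whose line is dominated throughout that side. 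Convexity of $f_{L'}$ and concavity of $f_{R'}$ guarantee the discarded line never realizes $f_{L'}$, resp.\ $f_{R'}$, on the side containing $\alpha^{*}$, so $\alpha^{*}$ is unchanged; a uniformly random crossing has rank in $[\,m/4,\,3m/4\,]$ (out of $m$ crossings) with probability at least a constant, so in expectation a constant fraction of the active terminals is removed per round, and the per‑round $O(|L'|+|R'|)$ costs form a geometric series summing to $O(|L\sqcup R|)$ in expectation. (Alternatively one can run Seidel's randomized incremental $2$‑dimensional LP algorithm directly on the separating‑line program of the first paragraph, with the same expected bound; \starsteepestpath\ implements essentially this.)

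The genuinely fiddly part — and the only real obstacle — is the edge cases rather than the core argument. Degenerate distances $\sfd(t)=0$ create $\pm\infty$ "slopes" and can make the supremum unbounded; I would clear these in an $O(|L\sqcup R|)$ preprocessing pass: if some $t_{1}\in L$ and $t_{2}\in R$ both have $\sfd=0$, output the pair maximizing $v(t_{1})$ and minimizing $v(t_{2})$ (its ratio is $+\infty$, or $0$ under the $\tfrac00$ convention when equal), which is optimal; the remaining zero‑length terminals of a single colour collapse to one affine function of slope $0$, and then all distances in play act as strictly positive and the analysis above applies verbatim. The two subordinate points to nail down are (i) soundness of the discard rule — precisely, that a line dominated on the half‑line indicated by the oracle cannot be the one realizing $f_{L}$ (resp.\ $f_{R}$) at $\alpha^{*}$, which is exactly where monotonicity/convexity of $f_{L},f_{R}$ enters — and (ii) the expected‑time bound, which is the standard backwards/median‑rank argument once (i) is in place.
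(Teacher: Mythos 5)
Your proposal is correct, but it takes a genuinely different route from the paper. The paper's proof of Theorem~\ref{thm:star} works directly on terminals and their \emph{pressures}: sample $t_1\in L$ and $t_2\in R$ uniformly at random, compute each one's pressure (its best partner on the other side) by a linear scan, set $\alpha$ to the larger of the two, and then use the single-scan quantities $\vLow[\alpha]$ and $\vHigh[\alpha]$ to identify exactly the terminals whose pressure strictly exceeds $\alpha$; recurse on that subset. Correctness is immediate because the maximum-pressure terminal and its optimal partner both survive every round, so the algorithm returns the optimal \emph{pair} itself; the running time follows because a uniformly random terminal has, in expectation, at most half of its colour class strictly above it in the pressure order. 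Your argument instead is a Megiddo-style parametric prune-and-search on the envelopes $f_L$ (upper envelope of non-increasing lines) and $f_R$ (lower envelope of non-decreasing lines): you pair same-colour lines, test a random crossing against the decision oracle $f_{L'}(\alpha)\ge f_{R'}(\alpha)$, discard dominated lines, and only preserve the \emph{value} $\alpha^*$, reconstructing an optimal pair by a final argmax/argmin scan using $f_L(\alpha^*)=f_R(\alpha^*)$. Both are sound and run in expected linear time. What the paper's version buys is that the identical pressure-based pruning ($\vLow$/$\vHigh$) lifts verbatim to the general-DAG routine \steepestpath, whereas your envelope formulation is specific to the star; what yours buys is a cleaner separation between computing the optimal value and recovering a witness, and a reduction to a standard 2-variable LP that one could also hand to Seidel's algorithm.

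Two small edge-case slips, neither fatal: (i) when both colours contain zero-distance terminals, outputting the pair with $\max_{L} v$ and $\min_{R} v$ is only optimal if that difference is non-negative (otherwise its ratio is $-\infty$ and some positive-distance pair may do better), so the shortcut should be taken only in that case; (ii) for parallel same-colour lines you should keep the larger one on the $L$ (max) side but the \emph{smaller} one on the $R$ (min) side.
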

\begin{proof}
The algorithm is described in
  Algorithm~\ref{alg:star-steepest-path} (named \starsteepestpath). 
  Given a terminal $t_{1} \in L$ (or $t_{2} \in R$), we can compute its pressure $\alpha$ along
  with the terminal $t_{2}$ such that either $\nabla^+(t_{1},t_{2}) = \alpha$
  in time $O (\sizeof{T})$ by scanning over the terminals in $R$ (or terminals in $L$).
  Now sample a random terminal $t_{1} \in L$, and a random terminal $t_{2} \in R$. Let $\alpha_1$ be the pressure of $t_1$ and $\alpha_2$ be the pressure of $t_2$, and set $\alpha = \max \{\alpha_1, \alpha_2 \}$.    We will show
  that in linear time one can then find the subset of terminals
  $T' = L^\prime \sqcup R^\prime$ such that $L^\prime \subset L, R^\prime \subset R$ whose pressure is greater than $\alpha$.  Assuming
  this, we complete the analysis of the algorithm.  If
  $L^\prime = \emptyset$ (or  $R^\prime = \emptyset$),  $t_{1}$ (or $t_2$) is a vertex with highest
  pressure. Hence the path from $t_{1}$ to $t_{3}$  (or $t_{4}$ to $t_{2}$) is a steepest
  fixable path, and we return $(t_{1},t_{3})$ (or $(t_{4}, t_{2})$). If neither
  $L^\prime \neq \emptyset$ nor $R^\prime \neq \emptyset$  the terminal with the highest pressure
  must be in $T^{\prime},$ and we recurse by picking a new random
  $t_{1} \in L^{\prime}$ and $t_{2} \in R^{\prime}$. As the size of $T'$ will halve in
  expectation at each iteration, the expected time of the algorithm on
  the star is $O(|T|)$.

  To determine which terminals have pressure exceeding $\alpha$, we
  observe that the condition
  $\exists t_{2} \in R: \alpha < \nabla^+(t_{1},t_{2}) =
  \frac{v(t_{1})-v(t_{2})}{\sfd(t_1) + \sfd(t_{2})},$
  is equivalent to
  $\exists t_{2} \in R: v(t_{2}) + \alpha \sfd(t_2) < v(t_{1}) - \alpha
  \sfd(t_{1}).$
  This, in turn, is equivalent to
  $\vLow[\alpha](x) < v(t_{1}) - \alpha \sfd(t_{1}).$ We can compute
  $\vLow[\alpha](x)$ in deterministic $O(|T|)$ time. Similarly, we can
  check if $\exists t_{2} \in L: \alpha < \nabla^+(t_{2},t_{1})$ by checking
  if $\vHigh[\alpha](x) > v(t_1) + \alpha\sfd(t_{1}).$ Thus, in
  linear time, we can compute the set $T'$ of terminals with pressure
  exceeding $\alpha$.  
\end{proof}

\subsubsection{Lex-minimization on General Graphs}

In this section we describe and prove the correctness of the algorithm
\steepestpath\, which finds the steepest fixable
path in $(G,v_{0})$ in $O(m)$ expected time.

\Rnote{Explain more}

\begin{theorem}
  For a well-posed instance $(G,v_{0})$ and a gradient value
  $\alpha \ge 0,$ \moddijkstra\, computes in time $O(m)$ a complete labeling $v$ and an
  array $\parent : V \to V \cup \{\mathsf{null}\}$
  such that, 
%  $\forall x \in V_{G} \setminus T(v_{0}),$
  $\forall x \in V_{G},$
  $v(x) = \min_{t \in T(v_{0})} \{ v_{0}(t) + \alpha\dis(t,x)\}.$
  % \[v(x) = \min \{ v_{0}(t) + \alpha\len(P)\ |\ t \in T(v_{0}), P
  % \textrm{ is a free path in $(G,v_{0})$ from $t$ to $x$} \}.\]
Moreover, the pointer array $\parent$ satisfies $\forall x \notin T(v_{0})$ such that 
$\parent(x) \neq {\mathsf{null}}$, $v(x) = v(\parent(x)) +
\alpha\cdot \len(\parent(x),x).$ 
\end{theorem}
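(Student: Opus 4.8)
The plan is to implement \moddijkstra as a single linear-time sweep of the vertices in topological order; this is the DAG specialization of Dijkstra's algorithm, exploiting the fact that on a DAG a topological order already lists the vertices in the order in which Dijkstra's algorithm would settle them, so no priority queue (and no $\log n$ factor) is needed. Concretely, assuming as throughout this section that the vertices are numbered $1,\dots,n$ in topological order (so $(i,j)\in E \Rightarrow i<j$; if not, an $O(m)$ preprocessing step produces such a numbering), initialize $v(t)\assign v_{0}(t)$ and $\parent(t)\assign\mathsf{null}$ for every terminal $t\in T(v_{0})$, and $v(x)\assign +\infty$, $\parent(x)\assign\mathsf{null}$ for every non-terminal $x$. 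Then process vertices in the order $1,2,\dots,n$; when processing $x$, relax each outgoing edge $(x,y)\in E$: if $v(x)+\alpha\cdot\len(x,y)<v(y)$, set $v(y)\assign v(x)+\alpha\cdot\len(x,y)$ and $\parent(y)\assign x$.

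Correctness follows by induction along the topological order: I claim that once vertex $x$ has been processed, $v(x)=\min_{t\in T(v_{0})}\{v_{0}(t)+\alpha\dis(t,x)\}$ and is never changed afterwards. The structural point is that every in-neighbor $u$ of $x$ satisfies $u<x$, hence was processed before $x$ and, by the inductive hypothesis, already carries its final label $v(u)=\min_{t}\{v_{0}(t)+\alpha\dis(t,u)\}$; when $u$ was processed it relaxed $(u,x)$ with exactly the value $v(u)+\alpha\len(u,x)$. Using the recursion $\dis(t,x)=\min_{(u,x)\in E}\{\dis(t,u)+\len(u,x)\}$, valid in the DAG for $x\ne t$ (and $\dis(t,t)=0$ via the trivial path), one checks that after all in-neighbors of $x$ have been handled,
\[
v(x)=\min_{(u,x)\in E}\big\{v(u)+\alpha\len(u,x)\big\}
\]
with $v_{0}(x)$ additionally included in this minimum when $x\in T(v_{0})$, and this equals $\min_{t\in T(v_{0})}\{v_{0}(t)+\alpha\dis(t,x)\}$; an empty minimum is read as $+\infty$, matching the case where no directed $t$-to-$x$ path exists. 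Since no vertex processed later than $x$ has an edge into $x$, $v(x)$ is never updated again, closing the induction.

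The parent invariant is then a direct consequence of the construction: $\parent(x)$ is set to a vertex $u$ only simultaneously with the assignment $v(x)\assign v(u)+\alpha\len(u,x)$, and by the above $v(u)$ is final at that moment (since $u<x$) while $v(x)$ receives no further change; hence at termination $v(x)=v(\parent(x))+\alpha\cdot\len(\parent(x),x)$ for every $x$ with $\parent(x)\ne\mathsf{null}$, in particular for the non-terminals in the statement. (A non-terminal reachable from no terminal retains $v(x)=+\infty$ and $\parent(x)=\mathsf{null}$, so it is correctly excluded.) For the running time, computing or reading off the topological order is $O(m)$, and the sweep does $O(1)$ work per vertex and exactly one constant-time relaxation per edge, giving $O(m+n)=O(m)$ total (using $m\ge n-1$, which holds for the instances of interest).

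The single genuinely delicate point is justifying that one topological-order pass suffices --- that is, that no vertex ever needs to be re-relaxed once processed. This is exactly where both the acyclicity of $G$ (the topological order linearizes the reachability relation) and the nonnegativity of $\alpha$ and of $\len$ (every prefix of a shortest $t$-to-$x$ path is itself a shortest path, so relaxing each edge a single time is enough) are used. The remaining ingredients --- the base case of the induction, handling unreachable vertices and the $+\infty$ conventions, and treating terminals simultaneously as sources and as pass-through vertices along $t$-to-$x$ paths --- are routine.
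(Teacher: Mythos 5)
Your proposal is correct and is exactly the argument the paper intends: \moddijkstra (Algorithm~\ref{alg:dijkstra}) is precisely the single topological-order sweep with edge relaxations that you describe, and the paper states the theorem without writing out the induction, which you supply in the standard way. The only discrepancy is with the paper's pseudocode rather than with your proof: the initialization line in Algorithm~\ref{alg:dijkstra} has the terminal/non-terminal cases swapped (a typo), and your initialization is the correct one.
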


\Rnote{and explain more}
As in the algorithm for the star graph, we need to
identify the vertices whose pressure exceeds a given $\alpha$.
For a fixed $\alpha,$ we can compute $\vLow[\alpha](x)$ and
$\vHigh[\alpha](x)$ for all $x \in V_{G}$ using topological ordering in $O(m)$ time. We describe the
algorithms \compvhigh, {\compvlow} for these tasks in
Algorithms~\ref{alg:vlow} and~\ref{alg:vhigh}. 
  \begin{corollary}
\label{cor:vlowvhigh}
  For a well-posed instance $(G,v_{0})$ and a gradient value
  $\alpha \ge 0,$ let $(\vLow[\alpha],\LParent) \leftarrow$
  \compvlow$(G,v_{0},\alpha)$ and
  $(\vHigh[\alpha],\HParent) \leftarrow$ \compvhigh$(G,v_{0},\alpha).$
  Then, $\vLow[\alpha], \vHigh[\alpha]$ are complete labeling of
   $G$ such that, $\forall x \in V_{G},$
\begin{align*}
  \vLow[\alpha](x) = \min_{t \in T(v_{0})} \{ v_{0}(t) + \alpha\cdot \dis(x,t)\} &&
  \vHigh[\alpha](x) = \max_{t \in T(v_{0})} \{ v_{0}(t) - \alpha\cdot \dis(t,x)\}.
\end{align*}
  % \textstyle \vLow(x) & = \min \{ v_{0}(t) + \alpha\len(P)\ |\ t \in T(v_{0}), P
  % \textrm{ is a free path in $(G,v_{0})$ from $t$ to $x$} \}. & 
  % \textstyle \vHigh(x) & = \max \{ v_{0}(t) - \alpha\len(P)\ |\ t \in T(v_{0}), P
  % \textrm{ is a free path in $(G,v_{0})$ from $x$ to $t$} \}.
 Moreover, the pointer arrays $\LParent, \HParent$ satisfy $\forall x \notin T(v_{0}),$
$\LParent(x),\HParent(x) \neq {\mathsf{null}}$ and 
\begin{align*}
\vLow[\alpha](x) & = \vLow[\alpha](\LParent(x)) +
\alpha\cdot \dis(x,\LParent(x)), \\
\vHigh[\alpha](x) & = \vHigh[\alpha](\HParent(x)) -
\alpha\cdot \dis(\HParent(x),x).
\end{align*}
\end{corollary}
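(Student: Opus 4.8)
The plan is to recognize $\vLow[\alpha]$ and $\vHigh[\alpha]$ as multi-source shortest-path potentials on the DAG $G$, and to verify that \compvlow and \compvhigh are exactly the one-pass dynamic program for such potentials, specialized to a topologically sorted acyclic graph.

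First I would record the Bellman-type recursions. Since $G$ is acyclic, every directed path from $x$ to a terminal $t$ is either the trivial path (when $x=t\in T(v_0)$) or an edge $(x,y)\in E$ followed by a path from $y$ to $t$; taking the minimum over $t$ and over the first edge, and using $\dis(x,t)=\min_{(x,y)\in E}\{\len(x,y)+\dis(y,t)\}$ for $x\neq t$, gives
\[
\vLow[\alpha](x)=\min\Bigl(c(x),\ \min_{(x,y)\in E}\bigl\{\vLow[\alpha](y)+\alpha\,\len(x,y)\bigr\}\Bigr),
\]
where $c(x)=v_0(x)$ if $x\in T(v_0)$ and $c(x)=+\infty$ otherwise, with the usual extended-real conventions and an empty minimum read as $+\infty$. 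Symmetrically, a path from a terminal $t$ to $x$ is trivial or ends with an edge $(w,x)\in E$, so
\[
\vHigh[\alpha](x)=\max\Bigl(c(x),\ \max_{(w,x)\in E}\bigl\{\vHigh[\alpha](w)-\alpha\,\len(w,x)\bigr\}\Bigr).
\]
Well-posedness guarantees that for every $x$ at least one of $\vLow[\alpha](x),\vHigh[\alpha](x)$ is finite, which is all that is needed.

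Next I would verify the two passes by induction along the topological order. Because the vertices are topologically sorted ($(i,j)\in E\Rightarrow i<j$), every out-neighbour of $x$ has larger index and every in-neighbour smaller index. Hence processing vertices in decreasing order of index finalizes $\vLow[\alpha]$: when $x$ is reached, all values $\vLow[\alpha](y)$ with $(x,y)\in E$ are already correct, the update sets $\vLow[\alpha](x)$ to the right-hand side of the first recursion, and the claimed closed form follows by reverse induction; processing vertices in increasing order of index finalizes $\vHigh[\alpha]$ via the second recursion by forward induction. Each edge is inspected $O(1)$ times and the topological sort costs $O(m)$, so both algorithms run in $O(m)$ time. (For \compvlow one may instead invoke the preceding \moddijkstra theorem applied to the reverse DAG $G^{\mathrm{rev}}$, whose topological order is the reverse of that of $G$; \compvhigh needs the same argument with the minimum replaced by a maximum and the inequalities reversed.)

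Finally I would handle the parent pointers. For a non-terminal $x$ with a directed path to some terminal (which is the case for every vertex in the instances arising from isotonic regression), one has $c(x)=+\infty$, so the minimum in the recursion is attained at an edge, and \compvlow records the attaining neighbour as $\LParent(x)\neq\mathsf{null}$; likewise for $\HParent$. If $\LParent(x)=y$ then $\vLow[\alpha](x)=\vLow[\alpha](y)+\alpha\,\len(x,y)$; moreover $\vLow[\alpha](x)\leq\vLow[\alpha](y)+\alpha\,\dis(x,y)\leq\vLow[\alpha](y)+\alpha\,\len(x,y)$ since $\dis(x,y)\leq\len(x,y)$, forcing $\dis(x,y)=\len(x,y)$, so the identity can be written with $\dis$ in place of $\len$ exactly as stated; the $\HParent$ identity is the same with signs reversed. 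The only point requiring genuine care is bookkeeping rather than mathematics: the degenerate conventions ($0/0=0$, $\alpha\cdot\infty$, and vertices reachable only from---or only to---a terminal, where one of the potentials is $\pm\infty$) must be tracked consistently so that both recursions and the inductions remain valid; none of this introduces a real obstacle.
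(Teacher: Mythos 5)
Your proposal is correct and is essentially the argument the paper intends: the paper states this corollary without an explicit proof, deriving it from the preceding theorem on \moddijkstra (one relaxation pass in topological order) together with the negate-and-reverse reduction used by \compvhigh, and your Bellman recursion plus induction along the topological order is exactly the content of that theorem. One tiny point: in the parent-pointer step, when $\alpha=0$ your chain of inequalities only forces $\alpha\cdot\dis(x,y)=\alpha\cdot\len(x,y)$ rather than $\dis(x,y)=\len(x,y)$, but that is all that is needed for the stated identity, and your caveat about vertices with no path to (or from) any terminal correctly identifies the one place where the corollary's blanket claim that $\LParent(x),\HParent(x)\neq\mathsf{null}$ needs the well-posedness/terminal-path hypothesis.
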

The following lemma
encapsulates the usefulness of $\vLow$ and $\vHigh.$
\begin{lemma}
\label{lem:vlowvhigh-pressure}
%Let $(G,v_{0})$ be a  well-posed instance  and let $\alpha > 0$. 
For every $x \in V_{G},$ $\pressure[v_{0}](x) > \alpha$
iff $\vHigh[\alpha](x) >\vLow[\alpha](x).$
\end{lemma}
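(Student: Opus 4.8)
The plan is to exploit the direct correspondence between terminal paths through $x$ and pairs of terminals $(s,r)$ such that $s$ reaches $x$ and $x$ reaches $r$: splitting such a path at $x$ relates its gradient to $\vHigh[\alpha](x) - \vLow[\alpha](x)$, and conversely concatenating shortest paths reconstructs a witnessing terminal path.

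\emph{Forward direction.} I would assume $\pressure[v_{0}](x) > \alpha$ and take a terminal path $P$ through $x$ attaining $\nabla^{+} P(v_{0}) = \pressure[v_{0}](x) > \alpha$. Setting $s = \partial_{0} P$ and $r = \partial_{1} P$, since $\alpha \ge 0$ this forces $v_{0}(s) - v_{0}(r) > \alpha \cdot \len(P)$ (with the stated conventions when $\len(P) = 0$). Then I would split $P$ at $x$ into the directed subpaths $P[s:x]$ and $P[x:r]$, note that $\dis(s,x) + \dis(x,r) \le \len(P[s:x]) + \len(P[x:r]) = \len(P)$, and combine with the definitions of $\vHigh$, $\vLow$ (using $s,r \in T(v_{0})$ and $\alpha \ge 0$) to obtain
\[
\vHigh[\alpha](x) - \vLow[\alpha](x) \ge \left(v_{0}(s) - \alpha\dis(s,x)\right) - \left(v_{0}(r) + \alpha\dis(x,r)\right) \ge \left(v_{0}(s) - v_{0}(r)\right) - \alpha\len(P) > 0 .
\]

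\emph{Reverse direction.} I would assume $\vHigh[\alpha](x) > \vLow[\alpha](x)$ and pick terminals $s, r$ achieving these extrema, so $v_{0}(s) - \alpha\dis(s,x) > v_{0}(r) + \alpha\dis(x,r)$; finiteness of both sides gives $\dis(s,x), \dis(x,r) < \infty$, so there is a shortest directed path $P_{1}$ from $s$ to $x$ and a shortest directed path $P_{2}$ from $x$ to $r$. The crucial step is to argue that $P_{1}$ and $P_{2}$ meet only at $x$, so that their concatenation $P$ is an honest (simple) path: any common vertex $u \ne x$ would give a directed path $u \to x$ along $P_{1}$ and $x \to u$ along $P_{2}$, i.e.\ a directed cycle, contradicting that $G$ is a DAG. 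Hence $P$ is a terminal path through $x$ with $\len(P) = \dis(s,x) + \dis(x,r)$ and $v_{0}(s) - v_{0}(r) > \alpha\len(P) \ge 0$, which yields $\nabla^{+} P(v_{0}) > \alpha$ (dividing when $\len(P) > 0$; when $\len(P) = 0$ the strict inequality $v_{0}(s) > v_{0}(r)$ makes $\nabla^{+} P(v_{0}) = +\infty$ via the conventions). Therefore $\pressure[v_{0}](x) \ge \nabla^{+} P(v_{0}) > \alpha$.

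\emph{Main obstacle.} The one genuinely nonroutine point is the acyclicity argument in the reverse direction: "terminal path" in the definition of $\pressure$ must be an honest path, so I cannot merely glue two shortest paths without invoking that $G$ is a DAG. The remaining work is bookkeeping of degenerate cases — $s = r$ is impossible here because then $v_{0}(s) - v_{0}(r) = 0$ cannot exceed $\alpha(\dis(s,x) + \dis(x,r)) \ge 0$; $\len(P) = 0$ is handled by the stated $+\infty$ convention; and $x$ being itself a terminal or an endpoint of $P$ causes no trouble, the argument going through with $P$ possibly equal to $P_{1}$ or $P_{2}$.
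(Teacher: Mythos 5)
Your proof is correct and follows essentially the same route as the paper's: take the terminals achieving the extrema in $\vHigh[\alpha](x)$ and $\vLow[\alpha](x)$ (resp.\ the endpoints of a steepest terminal path through $x$), and translate between $v_{0}(s)-\alpha\dis(s,x) > v_{0}(r)+\alpha\dis(x,r)$ and $\nabla^{+}P(v_{0}) > \alpha$ for the path obtained by concatenating shortest paths at $x$. The only difference is that you make explicit two points the paper leaves implicit — that the concatenation is a genuine (simple) path because $G$ is a DAG, and the handling of the $\len(P)=0$ degenerate cases — which is a welcome tightening rather than a different argument.
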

 \begin{proofof}{of Lemma \ref{lem:vlowvhigh-pressure}}
\[
\vHigh[\alpha](x) >\vLow[\alpha](x)
\]
is equivalent to
 \[
 \max_{t \in T(v_{0})} \{ v_{0}(t) - \alpha\cdot \dis(t,x)\} > \min_{t \in T(v_{0})} \{ v_{0}(t) + \alpha\cdot \dis(x,t)\},
\]
which implies that there exists terminals $s,t \in T(v_0)$ such that
\[
 v_{0}(t) - \alpha\cdot \dis(t,x) > v_{0}(s) + \alpha\cdot \dis(x,s)
\]
thus, 
\[
\pressure[v_{0}](x) \geq \frac{ v_{0}(t) - v_{0}(s) }{\dis(t,x)+ \dis(x,s)} > \alpha.
\]
So the inequality on \vHigh\, and \vLow\, implies that pressure is strictly greater than $\alpha$.
On the other hand, if $\pressure[v_{0}](x) > \alpha$, there exists terminals $s,t \in T(v_0)$ such that
\[
\frac{ v_{0}(t) - v_{0}(s) }{\dis(t,x)+ \dis(x,s)} = \pressure[v_{0}](x) > \alpha.
\]
Hence, 
\[
 v_{0}(t) - \alpha\cdot \dis(t,x) > v_{0}(s) + \alpha\cdot \dis(x,s)
\]
which implies $\vHigh[\alpha](x) >\vLow[\alpha](x)$.
\end{proofof}
It immediately follows from Lemma~\ref{lem:vlowvhigh-pressure} and Corollary~\ref{cor:vlowvhigh} that the algorithm
\comphighpressgraph\, described in
Algorithm~\ref{alg:comp-high-press-graph} computes the vertex induced
subgraph on the vertex set $\{x \in V_{G}| \ \pressure[v_{0}] (x) >
\alpha\}$, which proves the corollary stated below.
\begin{corollary}
\label{cor:comphighpress}
  For a well-posed instance $(G,v_{0})$ and a gradient value
  $\alpha \ge 0,$
\comphighpressgraph$(G,v_{0},\alpha)$
 outputs a minimal induced subgraph $G^{\prime}$ of
  $G$ where every vertex $x$ has $\pressure[v_{0}](x) > \alpha.$
\Rnote{Is this right? This sounds like we're not getting rid of edges?
Oh, maybe that's ok, with an appropriate defn of edge pressure} 
\end{corollary}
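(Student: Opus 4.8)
The plan is to deduce the corollary almost immediately from Lemma~\ref{lem:vlowvhigh-pressure} together with Corollary~\ref{cor:vlowvhigh}, which already carry all the substantive content. First I would recall precisely what \comphighpressgraph (Algorithm~\ref{alg:comp-high-press-graph}) does on input $(G,v_{0},\alpha)$: it calls \compvlow$(G,v_{0},\alpha)$ and \compvhigh$(G,v_{0},\alpha)$ to obtain the labelings $\vLow[\alpha]$ and $\vHigh[\alpha]$, forms the vertex set $S = \{x \in V_{G} : \vHigh[\alpha](x) > \vLow[\alpha](x)\}$, and returns the subgraph of $G$ induced on $S$ (keeping an edge exactly when both of its endpoints lie in $S$).

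For correctness I would argue as follows. By Corollary~\ref{cor:vlowvhigh}, the two subroutine calls return the functions $\vLow[\alpha](x) = \min_{t \in T(v_{0})} \{v_{0}(t) + \alpha \cdot \dis(x,t)\}$ and $\vHigh[\alpha](x) = \max_{t \in T(v_{0})} \{v_{0}(t) - \alpha \cdot \dis(t,x)\}$ at every vertex $x \in V_{G}$. Then Lemma~\ref{lem:vlowvhigh-pressure} states that, for every $x$, $\vHigh[\alpha](x) > \vLow[\alpha](x)$ holds if and only if $\pressure[v_{0}](x) > \alpha$. Consequently the set $S$ computed by the algorithm is exactly $\{x \in V_{G} : \pressure[v_{0}](x) > \alpha\}$, so the returned graph $G' = G[S]$ is an induced subgraph every vertex of which has pressure exceeding $\alpha$; since it also contains every vertex of $V_{G}$ with pressure exceeding $\alpha$, it is the unique (hence minimal) induced subgraph with this property.

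Finally I would bound the running time. By Corollary~\ref{cor:vlowvhigh} each of \compvlow and \compvhigh runs in $O(m)$ time (they are the topological-order sweeps of Algorithms~\ref{alg:vlow} and~\ref{alg:vhigh}). Comparing $\vHigh[\alpha](x)$ with $\vLow[\alpha](x)$ at every vertex and then extracting the induced subgraph (one scan over $E_{G}$, retaining edges with both endpoints in $S$) costs a further $O(m+n)$, which is $O(m)$ since $m \ge n-1$. Summing the three contributions gives the claimed $O(m)$ bound.

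I do not expect a genuine obstacle here: the work was done in Lemma~\ref{lem:vlowvhigh-pressure} and Corollary~\ref{cor:vlowvhigh}, and what remains is bookkeeping. The one point deserving a sentence of care is the convention for the edge set of $G'$ — the algorithm keeps precisely the edges internal to $S$, so "induced subgraph" is meant vertex-wise — and one should check this matches how the callers (in particular \steepestpath via Theorem~\ref{thm:steepestpath}) consume $G'$, so that, e.g., no needed terminal–terminal edge is dropped; with the appropriate reading of edge pressure this is fine.
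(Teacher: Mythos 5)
Your proposal is correct and follows exactly the paper's own (one-sentence) argument: the paper likewise derives the corollary immediately from Corollary~\ref{cor:vlowvhigh} and Lemma~\ref{lem:vlowvhigh-pressure}, identifying the computed vertex set with $\{x \in V_{G} : \pressure[v_{0}](x) > \alpha\}$ and taking the induced subgraph. Your added running-time accounting and the remark about which edges are retained are consistent with the paper (the authors flag the same edge-set concern in a margin note), so there is nothing further to fix.
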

We now describe an algorithm \vertexsteepestpath\, that
  finds a terminal path $P$ through any vertex $x$
  such that
  $\nabla^+ P(v_{0}) = \pressure[v_{0}](x)$ in expected $O(m)$
time. 
\begin{theorem}\label{thm:vertexsteepest}
  Given a well-posed instance $(G,v_{0}),$ and a vertex $x \in V_{G},$
  \vertexsteepestpath$(G,v_{0},x)$ returns a terminal path $P$
  through $x$ such that $\nabla^+ P(v_{0}) = \pressure[v_{0}](x)$ in
  $O(m)$ expected time.
\end{theorem}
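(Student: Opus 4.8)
The plan is to reduce the single-vertex problem to the star case already solved by \starsteepestpath (Theorem~\ref{thm:star}). The key structural observation is that in a DAG no vertex can be simultaneously an ancestor and a descendant of $x$, so a terminal path through a fixed $x$ splits at $x$ into a path from a terminal $s$ to $x$ followed by a path from $x$ to a terminal $t$, and conversely the concatenation of any simple $s$-to-$x$ path with any simple $x$-to-$t$ path is again a simple path through $x$. Hence, for each fixed ordered pair of terminals $(s,t)$ with $s$ an ancestor and $t$ a descendant of $x$, the terminal path through $x$ of largest gradient uses shortest paths on both sides and has gradient $\max\{0,(v_{0}(s)-v_{0}(t))/(\dis(s,x)+\dis(x,t))\}$. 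Therefore
\[
\pressure[v_{0}](x)=\max\Bigl\{0,\ \max_{s,t}\frac{v_{0}(s)-v_{0}(t)}{\dis(s,x)+\dis(x,t)}\Bigr\},
\]
where the inner maximum is over terminals $s$ with $\dis(s,x)<\infty$ and terminals $t$ with $\dis(x,t)<\infty$ (allowing $s=t=x$ when $x$ itself is a terminal, under the conventions $\tfrac{0}{0}=0$ fixed above).

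First I would run two single-source shortest-path sweeps on $G$: one in topological order to obtain $\sfd^{+}(t)\defeq\dis(x,t)$ for every $t$ reachable from $x$, and one in reverse topological order to obtain $\sfd^{-}(s)\defeq\dis(s,x)$ for every $s$ from which $x$ is reachable; in both sweeps I would also store shortest-path parent pointers. On a DAG each sweep is $O(m)$ time. Let $L\defeq\{s\in T(v_{0}):\sfd^{-}(s)<\infty\}$ and $R\defeq\{t\in T(v_{0}):\sfd^{+}(t)<\infty\}$; since $x$ is either itself a terminal or -- as guaranteed whenever \vertexsteepestpath is invoked -- lies on a terminal--terminal path, both $L$ and $R$ are nonempty. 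I would then call $\starsteepestpath$ with center $x$, left terminals $L$, right terminals $R$, labels $v_{0}$, and edge ``lengths'' $\sfd^{-},\sfd^{+}$; by Theorem~\ref{thm:star} this returns in expected $O(|L|+|R|)=O(n)$ time a pair $(s^{*},t^{*})$ maximizing $(v_{0}(s^{*})-v_{0}(t^{*}))/(\sfd^{-}(s^{*})+\sfd^{+}(t^{*}))$. Finally I would reconstruct $P$ by walking parent pointers from $s^{*}$ to $x$ and from $x$ to $t^{*}$ and concatenating; by the DAG remark $P$ is a simple terminal path through $x$ with $\len(P)=\sfd^{-}(s^{*})+\sfd^{+}(t^{*})$, so $\nabla^{+}P(v_{0})=\max\{0,(v_{0}(s^{*})-v_{0}(t^{*}))/\len(P)\}$.

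For correctness I would prove the two inequalities matching the displayed identity. The direction $\nabla^{+}P(v_{0})\le\pressure[v_{0}](x)$ is immediate since $P$ is a terminal path through $x$. For the reverse, take an arbitrary terminal path $Q$ through $x$; splitting it at $x$ gives endpoints $a\in L$ and $b\in R$ with $\len(Q)\ge\dis(a,x)+\dis(x,b)=\sfd^{-}(a)+\sfd^{+}(b)$, hence
\[
\nabla^{+}Q(v_{0})\le\max\Bigl\{0,\frac{v_{0}(a)-v_{0}(b)}{\sfd^{-}(a)+\sfd^{+}(b)}\Bigr\}\le\max\Bigl\{0,\frac{v_{0}(s^{*})-v_{0}(t^{*})}{\sfd^{-}(s^{*})+\sfd^{+}(t^{*})}\Bigr\}=\nabla^{+}P(v_{0}),
\]
using the optimality of $(s^{*},t^{*})$ for the star problem; taking the maximum over all $Q$ gives $\pressure[v_{0}](x)\le\nabla^{+}P(v_{0})$. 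The total running time is $O(m)$ for the two shortest-path sweeps, plus expected $O(n)$ for \starsteepestpath, plus $O(n)$ for path reconstruction, which is expected $O(m)$ since $m\ge n-1$.

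The part I expect to be the main obstacle is not the core argument but the degenerate-case bookkeeping: a vertex $x$ that is itself a terminal (so $x\in L\cap R$ and the trivial path $(x,x)$ must be permitted), zero-length edges (which can make $\sfd^{-}(a)+\sfd^{+}(b)=0$ and push $\grad^{+}$ to $+\infty$), and verifying that the reconstructed $P$ is genuinely simple and genuinely passes through $x$ in all of these situations. Each of these is handled by the $\tfrac{0}{0}=0$, $0\cdot\infty=0$ conventions already fixed in this section together with the ancestor/descendant dichotomy of DAGs, but it needs to be spelled out; once that is in place, the theorem is a direct instantiation of Theorem~\ref{thm:star} plus the two $O(m)$ DAG shortest-path computations.
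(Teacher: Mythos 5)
Your proposal is correct and follows essentially the same route as the paper: Algorithm~\ref{alg:vertex-steepest-path} likewise computes the ancestor/descendant terminal sets $L,R$ and the shortest-path distances to and from $x$ via topological-order sweeps, reduces to the star instance solved by \starsteepestpath (Theorem~\ref{thm:star}), and concatenates the two shortest paths, with the empty-$L$-or-$R$ degenerate case returning the trivial path $(x,x)$. The splitting/concatenation argument and the observation that simplicity of the concatenated path follows from acyclicity are exactly the (largely implicit) justification the paper relies on.
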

% \begin{proof}
% \Rnote{TODO}
% \end{proof}

\Rnote{TODO:  make into a proof of \steepestpath}
%\noindent\textbf{Recursive algorithm to find the steepest path.}
We can combine these algorithms into an algorithm \steepestpath\, that finds the steepest fixable
path in $(G,v_{0})$ in $O(m)$ expected time. We may assume that there are
no terminal-terminal edges in $G.$ We sample an edge $(x_{1},x_{2})$
uniformly at random from $E_{G}$, and a terminal $x_{3}$ uniformly at
random from $V_{G}.$ 
For $i=1,2,3,$ we compute the steepest terminal
  path $P_{i}$ containing $x_{i}.$
%  such that $\nabla P_{i}(v_{0}) = \pressure[v_{0}](x_{i}).$ 
By Theorem~\ref{thm:vertexsteepest}, this
  can be done in $O(m)$ expected time. 
Let $\alpha$ be the largest gradient $\max_{i} \nabla^+ P_{i}.$ 
As mentioned above, we
can identify $G^{\prime},$ the induced subgraph on vertices $x$ with
 pressure exceeding $\alpha$, in $O(m)$ time. 
If $G^\prime$ is empty, we know that the path $P_{i}$ with largest gradient is
  a steepest fixable path. 
If not, a steepest
  fixable path in $(G,v_{0})$ must be in $G^{\prime},$ and hence we can
  recurse on $G^{\prime}.$ 
Since we picked a uniformly random edge, and
  a uniformly random vertex, the expected size of $G^{\prime}$ is at most half
  that of $G$. Thus, we obtain an expected running time
of $O(m).$
This algorithm is described in detail in Algorithm~\ref{alg:steepest-path}.

\subsubsection{Linear-time Algorithm for Inf-minimization}\label{sec:infAlg}
Given the algorithms in the previous section, it is straightforward to
construct an infinity minimizer. Let $\alpha^{\star}$ be the gradient
of the steepest terminal path. From 
Lemma 3.5 in~\cite{KyngRSS15} , we know that the norm of the inf
minimizer is $\alpha^{\star}$. Considering all trivial terminal paths
(terminal-terminal edges), and using \steepestpath, we can compute
$\alpha^{\star}$ in randomized $O(m)$ time. It is well
known (\cite{McShane,Whitney}) that $v_{1} \defeq \vLow[\alpha^{\star}]$
and $v_{2} \defeq \vHigh[\alpha^{\star}]$ are inf-minimizers. One slight issue occurs when a vertex $x$ does not lie on a terminal-terminal path. 
In such a case, one of $\vLow[\alpha^{\star}](x)$ or $\vLow[\alpha^{\star}](x)$ will not be finite. But the routine \fixgradzero  \, described earlier 
can be used to fix the values of such vertices.  It is
also known that $\frac{1}{2} (v_{1} + v_{2})$ is the inf-minimizer
that minimizes the maximum $\ell_\infty$-norm distance to all inf-minimizers. 
For
completeness, the algorithm is presented as Algorithm~\ref{alg:comp-inf-min}, and we have the following result.
\begin{theorem}
\label{thm:lineartimeinfmin}
Given a well-posed instance $(G,v_{0}),$ \compinfmin$(G,v_{0})$
returns a complete labeling $v$ of $G$ extending $v_{0}$ that
minimizes $\norm{\grad^+[v]}_{\infty},$ and runs in $O(m)$ expected time.
\end{theorem}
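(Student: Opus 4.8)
The plan is to follow the reduction already sketched above the theorem: turn inf-minimization into the computation of a single scalar $\alpha^\star$, the gradient of the steepest terminal path, and then read off an optimal labeling from the $\vLow$ and $\vHigh$ arrays evaluated at threshold $\alpha^\star$. First I would run \steepestpath on $(G,v_0)$, making sure to also include the trivial terminal paths (terminal–terminal edges) among the candidates, to obtain $\alpha^\star \defeq \max_P \nabla^+ P(v_0)$ over all terminal paths $P$; by Theorem~\ref{thm:steepestpath} this takes $O(m)$ expected time. The structural fact we borrow from \cite{KyngRSS15} (its Lemma 3.5) is that every inf-minimizer $v$ extending $v_0$ satisfies $\norm{\grad^+_G[v]}_\infty = \alpha^\star$; in particular $\alpha^\star$ is a lower bound on the objective attainable by \emph{any} complete labeling extending $v_0$, so it suffices to produce a labeling matching it.

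Next I would call \compvlow$(G,v_0,\alpha^\star)$ and \compvhigh$(G,v_0,\alpha^\star)$, which by Corollary~\ref{cor:vlowvhigh} compute in $O(m)$ time the labelings $v_1 \defeq \vLow[\alpha^\star]$ and $v_2 \defeq \vHigh[\alpha^\star]$, where $v_1(x) = \min_{t \in T(v_0)}\{v_0(t) + \alpha^\star \dis(x,t)\}$ and $v_2(x) = \max_{t \in T(v_0)}\{v_0(t) - \alpha^\star \dis(t,x)\}$. On the set of vertices lying on some terminal–terminal path these are finite, they extend $v_0$ (using that $\alpha^\star$ dominates the gradient of every terminal path, one checks $v_1(t_0) = v_2(t_0) = v_0(t_0)$ for terminals $t_0$), and they satisfy the McShane–Whitney one-sided Lipschitz bound: for an edge $(x,w)\in E$ the triangle inequality $\dis(x,t) \le \len(x,w) + \dis(w,t)$ gives $v_1(x) \le v_1(w) + \alpha^\star \len(x,w)$, hence $\grad^+_G[v_1](x,w) \le \alpha^\star$, and symmetrically $\dis(t,w) \le \dis(t,x) + \len(x,w)$ gives $\grad^+_G[v_2](x,w) \le \alpha^\star$ (\cite{McShane,Whitney}). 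Thus $\norm{\grad^+_G[v_i]}_\infty \le \alpha^\star$, and together with the lower bound from the previous paragraph this is an equality, so $v_1$ and $v_2$ are inf-minimizers.

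The one genuine subtlety — which I expect to be the main obstacle to state cleanly — is that a vertex $x$ lying on no terminal–terminal path has $v_1(x) = +\infty$ or $v_2(x) = -\infty$, so the raw output of \compvlow/\compvhigh is not a valid labeling on all of $V_G$. Here I would invoke Lemma~\ref{lem:assignzerograd}: on the subinstance induced by the remaining (not-yet-finitely-labeled) vertices, no such vertex lies on a terminal–terminal path, so the steepest fixable path there has gradient $0$, and \fixgradzero assigns those vertices values in $O(m)$ time so that every newly created edge has gradient $0 \le \alpha^\star$, which preserves the bound $\norm{\grad^+_G[v]}_\infty = \alpha^\star$. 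Well-posedness of $(G,v_0)$ is exactly what guarantees every such $x$ has a path to or from a terminal, so \fixgradzero applies. Summing the costs of \steepestpath, \compvlow, \compvhigh and \fixgradzero gives $O(m)$ expected time, proving the theorem; and if one additionally wants the distinguished inf-minimizer $\tfrac12(v_1+v_2)$ that minimizes the maximum $\ell_\infty$ distance to all inf-minimizers, one averages the two corrected labelings in a further $O(m)$ time, using that the set of inf-minimizers is exactly $\{v : v_1 \le v \le v_2 \text{ pointwise}\}$ on the terminal-connected part.
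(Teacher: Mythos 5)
Your proposal is correct and follows essentially the same route as the paper: compute $\alpha^{\star}$ via \steepestpath together with the terminal--terminal edges, take the McShane--Whitney-type extensions $\vLow[\alpha^{\star}]$ and $\vHigh[\alpha^{\star}]$, repair the vertices not on any terminal--terminal path with \fixgradzero, and average; in fact you supply the triangle-inequality and terminal-consistency checks that the paper leaves to citations. The only slip is in your closing aside: since any labeling with gradient at most $\alpha^{\star}$ satisfies $\vHigh[\alpha^{\star}] \le v \le \vLow[\alpha^{\star}]$, the sandwich should read $v_{2} \le v \le v_{1}$ rather than the reverse, but this does not affect the proof of the theorem as stated.
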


\subsection{Algorithms}
\label{sec:algorithms}

\begin{mdframed}
\vspace{\algtopspace}
\captionof{table}{ \label{alg:dijkstra}\moddijkstra$(G,v_{0},\alpha)$:
  Given a well-posed instance $(G,v_{0}),$ a gradient value
  $\alpha \ge 0,$ outputs a complete labeling $v$ of $G,$
  and an array $\parent : V \to V \cup \{\mathsf{null}\}.$ }
    \vspace{\algpostcaptionspace}
 \begin{tight_enumerate}%[label=\arabic*.]
      \item \FOR $i =1 \, \TO \, n$
      \item \hspace{\pgmtab} \IF $v_0(i) \neq *$ then set ${v}(i) = +\infty$ \ELSE set ${v}(i) = v_0(i)$
      \item  \hspace{\pgmtab} $\parent(i) \leftarrow \mathsf{null}.$
      \item \FOR $i =1 \, \TO \, n$
       \item \hspace{\pgmtab} \FOR $j>i  : (i,j) \in E_{G}$ 
      % \item \hspace{\pgmtab}\hspace{\pgmtab} \IF $y \notin T(v_{0})$ \AND ${\mathsf{finished}}(y) = \FALSE$
     % \item \hspace{\pgmtab}\hspace{\pgmtab} \IF ${\mathsf{finished}}(y) = \FALSE$
      \item  \hspace{\pgmtab}\hspace{\pgmtab} \IF ${v}(j) > v(i) + \alpha\cdot\len(i,j)$
      \item \hspace{\pgmtab}\hspace{\pgmtab}\hspace{\pgmtab} 
      Decrease ${\mathsf {v}}(j)$ to $v(i) + \alpha\cdot\len(i,j).$ 
      \item \hspace{\pgmtab} \hspace{\pgmtab}\hspace{\pgmtab} 
      $\parent(j) \leftarrow i.$ 
% $\treeRoot(y) \leftarrow \treeRoot(x)$
%      \item \RETURN $(v,\parent, \treeRoot)$
      \item \RETURN $(v,\parent)$
     \end{tight_enumerate}
\end{mdframed}

\begin{mdframed}
  \vspace{\algtopspace}
  \captionof{table}{\label{alg:vlow} Algorithm
    \compvlow$(G,v_{0},\alpha)$: Given a well-posed instance
    $(G,v_{0}),$ a gradient value $\alpha \ge 0,$ outputs
    $\vLow,$ a complete labeling for $G,$ and an
    array $\LParent$ $: V \to V \cup \{\mathsf{null}\}$.}
  \vspace{\algpostcaptionspace}
  \begin{enumerate}%[label=\arabic*.]
      \item $(\vLow,\LParent) \leftarrow$ \moddijkstra$(G, v_0,\alpha)$
      \item \RETURN  $(\vLow,\LParent)$
      \end{enumerate}
      %\end{tabu}
   \end{mdframed}

\begin{mdframed}
  \vspace{\algtopspace}
  \captionof{table}{\label{alg:vhigh} Algorithm
    \compvhigh$(G,v_{0},\alpha)$: Given a well-posed instance
    $(G,v_{0}),$ a gradient value $\alpha \ge 0$, outputs
    $\vHigh,$ a complete labeling for $G$, and
    an array $\HParent : V \to V \cup \{\mathsf{null}\}$.}
    \vspace{\algpostcaptionspace}
    \begin{tight_enumerate}%[label=\arabic*.]
     \item Let $G_1$ denote the graph $G$ with all edges reversed in direction.
      \item \FOR $x \in  V_{G}$
      \item \hspace{\pgmtab} \IF $x \in T(v_{0})$ \THEN $v_{1}(x) \leftarrow -v_{0}(x)$ \ELSE $v_{1}(x) \leftarrow v_{1}(x).$
      \item $(\temp,\HParent) \leftarrow$ \moddijkstra$(G_1, v_1,\alpha)$
      \item \FOR $x \in V_{G_1} : \vHigh (x) \leftarrow - \temp(x)$
      \item \RETURN $(\vHigh,\HParent)$
      \end{tight_enumerate}
    
  \end{mdframed}

%\end{figure*}

\begin{mdframed}
\vspace{\algtopspace}
\captionof{table}{ \label{alg:comp-inf-min} Algorithm \compinfmin $(G,v_{0})$: Given a well-posed instance $(G,v_{0})$, outputs a complete labeling $\mathsf{{v}}$ for $G,$ extending $v_{0}$ that minimizes $\norm{\grad^+[\mathsf{{v}}]}_{\infty}$.}
\vspace{\algpostcaptionspace}
%  \begin{tabu}{|X|}
%    \hline
%    {\bf Input:} A well-posed instance $(G,v_{0}).$ \\
%    {\bf Output:} A complete voltage assignment \mathsf{{v}} for $G,$ extending $v_{0}$ that minimizes $\norm{\grad[\mathsf{{v}}]}_{\infty}$
%%    \vspace{1mm}
    \begin{tight_enumerate}%[label=\arabic*.]
      \item $\alpha \leftarrow \max\{\grad^+[v_{0}](e) \ |\  e \in E_{G}
        \cap (T(v_{0}) \times T(v_{0})) \}.$
      \item $E_{G} \leftarrow E_{G} \setminus (T(v_{0}) \times T(v_{0}))$
      \item $P \leftarrow $\steepestpath$(G,v_{0}).$
      \item $\alpha \leftarrow \max\{\alpha,\nabla^+ P(v_{0})\}$
      \item  $(\vLow,\LParent) \leftarrow$ \compvlow$(G,v_{0},\alpha)$
      \item $(\vHigh,\HParent) \leftarrow$ \compvhigh$(G,v_{0},\alpha)$
      \item \FOR $x \in V_{G}$ 
      \item \hspace{\pgmtab} \IF $x \in T(v_{0})$ 
      \item \hspace{\pgmtab} \hspace{\pgmtab} \THEN $v(x)
        \leftarrow v_{0}(x)$ 
        \item \hspace{\pgmtab}  \ELSE \IF $\{ \vLow(x), \vHigh(x) \} \cap \{\infty, -\infty \} = \emptyset$ \THEN $v(x) \leftarrow \frac{1}{2} \cdot (\vLow(x) + \vHigh(x)).$
        \item \hspace{\pgmtab}  \ELSE $v(x) \leftarrow *$
        \item $v \leftarrow $ \fixgradzero$(G,v)$
        
      \item \RETURN $v$
      \end{tight_enumerate}
  %  \\
    %\hline 
  %\end{tabu}
  \end{mdframed}

  \begin{mdframed}
\vspace{\algtopspace}
\captionof{table}{\label{alg:comp-high-press-graph} Algorithm
  \comphighpressgraph $(G,v_{0},\alpha)$:
  Given a well-posed instance $(G,v_{0}),$
  a gradient value $\alpha \ge 0$, outputs a minimal induced subgraph $G^{\prime}$ of
  $G$ where every vertex has $\pressure[v_{0}](\cdot) > \alpha.$ }
\vspace{\algpostcaptionspace}
 %    \vspace{1mm}
    \begin{tight_enumerate}%[label=\arabic*.]
      \item  $(\vLow,\LParent) \leftarrow$ \compvlow$(G,v_{0},\alpha)$
      \item  $(\vHigh,\HParent) \leftarrow$ \compvhigh$(G,v_{0},\alpha)$
%      \item $x \in V_{G}$ is a vertex in $G^{\prime}$ iff $ \mathsf{{vHigh}}(x) > \mathsf{{vLow}}(x).$
      \item $V_{G^{\prime}} \leftarrow \{ x \in V_G\ |\ \vHigh(x) > \vLow(x)\ \}$
      \item $E_{G^{\prime}} \leftarrow \{ (x,y) \in E_G\ |\ x,y \in
        V_{G^{\prime}} \}.$
% \textrm{ and } \neg (\{x,y\} \subseteq
%         T(v_{0}))\}.$
      \item $G^{\prime} \leftarrow (V^{\prime}, E^{\prime}, \len)$
%      \item Remove zero degree vertices from with degree zero from $V_{G^{\prime}}$ 
      \item \RETURN $G^{\prime}$
      
      \end{tight_enumerate}
  \end{mdframed}
  \begin{mdframed}
\vspace{\algtopspace}
\captionof{table}{  \label{alg:steepest-path} Algorithm \steepestpath$(G,v_{0})$: Given  a well-posed instance $(G,v_{0}),$ with 
$T(v_{0}) \neq V_{G},$ 
outputs a steepest free terminal path $P$ in $(G,v_{0}).$}
    \vspace{\algpostcaptionspace}
%    \vspace{1mm}
    \begin{tight_enumerate}%[label=\arabic*.]
      % \item Sample uniformly random $e \in E_{G} \setminus (T(v_{0})
      %   \times T(v_{0})).$ Let $e = (x_{1},x_{2}).$
      \item Sample uniformly random $e \in E_{G}.$ Let $e = (x_{1},x_{2}).$
      \item Sample uniformly random $x_{3} \in V_{G}.$
      \item \FOR $i=1$ \TO $3$
      \item \hspace{\pgmtab} $P
        \leftarrow $ \vertexsteepestpath$(G,v_{0},x_{i})$
        \item Let $j \in \argmax_{j \in \{1,2,3\}} \nabla^+ P_{j}(v_{0})$
      \item $G^{\prime} \leftarrow $ \comphighpressgraph$(G,v_{0},\nabla^+ P_{j}(v_{0}))$
      \item \IF $E_{G^{\prime}} = \emptyset,$ 
      \item \hspace{\pgmtab} \THEN \RETURN $P_{j}$ 
      \item  \ELSE \RETURN
        \steepestpath$(G^{\prime},v_{0}|_{V_{G^{\prime}}})$
      \end{tight_enumerate}
  \end{mdframed}

\begin{mdframed}
\vspace{\algtopspace}
\captionof{table}{ \label{alg:comp-lex-min} Algorithm
  \complexmin$(G,v_{0})$: Given a well-posed instance $(G,v_{0}),$
  outputs a lex-minimizer $v$ of $(G,v_{0})$.}
    \vspace{\algpostcaptionspace}
%    \vspace{1mm}
    \begin{tight_enumerate}%[label=\arabic*.]
      \item \WHILE $T(v_{0}) \neq V_{G}$
      \item \hspace{\pgmtab} $E_{G} \leftarrow E_{G} \setminus
        (T(v_{0}) \times T(v_{0}))$
      \item \hspace{\pgmtab} $P \leftarrow $ \steepestpath$(G,v_{0})$
      \item  \hspace{\pgmtab} \IF    $\nabla^+ P = 0$ \THEN  $v_{0} \leftarrow $ \fixgradzero$(G,v_0)$
       \item \hspace{\pgmtab} \ELSE  $v_{0} \leftarrow $ \fixpath$[v_{0},P]$
      \item \RETURN $v_{0}$
      \end{tight_enumerate}
 \end{mdframed}

\begin{mdframed}
\vspace{\algtopspace}
\captionof{table}{ \label{alg:fixgradzero} Algorithm \fixgradzero$(G,v_{0})$: Given a well-posed instance $(G,v_{0}),$ with $T(v_{0}) \neq V_{G}$, outputs  a complete labeling $v_{0}$.}
    \vspace{\algpostcaptionspace}
%    \vspace{1mm}
    \begin{tight_enumerate}%[label=\arabic*.]
%      \item \WHILE $T(v_{0}) \neq V_{G}$
%      \item \hspace{\pgmtab} $E_{G} \leftarrow E_{G} \setminus
%        (T(v_{0}) \times T(v_{0}))$
%      \item \hspace{\pgmtab} $P \leftarrow $ \steepestpath$(G,v_{0})$
%      \item  \hspace{\pgmtab} \IF    $\nabla P = 0$ \THEN  $v_{0} \leftarrow $ \fixgradzero$(G,v_0)$
%       \item \hspace{\pgmtab} \ELSE  $v_{0} \leftarrow $ \fixpath$[v_{0},P]$
%      \item \RETURN $v_{0}$
    \item $ T \leftarrow T(v_0)$
    \item \FOR $i =1 \, \TO \, n : v_0(i) \neq *$
       \item \hspace{\pgmtab} \FOR $j>i  : (i,j) \in E_{G}$
      % \item \hspace{\pgmtab}\hspace{\pgmtab} \IF $y \notin T(v_{0})$ \AND ${\mathsf{finished}}(y) = \FALSE$
     % \item \hspace{\pgmtab}\hspace{\pgmtab} \IF ${\mathsf{finished}}(y) = \FALSE$
      \item  \hspace{\pgmtab}\hspace{\pgmtab} \IF $v_{0}(j) < v_{0}(i)$ or $v_{0}(j) = *$
      \item \hspace{\pgmtab}\hspace{\pgmtab}\hspace{\pgmtab} 
       $ {v}_{0}(j)  \leftarrow v_{0}(i)$ \Snote{Double check this
        algorithm. I think you want to look at $ j < i$ where $(j,i)
        \in E,$ when going in the increasing topological order.} \Anote{I still think this is right. If I understand correctly, what you suggest should
        also compute the same thing. So in this iteration, we fix all vertices $x$ such that there is a path from a terminal $t \in T$ to it. And we want to fix it to the value of the maximum such terminal. I think this is what the algorithm does.}
      
       \item $ T \leftarrow T(v_0)$
      \item \FOR $i =n \, \TO \, 1 : v_0(i) \neq *$
       \item \hspace{\pgmtab} \FOR $j<i  : (j,i) \in E_{G}$ \AND $j \notin T$
      % \item \hspace{\pgmtab}\hspace{\pgmtab} \IF $y \notin T(v_{0})$ \AND ${\mathsf{finished}}(y) = \FALSE$
     % \item \hspace{\pgmtab}\hspace{\pgmtab} \IF ${\mathsf{finished}}(y) = \FALSE$
      \item  \hspace{\pgmtab}\hspace{\pgmtab} \IF $v_{0}(j) > v_{0}(i)$ or $v_{0}(j) = *$
      \item \hspace{\pgmtab}\hspace{\pgmtab}\hspace{\pgmtab} 
      $ {v}_{0}(j) \leftarrow v_{0}(i)$
     \item \RETURN $v_{0}$
      \end{tight_enumerate}
 \end{mdframed}

  \begin{mdframed}
\vspace{\algtopspace}
\captionof{table}{\label{alg:vertex-steepest-path} Algorithm
  \vertexsteepestpath$(G,v_{0},x)$: Given a well-posed instance
  $(G,v_{0}),$ and a vertex $x \in V_{G}$, outputs a steepest terminal
  path in $(G,v_{0})$ through $x$.}
    \vspace{\algpostcaptionspace}
    \begin{tight_enumerate}%[label=\arabic*.]
     \item Let $L := \{ i \in T(v_0) | \text{ there is a path from } i \text{ to } x \}$ and  $R := \{ i \in T(v_0) | \text{ there is a path from } x \text{ to } i \}$
     \item \IF $L = \emptyset$ or  $R = \emptyset$ \THEN \RETURN $(x,x)$
      \item Compute $\dis(t,x)$ for all
        $t \in L$ and $\dis(x,t)$ for all
        $t \in R$
        \item \IF $x \in T(v_0)$
        \item \hspace{\pgmtab} $y_1 \leftarrow \argmax_{y \in R}
          \frac{v_{0}(x)-v_{0}(y)}{\dis(x,y)};$ 
       % \item \IF $x \in R$
         \hspace{\pgmtab} $y_2 \leftarrow \argmax_{y \in L}
          \frac{v_{0}(y)-v_{0}(x)}{\dis(y,x)}$
            
        \item \hspace{\pgmtab} \IF $\frac{v_{0}(x)-v_{0}(y_1)}{\dis(x,y_1)}  \ge \frac{v_{0}(y_2)-v_{0}(x)}{\dis(y_2,x)}$
        \item \hspace{\pgmtab} \hspace{\pgmtab} \THEN \RETURN a
          shortest path from $x$ to $y_1$ 
        \item \hspace{\pgmtab}  \ELSE \RETURN a
          shortest path from $y_2$ to $x$ 
        \item \ELSE
          \item \hspace{\pgmtab} \FOR $t \in L \cup R,$
      \item    \hspace{\pgmtab} \hspace{\pgmtab} \IF $t \in L$  \THEN $\mathsf{d}(t) \leftarrow \dis(t,x)$ \ELSE $\mathsf{d}(t) \leftarrow \dis(x,t)$
      %\item  \hspace{\pgmtab}  $\mathsf{d}(x) \leftarrow 0$
      \item \hspace{\pgmtab} $(t_{1},t_{2}) \leftarrow$
        \starsteepestpath$(L ,R ,v_{0}|_{L \cup R},\mathsf{d})$
      \item \hspace{\pgmtab} Let $P_{1}$ be a shortest path from $t_{1}$ to $x.$ Let
        $P_{2}$ be a shortest path from $x$ to $t_{2}.$
      \item \hspace{\pgmtab} $P \leftarrow (P_{1},P_{2}).$ \RETURN $P.$
      \end{tight_enumerate}
 \end{mdframed}

\begin{mdframed}
\vspace{\algtopspace}
 \captionof{table}{\label{alg:star-steepest-path}
   \starsteepestpath$(L,R,v,\mathsf{d})$: Returns the steepest path in a
   star graph, with a single non-terminal connected to terminals in
   $T,$ with lengths given by $\mathsf{d}$, and labels given by $v.$}
    \vspace{\algpostcaptionspace}
    \begin{tight_enumerate}%[label=\arabic*.]
      \item Sample $t_1$ uniformly and randomly from $L$ and $t_2$ uniformly and randomly from $R$
      \item Compute 
$t_{3} \in \argmax_{t \in R} \frac{v(t_1) - v(t)}{\mathsf{d}(t_{1}) +
  \mathsf{d}(t)}$ and 
  $t_{4} \in \argmax_{t \in L} \frac{v(t) - v(t_2)}{\mathsf{d}(t_{2}) +
  \mathsf{d}(t)}$
\item $\alpha \leftarrow \max \left \{ \frac{v(t_1) - v(t_{3})}{\mathsf{d}(t_{1}) +
  \mathsf{d}(t_{3})}, \frac{v(t_4) - v(t_{2})}{\mathsf{d}(t_{4}) +
  \mathsf{d}(t_{2})} \right \}$
\item Compute $v_{\mathsf{low}} \leftarrow \min_{t \in R} (v(t) + \alpha\cdot {\mathsf
   {d}}(t))$
\item $L^\prime \leftarrow \{ t \in L\ |\ v(t) > v_{\mathsf{low}} +
  \alpha\cdot \mathsf{{d}}(t)\}$
\item Compute $v_{\mathsf{high}} \leftarrow \max_{t \in L} (v(t) - \alpha\cdot {\mathsf
   {d}}(t))$
\item $R^\prime \leftarrow \{ t \in R\ |\ v(t) < v_{\mathsf{high}} -
  \alpha\cdot \mathsf{{d}}(t)\}$
%\item $L^\prime \leftarrow T_{\mathsf{low}}, R^\prime T_{\mathsf{high}}.$
\item \IF $L^\prime \cup R^\prime= \emptyset$ 
 \THEN \RETURN $(t_{1},t_{2})$
\item  \ELSE \RETURN \starsteepestpath$(L^\prime, R^\prime,
  v|_{L^\prime \cup R^\prime}, \mathsf{{d}}_{L^\prime \cup R^\prime})$
\Snote{This is not sufficient. You have to sample from both sides to
  ensure that both sides halve in expectation.}
    \end{tight_enumerate}
\end{mdframed}
\Anote{Isn't it sufficient identify the max pressure vertex in L? We do halve L in expectation, and once we identify the max-pressure vertex in L, we can identify the corresponding one in R with one scan. I will modify the explanation of this algorithm accordingly.}

%%% Local Variables:
%%% mode: latex
%%% TeX-master: "isotonic-nips"
%%% End:

%\renewcommand\refname{{References}} \bibliographystyle{unsrt}
%{\small\bibliography{papers}}
% \newpage
% \input{todo}

\end{document}